\definecolor{scarlet}{RGB}{190, 1, 25}
\definecolor{crimson}{RGB}{153, 0, 0}
\definecolor{waterblue}{RGB}{55, 120, 191}
\definecolor{tangerine}{RGB}{249, 115, 6}
\definecolor{grassgreen}{RGB}{77, 164, 9}
\renewcommand{\vec}[1]{\bm{\mathbf{#1}}}
\newcommand{\inv}[1]{\frac{1}{#1}}
 \DeclareMathOperator*{\argmax}{argmax} \newcommand{\img}{\mathrm{im}}
\newcommand{\scx}{\mathrm{SC}}
\newcommand{\diag}{\mathrm{diag}}
\newcommand{\pequal}{\mathrel{\phantom{=}}}
\newcommand{\rrr}{\mathbb R}
\newcommand{\dd}{\mathsf{d}}
\newcommand{\LL}{\vec{\mathcal L}}
\newcommand{\M}{{\mathcal M}}    
\newcommand{\EE}{\vec{\mathcal E}}
\newcommand{\cub}{\mathrm{CB}}
\newcommand{\difflap}{\mathsf{DiffL}}
\newcommand{\dattwoholes}{\texttt{PUNCTPLANE}}
\newcommand{\dattorus}{\texttt{TORUS}}
\newcommand{\datthreetorus}{\texttt{3-TORUS}}
\newcommand{\datgenustwo}{\texttt{GENUS-2}}
\newcommand{\datgenusfour}{\texttt{TORI-CONCAT}}
\newcommand{\datethanol}{\texttt{ETH}}
\newcommand{\datmda}{\texttt{MDA}}
\newcommand{\datrna}{\texttt{PANCREAS}}
\newcommand{\datbudd}{\texttt{3D-GRAPH}}
\newcommand{\datisland}{\texttt{ISLAND}}
\newcommand{\datretina}{\texttt{RETINA}}
\newcommand{\bit}{\begin{itemize}}
\newcommand{\eit}{\end{itemize}}
\newcommand{\benum}{\begin{enumerate}}
\newcommand{\eenum}{\end{enumerate}}
\newcommand{\beq}{\begin{equation}}
\newcommand{\eeq}{\end{equation}}
\newcommand{\paragraphbf}[1]{\textbf{#1\enskip}}
\theoremstyle{definition}
\newtheorem{definition}{Definition}
\newtheorem*{remark}{Remark}
\theoremstyle{plain}
\newtheorem{theorem}[definition]{Theorem}
\newtheorem{corollary}[definition]{Corollary}
\newtheorem{lemma}[definition]{Lemma}
\newtheorem{proposition}[definition]{Proposition}
\newtheorem{assumption}{Assumption}
\renewenvironment{proof}{\paragraph{Proof.}}{\hfill\qed}
\newenvironment{proofof}[1]{\paragraph{Proof of #1.}}{\hfill\qed}
\newcommand{\suppnumberprefix}{S}
\newcommand{\setupsupp}{
	\clearpage
	\newpage
	\appendix

	\renewcommand{\thefigure}{\suppnumberprefix\arabic{figure}}
	\setcounter{figure}{0}

	\renewcommand{\thetable}{\suppnumberprefix\arabic{table}}
	\setcounter{table}{0}

	\renewcommand{\theequation}{\suppnumberprefix\arabic{equation}}
	\setcounter{equation}{0}

	\renewcommand{\thealgocf}{\suppnumberprefix\arabic{algocf}}
	\setcounter{algocf}{0}

	\renewcommand{\thedefinition}{\suppnumberprefix\arabic{definition}}
	\setcounter{definition}{0}

	\renewcommand{\theassumption}{\suppnumberprefix\arabic{assumption}}
	\setcounter{assumption}{0}

	\section*{\centering Supplementary Material of \\ \titlename}

	\pagenumbering{roman}
	\setcounter{page}{1}
	\section*{Table of Contents}
	\startcontents[appendix]
	\printcontents[appendix]{l}{1}{\setcounter{tocdepth}{2}}
	\newpage

\pagenumbering{arabic}
	\setcounter{page}{1}
}
\newcommand{\suppname}{Supplement}
\newcommand{\usedbibstyle}{alpha}
\NewDocumentCommand{\CSet}{mm}
 {
  \prop_gput:Nnn \g_leal_tooltip_prop {#1} {#2}
 }
\NewDocumentCommand{\CGet}{m}
 {
  \prop_item:Nn \g_leal_tooltip_prop {#1}
 }
\newcommand{\figref}[1]{\CGet{#1}}
\newcommand{\vd}{{\vec d}}
\newcommand{\ve}{{\vec e}}
\newcommand{\vf}{{\vec f}}
\newcommand{\vv}{{\vec v}}
\newcommand{\vw}{{\vec w}}
\newcommand{\vx}{{\vec x}}
\newcommand{\vy}{{\vec y}}
\newcommand{\vz}{{\vec z}}
\newcommand{\vA}{{\vec A}}
\newcommand{\vB}{{\vec B}}
\newcommand{\vE}{{\vec E}}
\newcommand{\vI}{{\vec I}}
\newcommand{\vL}{{\vec L}}
\newcommand{\vM}{{\vec M}}
\newcommand{\vO}{{\vec O}}
\newcommand{\vW}{{\vec W}}
\newcommand{\vX}{{\vec X}}
\newcommand{\vY}{{\vec Y}}
\newcommand{\vZ}{{\vec Z}}
\newcommand{\cC}{{\mathcal C}}
\newcommand{\cH}{{\mathcal H}}
\newcommand{\cI}{{\mathcal I}}
\newcommand{\cP}{{\mathcal P}}
\newcommand{\cS}{{\mathcal S}}
\newcommand{\fC}{{\mathfrak C}}
\newcommand{\fD}{{\mathfrak D}}
\newcommand{\fN}{{\mathfrak N}}
\renewcommand{\usedbibstyle}{plainnat}
\def\titlename{The decomposition of the higher-order homology embedding constructed from the $k$-Laplacian}
\title{\titlename}
\author{
  Yu-Chia Chen \\
  Electrical \& Computer Engineering \\
  University of Washington \\
  Seattle, WA 98195 \\
  \texttt{yuchaz@uw.edu} \\
  \And
  Marina Meil\u{a} \\
  Department of Statistics \\
  University of Washington \\
  Seattle, WA 98195 \\
  \texttt{mmp2@uw.edu}
}
\begin{document}
\pagenumbering{arabic}

\maketitle

\begin{abstract}
The null space of the $k$-th order Laplacian $\LL_k$, known as the {\em
$k$-th homology vector space}, encodes the non-trivial topology of a
manifold or a network. Understanding the structure of the homology
embedding can thus disclose geometric or topological information from
the data. The study of the null space embedding of the graph Laplacian
$\LL_0$ has spurred new research and applications, such as spectral
clustering algorithms with theoretical guarantees and estimators of
the Stochastic Block Model. In this work, we investigate the geometry
of the $k$-th homology embedding and focus on cases reminiscent of
spectral clustering. Namely, we analyze the {\em connected sum} of
manifolds as a perturbation to the direct sum of their homology
embeddings. We propose an algorithm to factorize the homology
embedding into subspaces corresponding to a manifold's simplest
topological components. The proposed framework is applied to the {\em shortest
homologous loop detection} problem, a problem known to be NP-hard in
general. Our spectral loop detection algorithm scales
better than existing methods and is effective on diverse data such as
point clouds and images.
\end{abstract}

\section{Motivation}
The $k$-th {\em homology vector space} $\cH_k$  provides rich 
geometric information on manifolds/networks. 
For instance, the zeroth, the first, and the second homology
vector spaces identify the connected components, the loops, and the cavities
in the manifold, respectively. 
Topological Data Analysis (TDA)
\cite{WassermanL:18}, as well as other early works in this field, 
aims to extract the dimension of $\cH_k$ and has found wide use 
in  analyzing biological 
\cite{SaggarM.S.G+:18,Kovacev-NikolicV.B.N+:16},
human behavior \cite{AliS.B.S+:07,ZhuX:13}, or other complex systems \cite{WassermanL:18}.
Even though they easily generalize to $k\geq 1$, additional efforts 
are needed to extract topological features (e.g., instances of
loops)
besides ranks due to the combinatorial complexity of the structures 
that support them. 

Spectral methods based on $k$-Laplacians ($\LL_k$), by contrast,
investigate $\cH_k$ in a linear algebraic manner; abundant
geometric information can be extracted from the {\em homology embedding}
$\vY$ (the null space eigenvectors of $\LL_k$) of $\cH_k$.
Analysis of the eigenfunctions (of $\cH_0$) 
\cite{MeilaM.S:01,NgAY.J.W+:02,SchiebingerG.W.Y+:15,CoifmanRR.L:06}
of the graph Laplacian $\LL_0$
is 
pivotal in providing guarantees for spectral clustering 
and community detection algorithms.
Recent advances in this field \cite{ChenYC.M.K+:21,BarbarossaS.S:20a,SchaubMT.B.H+:20}
extend the existing spectral algorithms based on $\LL_0$ to $k \geq 1$;
however, theoretical analysis in $\vY$ of $\cH_k$, unlike spectral clustering, is less developed, in spite of intriguing empirical results by \cite{EbliS.S:19}.
Here, we put these observations on a formal footing based on the
concepts of {\em connected sum} and {\em prime decomposition} of
manifolds (Section \ref{sec:background} and
\ref{sec:prob-formulation}).  We examine these operations through the
lens of the (subspace) perturbation to the homology embedding $\vY$ of
the discrete $k$-Laplacian $\LL_k$ on finite samples (Section
\ref{sec:main-perturbation-theorem}). This framework finds applications
in, i.e., identifying the {\em shortest homologous loops} (Section
\ref{sec:applications}).
Lastly, we
support our theoretical claims with numerous empirical results from
point clouds and images.

\section{Background in Hodge theory and topology}
\label{sec:background}
\paragraph{Simplicial and cubical complex.}
An {\em abstract complex} is a natural extension of a graph 
designed to capture higher-order relationships between its vertices.
A {\em simplicial $k$-complex} (used when the data are point
clouds or networks) is a tuple $\scx_k = (\Sigma_0,\cdots,\Sigma_k)$, 
with $\Sigma_\ell$ being a set of $\ell$ dimensional {\em simplices},
such that every {\em face} of a simplex
$\sigma\in\Sigma_\ell$ is in $\Sigma_{\ell-1}$ for $\ell \leq k$. 
As a side note, a graph $G = (V,E)$ is an $\scx_1$; and $\scx_2 = (V,E,T)$ commonly used in edge 
flow learning
\cite{ChenYC.M.K+:21,SchaubMT.B.H+:20} is obtained by adding a set of
3-cliques (triangles) $T$ of $G$.
This procedure extends to defining
$\Sigma_\ell$ as the set of all $\ell$-cliques of $G$, with the resulting
complex called a {\em clique complex} of the graph $G$.
This complex is also known as a {\em Vietoris-Rips (VR) complex}
if $G$ is the $\epsilon$-radius neighborhood graph used in the
manifold learning literature
\cite{TingD.H.J+:10,CoifmanRR.L:06,ChenYC.M.K+:21},   
The {\em cubical $k$-complex} $\cub_k = (K_0,\cdots,K_k)$ is a complex widely used with image data.  The difference between this complex and the
$\scx_k$ is that a $\cub_k$ is a collection of sets of $\ell$-cubes, for $\ell < k$.
Note that we write $\Sigma_0 = K_0 = V$ the vertex set and
$\Sigma_1 = K_1 = E$ the edge set. $\Sigma_2 = T$ and $K_2 = R$ are the triangle 
and rectangle set, respectively. 
Additionally, we define $n_\ell = |\Sigma_\ell|$ (or $=|K_\ell|$) to
be the cardinality of the $\ell$-dimensional cells and
let $n = n_0$ for simplicity.
For more information about building various complexes
on different datasets please refer to \citet{OtterN.P.T+:17}.

\paragraph{$k$-cochain.}
By choosing an orientation to every $k$-simplex $\sigma_{k,i}\in\Sigma_k$
(or $K_k$),
one can define a finite-dimensional vector space $\mathcal C_k$ ({\em $k$-cochain space}\footnote{We use {\em chain} and {\em cochain} interchangeably for simplicity, see \citet{LimLH:20} for the distinction between them.}). An element $\vec\omega_k = \sum_i\vec\omega_k(\sigma_{k,i})\sigma_{k,i}\in\mathcal C_k$ is called
a {\em $k$-cochain}; one can further express $\vec\omega_k$ as
$\vec{\omega}_k = (\omega_{k,1},\cdots,\omega_{k, n_k})^\top\in\mathbb R^{n_k}$
by identifying each $\sigma_{k,i}$ with the standard basis vector $\ve_i\in\rrr^{n_k}$. Functions on nodes and edge flows, for example,
are elements of $\mathcal C_0$ and $\mathcal C_1$,  respectively.

\paragraph{Boundary matrix.}
The $k$-th {\em boundary matrix} $\vB_k$ \cite{LimLH:20} maps a $k$-cochain of $k$-cells (simplices/cubes)
$\sigma_k$ to the $(k-1)$-cochain of its faces, i.e., 
$\vB_k: \mathcal C_k \to\mathcal C_{k-1}$.
$\vec{B}_k \in\{0,\pm 1\}^{n_{k-1}\times n_k}$ is a sparse binary matrix,
with the sign of the non-zero entries $\sigma_{k-1}, \sigma_k$
given by the orientation of $\sigma_k$ w.r.t. its face $\sigma_{k-1}$.
Hence, different $\scx$ or $\cub$ will induce different $\vB_k$.
For $k=1$ on either the $\scx$ or $\cub$, the boundary map is the
{\em graph incidence matrix}, i.e.,
$(\vB_1)_{[x], [x,y]} = 1$, $(\vB_1)_{[y], [x,y]}=-1$, and zero otherwise;
for $k=2$, each column of $\vec{B}_2$ contains
the orientation of a triangle/rectangle w.r.t. its edges.
Specifically, for an $\scx$,
$(\vB_2)_{[x,y], [x,y,z]} = (\vB_2)_{[x,y], [x,y,z]} = 1$, $(\vB_2)_{[x,z], [x,y,z]} = -1$, and $0$ otherwise;
for a $\cub$,
$(\vB_2)_{[x,y], [x,y,z,w]} = (\vB_2)_{[y,z], [x,y,z,w]} = (\vB_2)_{[z,w], [x,y,z,w]} = 1$, $(\vB_2)_{[x,w], [x,y,z,w]} = -1$, and $0$ otherwise.
Simplex $\sigma_{k+1}$ is a {\em coface} of $\sigma_k$ iff $\sigma_k$
is a face of $\sigma_{k+1}$; let ${\rm coface}(\sigma_k)$ be the set of
all cofaces of $\sigma_k$. The  $(k-1)$-th {\em coboundary matrix} $\vB_k^\top$ (adjoint of $\vB_k$) maps $\sigma_{k-1}$, as a $(k-1)$-cochain, to the $k$-cochain of ${\rm coface}(\sigma_{k-1})$.

\paragraph{$k$-Laplacian.}
Let $\vW_\ell$ be a diagonal non-negative {\em weight matrix}
of dimension $n_\ell$, with $[\vW_\ell]_{\sigma,\sigma}$ representing
the weight of the $\ell$-simplex/cube $\sigma$
and $\vw_\ell \gets \diag(\vW_\ell)$. The weighted 
$k$-Hodge Laplacian \cite{HorakD.J:13} is defined as
\begin{equation}
\label{eq:weighted-laplacian}
\LL_k =
\vA_{k}^\top\vA_{k} + \vA_{k+1}\vA_{k+1}^\top, \,\text{ where }\, \vA_\ell = \vW_{\ell-1}^{-1/2} \vB_\ell \vW_\ell^{1/2} \,\text{ for }\, \ell = k, k+1.
\end{equation}
The weights capture combinatorial or geometric information and must satisfy 
the consistency relation $\vw_\ell(\sigma_\ell)=\sum_{\sigma_{\ell+1}\in {\rm coface}(\sigma_\ell)}\vw_{\ell+1}(\sigma_{\ell+1})$ (in matrix form: $\vw_\ell = |\vB_{\ell+1}|\vw_{\ell+1}$) for $\ell = k, k-1$.
Hence $\vA_k$ can be seen as normalized boundary matrix. 
To determine the weight for the $(k+1)$-simplexes, one can selected
$\vw_{k+1}$ to be constant \cite{SchaubMT.B.H+:20}
or based on (a product of) pairwise distance kernel (for $k=0, 1$)
so that the large sample limit exists \cite{CoifmanRR.L:06,ChenYC.M.K+:21}.
The first and second terms of \eqref{eq:weighted-laplacian} are
called respectively the {\em down} ($\LL_k^{\rm down} = \vA_k^\top\vA_k$)
and {\em up} ($\LL_k^{\rm up} = \vA_{k+1}\vA_{k+1}^\top$) Laplacians.
For $k = 0$, the down component disappears and the resulting
$\LL_0$ is the {\em symmetric normalized} graph Laplacian
used in spectral clustering \cite{vonLuxburgU:07} and Laplacian Eigenmap
\cite{BelkinM.N:03}.

\paragraph{$k$-th homology vector space and embedding.}
The homology vector space $\cH_k$ is a subspace of $\cC_k$ (loop 
space) such that every $k$-cycle (expressed as 
a $k$-cochain) in $\cH_k$ is not the boundary of any $(k+1)$-cochain. 
In mathematical terms, $\cH_k \coloneqq \ker(\vA_k)/ \img(\vA_{k+1})$.
The rank of the subspace is called the $k$-th Betti number $\beta_k=\dim(\cH_k)$, which
counts the number of ``loops'' ({\em homology class}) in the $\scx$.
$\cH_k$ is equivalent to the null space of $\LL_k$ \cite{LimLH:20,SchaubMT.B.H+:20};
therefore, a basis of $\cH_k$ can be obtained by the eigenvectors 
$\vY = [\vy_1,\cdots,\vy_{\beta_1}] \in \rrr^{n_k\times\beta_k}$ of $\LL_k$
with eigenvalue $0$.  
The {\em homology embedding} maps a $k$-simplex $\sigma_k$ to
$\vY_{\sigma_k, :} = [\vy_1(\sigma_k),\cdots\vy_1(\sigma_k)]^\top\in\rrr^{\beta_k}$.
Note that the null space of $\LL_k$ is only identifiable up to a unitary
transformation; hence, the homology embedding might change with a different basis
$\vY$.

\paragraph{Continuous operators on manifolds.}
The $k$-cochains are the discrete analogues of {\em $k$-forms} 
\cite{WhitneyH:05}.
For $k=1$, the  following path integral \cite{WhitneyH:05} 
(along the geodesic $\vec\gamma(t)$ connecting $x$ and $y$) relates a
1-cochain $\vec\omega$ to a 1-form $\vv$ (vector field):
$\vec\omega([x,y]) = \int_x^y \vv(\vec\gamma(t))\vec\gamma'(t) \dd t$.
To estimate a vector field from $\vec\omega$, one can solve a least-squares
problem \cite{ChenYC.M.K+:21}, which is the inverse operation of the
path integral 
(e.g., the vector fields in Figure \ref{fig:genus-2-harmonic-flow} are estimated
from $\vY$). 
Similarly, one can define the {\em differential} $\dd$
and {\em codifferential} $\updelta$ operators 
which are analog to $\vB_{k+1}^\top$
and $\vB_k$, respectively.
The $k$-Laplacian operators, which act on $k$-forms, can be 
defined for manifolds too, i.e., by $\Delta_k = \dd_{k-1}\updelta_k + \updelta_{k+1}\dd_k$.
The homology group (the continuous version of $\cH_k$)
is  defined as the null of $\Delta_k$. Its elements are harmonic $k$-forms $\zeta_k$, computed by solving $\Delta_k\zeta_k =0$ with proper boundary conditions;
they represent the continuous version of the discrete homology basis $\vY$.

\paragraph{Connected sum and manifold (prime) decomposition.}
The connected sum \cite{LeeJM:13} of two $d$ dimensional
manifolds
$\M = \M_1\sharp\M_2$ is built from removing
two $d$ dimensional ``disks'' from each manifold $\M_1$, $\M_2$
and gluing together two manifolds at the boundaries (technical details in \cite{LeeJM:13}). 
The analog of the connected sum for the abstract complexes will be defined
in Section \ref{sec:prob-formulation}.
The connected sum is a core operation
in topology and is related to the concept of manifold (prime)
decomposition. Informally speaking, the prime decomposition
aims to factorize a manifold $\M$ into $\kappa$ smaller building
blocks ($\M = \M_1\sharp\cdots\sharp \M_\kappa$)
so that each $\M_i$ cannot be further expressed as a connected sum of other manifolds.
The well-known {\em classification theorem of surfaces}
\cite{ArmstrongMA:13} states that
any oriented and compact surface is the finite
connected sum of manifolds homeomorphic to either a circle  $\mathbb S^1$,
a sphere $\mathbb S^2$, or a torus $\mathbb T^2$.
Classification theorems for $d>2$
are currently unknown; fortunately, the uniqueness of
the prime decomposition for $d = 3$ was shown
(Kneser-Milnor theorem \cite{MilnorJ:62}).
Recently, \citet{BokorI.C.F+:20} (Corollary 2.5)
showed the existence of factorizations of manifolds with
$d \geq 5$,
even though they might not be unique.

In this paper, we are interested in the following:
given finite
samples from $\M$, which is a $\kappa$-fold connected sum of $\M_i$,
can this decomposition be recovered from the discrete homology embedding $\vY$ 
of $\M$? Namely,
we would like to understand how $\vY$ relates to 
that of each prime manifold $\M_i$.

 \section{Definitions, theoretical/algorithmic aims, and prior works}
\label{sec:prob-formulation}

\paragraph{Definitions.}
The data $\vX$ is sampled from a
$d$-dimensional {\em oriented} manifold $\M$ that can be decomposed into $\kappa$ prime manifolds ($\M = \M_1\sharp\cdots\sharp \M_\kappa$).
Let $\cI_i$ be an index set of the data points in $\vX$ sampled from $\M_i$, for $i=1,\ldots \kappa$.
Denote by $\scx_k$, $\LL_k$, $\cH_k(\M)$, and $\beta_k$ the simplicial complex, 
the $k$-Laplacian, the $k$-homology space, and the $k$-th Betti number of $\M$.
Furthermore, 
let $\hat{\scx}_k^{(i)}=(\hat\Sigma_0^{(i)},\cdots,\hat\Sigma_k^{(i)}),\hat{\LL}_k^{(ii)},\cH_k(\M_i),\beta_k(\M_i)$ be the same quantities for manifold $\M_i$ 
(supported on $\cI_i$ for $i\leq \kappa$).
$\hat{\scx}_k$ and $\hat{\LL}_k$ (without superscript $i$) are the comparable 
notations for the disjoint manifolds $\M_i$'s,
i.e. $\hat{\scx} = \cup_{i=1}^\kappa \hat{\scx}^{(i)} = (\hat\Sigma_0,\cdots,\hat\Sigma_k)$ with 
$\hat\Sigma_\ell = \cup_{i=1}^\kappa \hat\Sigma_\ell^{(i)}$
for $\ell\leq k$,
and $\hat\LL_k$ is a block diagonal matrix with the $i$-th block being $\hat\LL_k^{(ii)}$. 
Additionally, let $\vY$ and $\hat\vY$ (both in $\rrr^{n_k\times \beta_k}$) be the
homology basis of $\LL_k$ and $\hat\LL_k$, respectively. 
Let $\cS_i$ be the index set of columns of $\hat{\vY}$ corresponding to 
homology subspace $\cH_k(\M_i)$, with $\cS_i\cap\cS_j = \emptyset$ for 
$i\neq j$, $|\cS_i| = \beta_k(\M_i)$, and 
$\cS_1\cup\cdots\cup\cS_\kappa = \{1,\cdots,\beta_k\}$. 
Since $\hat\vY$ is the homology embedding of a block diagonal matrix $\hat\LL_k$,
it follows that 
$[\hat\vY]_{\sigma,m}$ equals the homology embedding of $\hat\LL_k^{(ii)}$ 
if $\sigma\in\hat\Sigma_k^{(i)}$ with column $m\in \cS_i$
and is zero otherwise.
Namely, $\hat\vY$ lies in the direct sum of subspaces $\cH_k(\M_i)$ for $i\leq \kappa$.

\paragraph{Theoretical aim.}
We are interested in the geometric properties of 
the null space eigenvectors $\vY$, and specifically in recovering the 
homology basis $\hat\vY$ of the prime manifolds.
Hence, we aim to bound the
{\em distance} between the spaces spanned by $\vY$ and $\hat\vY$.
Under a small perturbation, one can provide an analogous argument to the
{\em orthogonal cone} structure \cite{NgAY.J.W+:02,SchiebingerG.W.Y+:15} in
spectral clustering (the zeroth homology embedding).
The main technical challenge is that the connected sum of manifolds is a 
highly localized perturbation; namely, most cells are not affected at all, while
those involved in the gluing process gain or lose $\mathcal O(1)$ (co)faces.
Without properly designing $\LL_k$ and $\hat\LL_k$,
one might get a trivial bound.

\begin{wrapfigure}[10]{r}{0.523\textwidth}
\centering
    \vspace{-20pt}
    \includegraphics[width=.99\linewidth]{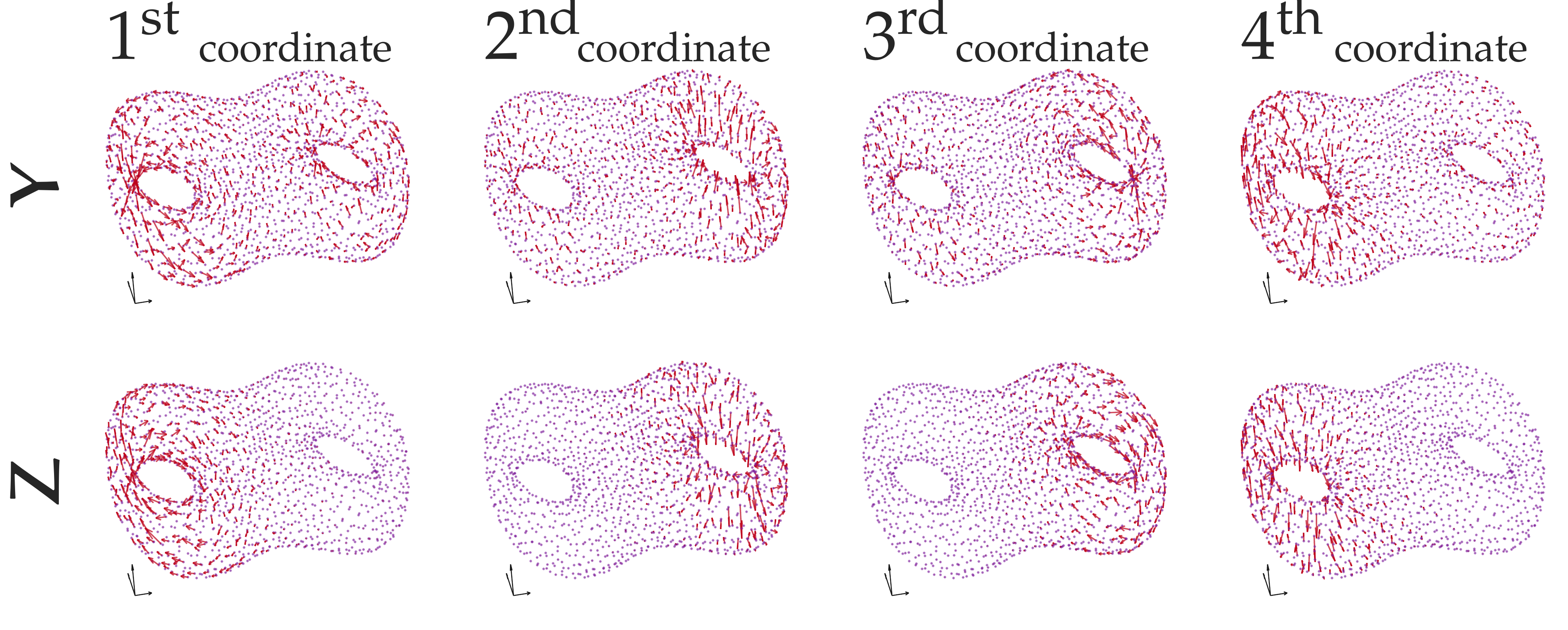}
    \caption{Harmonic vector fields obtained by solving a least-squares \cite{ChenYC.M.K+:21} with $\vY$ (top) and $\vZ$ (bottom).}
    \label{fig:genus-2-harmonic-flow}
\end{wrapfigure}
\paragraph{Algorithmic aim.} 
We exemplify the algorithmic aim using $k = 1$, $d = 2$, and $\kappa = 2$,
particularly the genus-2 surface shown in Figure \ref{fig:genus-2-harmonic-flow}.
The null space basis $\vY$ of $\LL_k$ is only identifiable up to a unitary matrix
due to the multiplicity of the zero eigenvalues. For instance, 
the top and bottom rows of Figure \ref{fig:genus-2-harmonic-flow} are both
valid bases for the edge flow in $\cH_1$.
However, the basis vector fields in the second row of Figure 
\ref{fig:genus-2-harmonic-flow} are more interpretable than those in the top row
because $\vY$ (the first row) is a linear combination
of $\vZ$ (the second row), with each basis (column in the figure) corresponding to 
a single homology class (loop).
Therefore, here we propose a {\em data-driven} approach to obtain the optimal
basis $\vZ$
such that the coupling from other manifolds/subspaces is as weak as possible.
Being able to obtain $\vZ$ from an arbitrary $\vY$ can support numerous
applications (more in Section \ref{sec:applications});
however, 
it is difficult to design a criterion for finding the optimal
$\vZ$ without knowing the geometric structure of $\cH_k$.

\paragraph{Prior works.}
The shape of the embedding of the graph Laplacian $\LL_0$ is pivotal
for showing the guarantees of spectral clustering
algorithms for point cloud data or the inference algorithms for the stochastic block model.
The analyses used either the
matrix perturbation theory \cite{NgAY.J.W+:02,WanY.M:15,vonLuxburgU:07}
or assume a mixture model \cite{SchiebingerG.W.Y+:15}.
For the higher-order $k$-Laplacian, it is reported
empirically that the homology embedding is
approximately distributed on the union (directed sum) of subspaces 
\cite{EbliS.S:19};
subspace clustering algorithms \cite{KailingK.K.K+:04}
were applied to partition edges/triangles under their framework.

 \section{Main result: connected sum as a matrix perturbation}
\label{sec:main-perturbation-theorem}

In this section, we analyze the geometric structure of $\vY$ by viewing the operation of {\em connected sum}
through the lens of matrix perturbation theory \cite{StewartGW.S.S+:90}. We show
that, under certain conditions, the homology embedding $\vY$ of the joint Laplacian  
$\LL_k$ is approximated by $\hat\vY$ for the simplexes that are not created/destroyed during connected sum.
In matrix terms, we show that
$\vY \approx \hat\vY\vO$ (Theorem \ref{thm:subspace-perturb-bound-Lk}) with $\vO$ a unitary transformation.

We first prepare our 
assumptions suited for $\scx$ built from point clouds. Most of the assumptions (except Assumption \ref{assu:basic-ml-assu}
for which the connected sum might not be defined)
can be extended to the clique complex (for networks) or cubical complex
(for images) without too many modifications.
\begin{assumption}
	The point cloud $\vX\in \rrr^{n\times D}$ is sampled from a
        $d$-dimensional oriented and compact manifold $\M \subseteq
        \rrr^{n\times D}$; the homology vector spaces $\mathcal
        H_k(\scx)$ formed by the simplicial complex constructed
        from $\vX$ are isomorphic to the homology group
        $\mathcal H_k(\M)$ of $\M$, i.e., $\mathcal H_k(\scx)
        \simeq \mathcal H_k(\M)$. 
Furthermore, assume that $\M = \M_1\sharp \cdots\sharp \M_\kappa$, and that $\mathcal H_k(\hat\scx^{(i)})
        \simeq \mathcal H_k(\M_i)$ for $i = 1,\cdots, \kappa$. 
\label{assu:basic-ml-assu}
\end{assumption}
This assumption is the minimal assumption needed for the analysis of the embedding
of the $\LL_k$;
it states that any procedure to construct the simplicial complex or weight
function for $\LL_k$ is accepted
as long as the isomorphic condition holds. The construction of the $\scx$
from the point cloud is out of the scope of this manuscript (see, e.g., \citet{ChenYC.M.K+:21} for building $\LL_1$ from $\vX$ with an analyzable limit). The last condition
requires that the manifold $\M$ can be decomposed; this is most likely true,
except for the known hard case of $\M$ with $d = 4$ discussed in Section \ref{sec:background}.
To make this assumption hold for networks or images,
one can require that $\LL_k$ constructed from these two datasets can be 
roughly factorized into block-diagonal entries.
Below we provide two other assumptions that are valid for both $\scx$ and $\cub$ (with some modifications): 
the first one controls the eigengap and the second 
one ensures a small perturbation in the spectral norm of $\LL_k - \hat\LL_k$. 
By construction, $\LL_k$ is positive semi-definite; since we are interested 
in the stability of its null space, we define, for any matrix $\mathbf L\succeq 0$, the {\em eigengap} as the the smallest {\em non-zero} eigenvalue of $\mathbf L$ and denote it $\lambda_{\rm min}(\mathbf L)$.
\begin{assumption}
	We denote the set of destroyed and created $k$-simplexes during connected sum by
	$\fD_k$ and $\fC_k$, respectively; let the set of non-intersecting 
	simplexes be $\fN_k = \Sigma_k \backslash \fC_k = \hat\Sigma_k\backslash\fD_k$.
	We have: (1) no $k$-homology class is created
	during the connected sum process, i.e.,
	$\beta_k(\scx) = \sum_{i=1}^\kappa \beta_k(\hat\scx^{(i)})$.
	(2) The eigengaps of $\LL_k^{\fC,\fC}$ and $\hat\LL_k^{\fD,\fD}$ are bounded away from the eigengaps of $\LL_k^{(ii)}$, i.e., $\min\{\lambda_{\min}(\LL_k^{\fC,\fC}), \lambda_{\min}(\hat\LL_k^{\fD,\fD})\} \gg \min\{ \delta_1,\cdots, \delta_\kappa \}$,
	where $\delta_i$ is the {\em eigengap} of $\LL_k^{(ii)}$.
\label{assu:connected-sum-preserve-homology}
\end{assumption}
The first condition requires that the intersecting simplexes $\fD_k\cup\fC_k$
do not create or destroy any $k$-th homology class; this holds, for instance, when the manifold $\M$ has dimension $d > k$. 
Under this condition, we have
$\mathcal H_k(\M_1\sharp\M_2) \simeq \mathcal H_k(\M_1) \oplus \mathcal H_k(\M_2)$ 
\cite{LeeJM:13}. 
A counterexample for this condition is, e.g., inspecting the cavity 
space ($k=2$) of a genus-2 surface built from gluing two tori together. That is,
$\beta_2$ of a genus-2 surface is 1, while the sum of $\beta_2$ of two tori is 2.
The second condition requires that the principal
submatrix of $\LL_k$ described by the block of $\fC_k\cup\fD_k$ has large eigengap. This happens, e.g., when $\fC_k$ and $\fD_k$ are cliques
and are contained in small balls.

\begin{assumption}[Informal]
	Let $\tilde\vw_k = |\vB_{k+1}[\fN_k, \fN_{k+1}]| \vw_{k+1}$, $\tilde\vw_{k-1} = |\vB_{k}[:, \fN_{k}]|\tilde\vw_{k}$. 
	For $\ell = k$ or $k-1$, we have $\max_{\sigma\in\fN_\ell} \left\{w_\ell(\sigma) / \tilde w_\ell(\sigma) - 1\right\} \leq \epsilon_\ell$,
	$\max_{\sigma\in\fN_\ell} \left\{\hat w_\ell(\sigma) / \tilde w_\ell(\sigma) - 1\right\} \leq \epsilon_\ell$, and
	$\max_{\sigma\in\fN_\ell} \left\{|w_\ell(\sigma) / \hat w_\ell(\sigma) - 1|\right\} \leq \epsilon'_\ell$.
Assumption \ref{assu:degree-of-simplex-increase-less-formal} 
	is the formal version of this assumption.
\label{assu:degree-of-simplex-increase-less}
\end{assumption}

For $k=1$, it states that not too many triangles are being
created or destroyed during connected sum. 
For this assumption to hold, the density in the connected sum region should 
be smaller than in other regions, i.e., the manifold $\M$ should be sparsely connected (e.g., Figure \figref{fig:hemb-2-loops}).
Empirically, we observed that the perturbation is small even when $\M$
is not sparsely connected (more discussions in Section \ref{sec:experiments}).
Note also that $\epsilon'_\ell \ll \epsilon_\ell$, for $\epsilon'_\ell$
represents the {\em net} change in the degree after connected sum.
It might be possible to obtain a tighter bound fully by $\epsilon'_\ell$'s,
which do not depend on the relative density between the connected sum region
and the remaining manifolds; we leave it as future work.

\begin{theorem}
Let $\difflap_k^{\rm down}$ be the modified difference (defined in \suppname{} \ref{sec:proof-subspace-perturbation})
of $\LL_k^{\rm down}$ and $\hat{\LL}_k^{\rm down}$, same for that ($\difflap_k^{\rm up}$)
of {\em up} Laplacians.
Under Assumptions \ref{assu:basic-ml-assu}--\ref{assu:degree-of-simplex-increase-less} with notations defined as before and $\lambda_k = k+2$, if $\left\|\difflap_k^{\rm down}\right\|^2 \leq \left[2\sqrt{\epsilon_k'} + \epsilon_k' + \left(1+\sqrt{\epsilon_k'}\right)^2 \sqrt{\epsilon_{k-1}'} + 4\sqrt{\epsilon_{k-1}}\right]^2\lambda^2_{k-1}$
and $\left\| \difflap_k^{\rm up}\right\|^2 \leq \left[2\sqrt{\epsilon_k'} + \epsilon_k' + 2\epsilon_k + 4\sqrt{\epsilon_{k}}\right]^2\lambda^2_{k}$, then there exists a unitary matrix $\vO\in\rrr^{\beta_k\times\beta_k}$ such that
\begin{equation}
	\left\| \vY_{\fN_k,:} - \hat\vY_{\fN_k,:}\vO \right\|_F^2 \leq \frac{8\beta_k\left[\left\| \difflap_k^{\rm down} \right\|^2+\left\| \difflap_k^{\rm up} \right\|^2\right]}{\min\{\delta_1,\cdots,\delta_\kappa\}}.
\label{eq:main-Lk-perturbation-bound}
\end{equation}
\label{thm:subspace-perturb-bound-Lk}
\end{theorem}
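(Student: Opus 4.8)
\emph{Proof plan.} The homology embeddings $\vY$ and $\hat\vY$ are orthonormal bases of $\ker\LL_k$ and $\ker\hat\LL_k$, so $\min_{\vO}\|\vY_{\fN_k,:}-\hat\vY_{\fN_k,:}\vO\|_F$ is, up to an almost-orthonormality correction, the $\sin\Theta$ distance between $\mathrm{span}(\vY_{\fN_k,:})$ and $\mathrm{span}(\hat\vY_{\fN_k,:})$. I would therefore reduce the claim to (i) showing these two spans are close and (ii) showing the restricted matrices are nearly orthonormal, i.e.\ $\vY_{\fN_k,:}^\top\vY_{\fN_k,:}\approx\vI$ and likewise for $\hat\vY$. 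The difficulty flagged in Section~\ref{sec:main-perturbation-theorem} is that $\|\LL_k-\hat\LL_k\|$ is $\Theta(1)$ even though the connected sum only touches the gluing region, so a direct Davis--Kahan estimate on the full Laplacians is vacuous; the point of restricting to $\fN_k$ and of introducing the \emph{modified differences} $\difflap_k^{\rm down},\difflap_k^{\rm up}$ is precisely to replace that $\Theta(1)$ quantity by the small quantities $\|\difflap_k^{\rm down}\|,\|\difflap_k^{\rm up}\|$.

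The core construction is to compare the principal submatrices $\LL_k[\fN_k,\fN_k]$ and $\hat\LL_k[\fN_k,\fN_k]$ after rescaling \emph{both} by a common reference weight. On $\fN_k$ the weights disagree --- $w_k$ sums cofaces over $\fN_{k+1}\cup\fC_{k+1}$ while $\hat w_k$ sums over $\fN_{k+1}\cup\fD_{k+1}$ --- so I would pass to $\tilde\vw_k=|\vB_{k+1}[\fN_k,\fN_{k+1}]|\vw_{k+1}$ (and $\tilde\vw_{k-1}$) from Assumption~\ref{assu:degree-of-simplex-increase-less}, define $\difflap_k^{\rm down}$ and $\difflap_k^{\rm up}$ as the differences of the $\tilde\vw$-reweighted restrictions of the down and up Laplacians, and then bound their norms using the consistency relation $\vw_\ell=|\vB_{\ell+1}|\vw_{\ell+1}$ together with the ratio bounds $\epsilon_\ell,\epsilon'_\ell$: only simplices possessing a coface in $\fC_{k+1}\cup\fD_{k+1}$ (a small set) contribute, and rescaling distorts each surviving entry by a factor $1+O(\epsilon)$. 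The up term is the easy one ($\difflap_k^{\rm up}$ involves only the single weight level $w_k$, hence only $\epsilon_k,\epsilon_k'$ and $\lambda_k=k+2$, the number of $k$-faces of a $(k{+}1)$-simplex); the down term is harder because two weight levels $w_{k-1},w_k$ interleave through $\vA_k=\vW_{k-1}^{-1/2}\vB_k\vW_k^{1/2}$, which is why the hypothesis on $\|\difflap_k^{\rm down}\|^2$ carries the extra $\epsilon_{k-1}$-dependence and the factor $\lambda_{k-1}$.

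Next I would establish two spectral facts. First, \emph{concentration of the null vectors on $\fN_k$}: since the gluing region is contractible it carries no local homology, so $\hat\LL_k^{\fD,\fD}\succ0$, and the $\fD$-rows of $\hat\LL_k\hat\vY=\vec 0$ read $\hat\LL_k^{\fD,\fD}\hat\vY_{\fD_k,:}=-\hat\LL_k^{\fD,\fN}\hat\vY_{\fN_k,:}$, whence $\|\hat\vY_{\fD_k,:}\|_F$ is controlled by $\|\hat\LL_k^{\fD,\fN}\|/\lambda_{\min}(\hat\LL_k^{\fD,\fD})$, which is small by Assumption~\ref{assu:connected-sum-preserve-homology}(2); symmetrically $\|\vY_{\fC_k,:}\|_F$ is small. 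This gives the near-orthonormality needed for step (ii) and shows $\hat\vY_{\fN_k,:}$ is an \emph{approximate} null vector of the reweighted $\LL_k[\fN_k,\fN_k]$ with residual $(\difflap_k^{\rm down}+\difflap_k^{\rm up})\hat\vY_{\fN_k,:}$ plus a concentration error. Second, \emph{gap preservation}: Assumption~\ref{assu:connected-sum-preserve-homology}(1) gives $\dim\ker\LL_k=\dim\ker\hat\LL_k=\beta_k$, and the two hypotheses bound $\|\difflap_k^{\rm down}\|,\|\difflap_k^{\rm up}\|$ by a small multiple of $\lambda_{k-1},\lambda_k$ --- which are, up to constants, upper bounds for $\|\LL_k^{\rm down}\|,\|\LL_k^{\rm up}\|$ --- so Weyl's inequality guarantees that moving from $\hat\LL_k[\fN_k,\fN_k]$ to the reweighted $\LL_k[\fN_k,\fN_k]$ neither destroys part of the zero-eigenvalue cluster nor merges it with the nonzero spectrum; consequently the smallest nonzero eigenvalue $\gamma$ of the reweighted $\LL_k[\fN_k,\fN_k]$ stays comparable to $\min\{\delta_1,\dots,\delta_\kappa\}$.

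Finally I would close the estimate. Let $\vP_0$ be the orthogonal projector onto $\ker$ of the reweighted $\LL_k[\fN_k,\fN_k]$. Writing the $j$-th column of $\hat\vY_{\fN_k,:}$ as $\vP_0\hat\vu_j+\vw_j$, the approximate null-vector identity gives $\LL_k[\fN_k,\fN_k]\vw_j=(\difflap_k^{\rm down}+\difflap_k^{\rm up})\hat\vu_j+(\text{err})$, and combining $\gamma\|\vw_j\|^2\le\vw_j^\top\LL_k[\fN_k,\fN_k]\vw_j$ with Cauchy--Schwarz on the right-hand side yields $\|\vw_j\|\lesssim(\|\difflap_k^{\rm down}\|+\|\difflap_k^{\rm up}\|)/\gamma$; summing the squared norms over the $\beta_k$ columns, converting the resulting $\sin\Theta$ bound back to $\min_\vO\|\vY_{\fN_k,:}-\hat\vY_{\fN_k,:}\vO\|_F^2$, and absorbing the $\sqrt2$ from that conversion, the inequality $2(\|\difflap_k^{\rm down}\|^2+\|\difflap_k^{\rm up}\|^2)\ge\|\difflap_k^{\rm down}+\difflap_k^{\rm up}\|^2$, the near-orthonormality corrections and $\gamma\gtrsim\min\delta_i$ into the constant, produces \eqref{eq:main-Lk-perturbation-bound} with the stated $8\beta_k$. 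I expect the construction and norm control of $\difflap_k^{\rm down},\difflap_k^{\rm up}$ in the second step to be the main obstacle: that is exactly where a careless choice --- using $w_k,\hat w_k$ directly instead of the common $\tilde\vw$, or failing to see that only cofaces meeting $\fC_{k+1}\cup\fD_{k+1}$ matter --- collapses the argument back to the trivial $\Theta(1)$ bound, and the two-level bookkeeping in the down-Laplacian term is the technically heaviest part.
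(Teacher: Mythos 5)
Your plan diverges from the paper's proof in a way that, as written, has a real gap. The paper does \emph{not} restrict the Laplacians to the principal submatrix $\LL_k[\fN_k,\fN_k]$. Instead, it \emph{extends} both operators to the full index set $\fN_k\cup\fC_k\cup\fD_k$: $\hat\LL_k$ is taken to be the block-diagonal disjoint Laplacian on $\fN_k\cup\fD_k$ with the $\fC$-block $\vL^{+,+}$ of the glued Laplacian appended as a disconnected diagonal block, and symmetrically $\LL_k$ keeps the $\fD$-block $\hat\vL^{-,-}$. Because the $\fC\times\fC$ and $\fD\times\fD$ blocks now agree exactly between the two matrices, and the $\fC$-to-$\fD$ coupling is zero in both, the difference $\vL-\hat\vL$ is supported on the $\fN\times\fN$ diagonal block and the $\fN\times\fC$, $\fN\times\fD$ off-diagonal blocks --- and this whole matrix is exactly the $\difflap_k^{\rm down}+\difflap_k^{\rm up}$ whose norm is hypothesized small. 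A single application of the Yu--Wang--Samworth Davis--Kahan variant to the \emph{full} matrices then bounds $\|\vY-\hat\vY\vO\|_F^2$, and the claim for $\|\vY_{\fN_k,:}-\hat\vY_{\fN_k,:}\vO\|_F^2$ follows trivially by dropping the squared terms for rows outside $\fN_k$. No near-orthonormality of $\vY_{\fN_k,:}$, no concentration of $\hat\vY$ away from $\fD_k$, and no gap argument for a principal submatrix are needed.

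Your step (ii) is where the trouble lies. You argue $\|\hat\vY_{\fD_k,:}\|_F$ is controlled by $\|\hat\LL_k^{\fD,\fN}\|/\lambda_{\min}(\hat\LL_k^{\fD,\fD})$. But Assumption \ref{assu:connected-sum-preserve-homology}(2) only puts $\lambda_{\min}(\hat\LL_k^{\fD,\fD})$ far above the tiny eigengaps $\delta_i$; it says nothing about $\|\hat\LL_k^{\fD,\fN}\|$, which is a generic off-diagonal Laplacian block and can be $\Theta(1)$. So the ratio is not small under the stated hypotheses, and your near-orthonormality / concentration step does not go through. Relatedly, a principal submatrix $\LL_k[\fN_k,\fN_k]$ need not have $\beta_k$-dimensional kernel, so invoking $\vP_0$ as ``the'' projector onto its kernel and claiming $\gamma\gtrsim\min_i\delta_i$ by Weyl-type gap preservation are both unsubstantiated; Cauchy interlacing for principal submatrices moves eigenvalues in a way the stated assumptions do not control. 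Finally, your $\difflap^{\rm down},\difflap^{\rm up}$ (defined only on the $\fN\times\fN$ block) are a strictly smaller object than the paper's (which by construction include the $\fN\times\fC$ and $\fN\times\fD$ coupling --- these are the ``$(\dagger)$'' terms bounded by Lemmas \ref{thm:bound-of-mutrally-exclusive-edges} and \ref{thm:bound-of-mutrally-exclusive-triangles}), so even a successful execution of your plan would not prove the theorem as stated. Your weight-rescaling intuition (passing to $\tilde\vw$, two-level bookkeeping for the down Laplacian, $\lambda_k=k+2$) is correct and matches the paper's Lemmas S1--S2; the extension-versus-restriction move and the resulting removal of any concentration step are what you are missing.
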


\begin{wrapfigure}[13]{r}{0.48\textwidth}
\IncMargin{1.6em}
\vspace{-18pt}
\begin{algorithm}[H]
    \setstretch{1.15}
    \SetKwInOut{Input}{Input}
    \SetKwInOut{Output}{Return}
    \SetKwComment{Comment}{$\triangleright $\ }{}
    \Indm
	\Input{$\scx$, $k$, weights $\vW_{k+1}$}
	\Indp
    $\vB_k, \vB_{k+1} = \textsc{BoundaryMaps}(\scx, k)$ \label{algstp:bmap} \, \Comment{in Algorithm \ref{alg:boundary-map}}
    \For{$\ell = k, k-1$}{
	$\vec{W}_\ell \gets \diag\{|\vec{B}_{\ell+1}|\vec{W}_{\ell+1}\vec{1}_{n_{\ell+1}}\}$ \\
$\vA_{\ell+1} \gets \vW_\ell^{-1/2}\vB_{\ell+1}\vW_{\ell+1}^{1/2}$
}
	$\LL_k = \vA_k^\top\vA_k + \vA_{k+1}\vA_{k+1}^\top$ \\
	$\vY \in\rrr^{n_k\times \beta_k} \gets \textsc{NullSpace}(\LL_k)$ \\
	$\vZ \gets \textsc{ICANoPrewhite}(\vY)$ \label{algstp:noprewhiteica} \\
	\Indm
	\Output{Independent basis $\vZ$}
	\Indp
\caption{Subspace identification}
\label{alg:ica-no-prewhite}
\end{algorithm}
\DecMargin{1.6em}
\end{wrapfigure}
The proof (in \suppname{} \ref{sec:proof-subspace-perturbation}) is based 
on bounding the error between $\LL_k$ and $\hat\LL_k$ with $\tilde\LL_k$ 
(the Laplacian after removal of $k$-simplices during connected sum), the 
use of a variant of the Davis-Kahan theorem \cite{YuY.W.S+:15},
and the bound of the spectral norm of $\LL_k$ for a simplicial complex, i.e., $\|\LL_k\|_2\leq \lambda_k = k+2$ 
\cite{HorakD.J:13}.

What is unusual for the bound is that the LHS of 
\eqref{eq:main-Lk-perturbation-bound} contains only the simplices
in $\fN_k$. It is unlikely that one can get a small bound for the 
simplices in $\fC_k\cup\fD_k$ since they do not exist before or after 
gluing manifolds together. Nonetheless, \eqref{eq:main-Lk-perturbation-bound}
makes sure that the (unbounded) perturbations in the embedding of 
$\fC_k\cup\fD_k$ do not propagate to the rest of the simplices.
The bound in \eqref{eq:main-Lk-perturbation-bound} can be extended to 
$\cub$ (Corollary \ref{thm:subspace-perturb-bound-Lk-cub})
by changing the $\lambda_k$ value from $(k+2)$ to $2k+2$. The $2k+2$ term here is 
the maximum eigenvalue of the $\LL_k$ built from any cubical complex 
(Proposition \ref{thm:spectral-norm-Lk-cb}).

\begin{corollary}[For $\LL_k$ built from a $\cub$]
	Under Assumptions \ref{assu:connected-sum-preserve-homology}--\ref{assu:degree-of-simplex-increase-less} with $\difflap_k^{\rm up}$ as well as $\difflap_k^{\rm down}$ defined in Theorem \ref{thm:subspace-perturb-bound-Lk} and $\lambda_k = 2k+2$, there exists a unitary
	matrix $\vO$ such that \eqref{eq:main-Lk-perturbation-bound} holds.
\label{thm:subspace-perturb-bound-Lk-cub}
\end{corollary}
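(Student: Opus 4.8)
The plan is to reuse verbatim the proof of Theorem \ref{thm:subspace-perturb-bound-Lk}, changing only the single place where the ambient bound $\|\LL_k\|_2 \leq \lambda_k$ is invoked. Concretely, the argument in \suppname{} \ref{sec:proof-subspace-perturbation} proceeds by (i) introducing the intermediate Laplacian $\tilde\LL_k$ obtained from $\LL_k$ (equivalently $\hat\LL_k$) by deleting the $k$-cells in $\fC_k \cup \fD_k$, so that $\tilde\LL_k$ is a principal submatrix common to both and the difference matrices $\difflap_k^{\rm down}, \difflap_k^{\rm up}$ capture exactly the re-weighting on the surviving cells $\fN_k$; (ii) bounding $\|\LL_k^{\rm down} - \tilde\LL_k^{\rm down}\|$ and the $\hat{}$-analogue in terms of $\epsilon_{k-1}, \epsilon_{k-1}'$ (and similarly for the {\em up} part in terms of $\epsilon_k, \epsilon_k'$), using $\|\LL_k^{\rm down}\| \le \lambda_{k-1}$ and $\|\LL_k^{\rm up}\| \le \lambda_k$ to control cross terms; (iii) applying the Davis--Kahan variant of \cite{YuY.W.S+:15} to the pair $(\LL_k, \hat\LL_k)$ restricted appropriately, with the eigengap $\min\{\delta_1,\dots,\delta_\kappa\}$ in the denominator, to produce the unitary $\vO$ and the Frobenius bound \eqref{eq:main-Lk-perturbation-bound}. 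Every step of (i)--(iii) is purely linear-algebraic and never uses that the complex is simplicial rather than cubical: the boundary matrices $\vB_k, \vB_{k+1}$ of a $\cub_k$ are still sparse $\{0,\pm1\}$ matrices, the weight consistency relation $\vw_\ell = |\vB_{\ell+1}|\vw_{\ell+1}$ is assumed, and the notions of $\fC_k, \fD_k, \fN_k$ carry over directly. Assumption \ref{assu:basic-ml-assu} (which needs the connected sum to be defined manifold-theoretically) is deliberately dropped from the hypotheses, consistent with the remark in the text that it is the only assumption not portable to the cubical setting.

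The one substantive input that must change is the spectral-norm bound on $\LL_k$. For a simplicial complex \cite{HorakD.J:13} gives $\|\LL_k\|_2 \leq k+2$; for a cubical complex this is no longer correct, and one must substitute $\|\LL_k\|_2 \leq 2k+2$, which is Proposition \ref{thm:spectral-norm-Lk-cb}. I would therefore: (a) quote Proposition \ref{thm:spectral-norm-Lk-cb} to get $\|\LL_k^{\rm up}\| \le \|\LL_k\| \le 2k+2 = \lambda_k$ and, by the same proposition applied one degree lower, $\|\LL_k^{\rm down}\| \le \lambda_{k-1}$; (b) trace through the chain of inequalities in the Theorem's proof and observe that $\lambda_k$ and $\lambda_{k-1}$ enter only multiplicatively in the two hypotheses $\|\difflap_k^{\rm down}\|^2 \le [\cdots]^2\lambda_{k-1}^2$ and $\|\difflap_k^{\rm up}\|^2 \le [\cdots]^2\lambda_k^2$ and linearly in the intermediate perturbation estimates, so that replacing $k+2$ by $2k+2$ leaves the \emph{form} of \eqref{eq:main-Lk-perturbation-bound} unchanged (the numerator $8\beta_k[\|\difflap_k^{\rm down}\|^2 + \|\difflap_k^{\rm up}\|^2]$ and denominator $\min\{\delta_i\}$ are stated in terms of the $\difflap$ norms themselves, not $\lambda_k$); (c) conclude that the Davis--Kahan step goes through identically and yields the claimed $\vO$.

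The genuinely new content is thus entirely contained in Proposition \ref{thm:spectral-norm-Lk-cb}, i.e. in establishing $\|\LL_k\|_2 \le 2k+2$ for an arbitrary cubical complex; everything else is bookkeeping. I expect that proposition to be the main obstacle, and I would prove it along the same lines as Horak--Jost's bound: write $\LL_k = \LL_k^{\rm down} + \LL_k^{\rm up}$, bound each summand by a Gershgorin/row-sum argument on $\vA_\ell^\top \vA_\ell$ using the weight consistency relation to normalize, and count the relevant incidences. The key combinatorial difference from the simplicial case is that a $k$-cube has $2k$ faces of dimension $k-1$ (versus $k+1$ for a $k$-simplex), and a $(k-1)$-cube sits in a controlled number of $k$-cubes after the consistency normalization; carrying these counts through the two Gershgorin bounds gives $\|\LL_k^{\rm down}\| \le$ (number of faces of a $k$-cube) $= 2k$ plus a lower-order correction, and $\|\LL_k^{\rm up}\| \le 2$ after normalization, combining to $2k+2$. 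Making the two halves add correctly — in particular verifying that the normalization $\vA_\ell = \vW_{\ell-1}^{-1/2}\vB_\ell \vW_\ell^{1/2}$ together with $\vw_{\ell-1} = |\vB_\ell|\vw_\ell$ makes the up-Laplacian row sums bounded by $2$ irrespective of the combinatorics — is the delicate point and the part I would write out in full.
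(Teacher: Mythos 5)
Your overall plan matches the paper's exactly: the Corollary is obtained by rerunning the proof of Theorem~\ref{thm:subspace-perturb-bound-Lk} word-for-word, substituting $\lambda_k = 2k+2$ (hence $\lambda_{k-1} = 2k$) wherever the Horak--Jost spectral-norm bound was invoked, and citing Proposition~\ref{thm:spectral-norm-Lk-cb} in its place. You also correctly note that Assumption~\ref{assu:basic-ml-assu} must be dropped because the connected sum is manifold-theoretic, that the block decomposition into $\fN_k,\fC_k,\fD_k$, Lemmas~\ref{thm:bound-of-mutrally-exclusive-edges}--\ref{thm:bound-of-mutrally-exclusive-triangles}, and the Davis--Kahan step are purely linear-algebraic and hence insensitive to whether the cells are simplicial or cubical, and that $\|\LL_k^{\rm down}\| = \|\LL_{k-1}^{\rm up}\| \le \lambda_{k-1}$ is what feeds the down-Laplacian bound. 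Up to and including ``everything else is bookkeeping,'' this is exactly the paper's (one-sentence) argument for the Corollary.

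The last paragraph, where you sketch how you would establish Proposition~\ref{thm:spectral-norm-Lk-cb} itself, has a genuine error. You propose to show $\|\LL_k^{\rm down}\|\le 2k$ and $\|\LL_k^{\rm up}\|\le 2$ and ``combine to $2k+2$.'' The first bound is fine (it is Proposition~\ref{thm:spectral-norm-Lk-cb} applied at degree $k-1$: a $k$-cube has $2k$ facets), but $\|\LL_k^{\rm up}\|\le 2$ is false in general: for a single square with unit $\vw_2$, one has $\vW_1 = \vI$ and $\LL_1^{\rm up}=\vB_2\vB_2^\top$ with nonzero eigenvalue $\vB_2^\top\vB_2 = 4$, not $\le 2$. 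The bound $2$ you have in mind is the $k=0$ case only. Moreover, since $\vB_k\vB_{k+1}=0$ implies $\vA_k\vA_{k+1}=0$, the two summands of $\LL_k$ have orthogonal ranges and $\|\LL_k\|=\max\{\|\LL_k^{\rm down}\|,\|\LL_k^{\rm up}\|\}$; so even granting your bounds, adding them would not be the right way to combine (and taking the max of $2k$ and $2$ would give $2k$, not $2k+2$). The paper's Proposition~\ref{thm:spectral-norm-Lk-cb} instead bounds the up-Laplacian directly: pass to the random-walk form, apply Cauchy--Schwarz to the quadratic form $\vf^\top\vB_{k+1}\vW_{k+1}\vB_{k+1}^\top\vf$, and the constant $2k+2$ appears as the number of facets of a $(k+1)$-cube. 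That gives $\|\LL_k^{\rm up}\|\le 2k+2 = \lambda_k$; the global bound $\|\LL_k\|\le 2k+2$ then follows by taking the max with the weaker down bound $2k$.
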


\paragraph{Subspace identification.}
We propose to (approximately) separate the columns of the coupled basis
$\vY$ to an independent basis $\vZ$ (as an approximation to $\hat\vY$), with columns being a permutation of
$\{1,\cdots,\beta_k\}$, by {\em blind source separation}, as described by  
Algorithm \ref{alg:ica-no-prewhite}. 
Specifically, $\vZ$ is obtained by Infomax ICA  
\cite{BellAJ.S:95} on $\vY$ of $\LL_k$, with a modification (Line
\ref{algstp:noprewhiteica}) that preserves
the necessary properties of harmonic cochains (i.e., they are
{\em divergence-free} and {\em curl-free}, see also Proposition \ref{thm:digraph-reachable-no-shortcuit}).
Algorithm \ref{alg:ica-no-prewhite} works for $\cub$ as well by using
the appropriate $\vB_k$, $\vB_{k+1}$ construction method (Line
\ref{algstp:bmap}).

\section{Applications: homologous loops detection, clustering, and visualization}
\label{sec:applications}

\begin{wrapfigure}[21]{r}{0.62\textwidth}
\vspace{-15pt}
\IncMargin{1.6em}
\begin{algorithm}[H]
    \setstretch{1.15}
    \SetKwInOut{Input}{Input}
    \SetKwInOut{Output}{Return}
    \SetKwComment{Comment}{$\triangleright $\ }{}
    \Indm
	\Input{$\vZ = [\vz_1,\cdots,\vz_{\beta_1}]$, $V$, $E$, edge distance $\vd$}
	\Indp
	\For{$i = 1,\cdots,\beta_1$}{
		$E_i^+ \gets \{ (s,t): (s,t)\in E \text{ and } [\vz_i]_{(s,t)} > 0\}$ \label{algstp:eiplus} \\
		$E_i^- \gets \{ (t,s): (s,t)\in E \text{ and } [\vz_i]_{(s,t)} < 0 \}$ \label{algstp:eiminus} \\
		$\tau\gets \textsc{Percentile}(|\vz_i|, 1-1/\beta_1)$ \label{algstp:thres-val} \\
		$E_i^\times \gets \{e \in E_i^+ \cup E_i^-: |[\vz_i]_e| < \tau\}$  \label{algstp:remove-edges-smaller-thres}\\
		$E_i \gets E_i^+ \cup E_i^- \backslash E_i^\times $\\
		$G_i \gets (V, E_i)$, with weight of $e\in E_i$ being $[\vd]_e$ \label{algstp:weighted-Gi} \\
		$d_{\min} = \texttt{inf}$ \\
		\For{$e=(t, s_0) \in E_i$ \label{algstp:for-all-edges} }{
			$\cP^*, d^* \gets \textsc{Dijkstra}(G_i,\texttt{from=}s_0, \texttt{to=}t)$\,
			\Comment{Note that $\cP^* = [s_0,s_1,\cdots,t]$}
			\If{$d^* < d_{\min}$}{
				$\cC_i \gets [t, s_0,s_1,\cdots, t]$ \label{algstp:assign-ci}
			}
		}
	}
	\Indm
	\Output{ $\cC_1,\cdots,\cC_{\beta_1}$}
	\Indp
	\caption{Spectral homologous loop detection}
    \label{alg:dijkstra-shortest-loop}
\end{algorithm}
\DecMargin{1.6em}
\end{wrapfigure}
\paragraph{Homologous loop detection.}
In addition to the {\em rank} information available from classical TDA methods, 
one might find it
beneficial to extract the shortest cycle of the corresponding $\cH_k$ 
generator.
This application is found useful in domains including finding
minimum energy trajectories in molecular dynamics datasets,
trajectory inference in RNA single-cell sequencing \cite{SaelensW.C.T+:19},
and segmenting circular structures in medical images
 \cite{SinghN.C.M+:14}.
We propose a {\em spectral} shortest homologous
 loop detection algorithm
(Algorithm \ref{alg:dijkstra-shortest-loop}) based on the shortest path
algorithm (Dijkstra) as follows: for each dimension $i = 1,\cdots,\beta_1$,
the algorithm reverses every edge $e$ having negative $[\vz_i]_e$ to generate
a weighted digraph $G_i = (V, E_i)$ (Lines \ref{algstp:eiplus}--\ref{algstp:ei}),
with the weight of edge $e\in E_i$ equal to the Euclidean 
distance $[\vd]_{(i,j)} = \|\vx_i-\vx_j\|_2$. The algorithm finds a 
shortest (in terms of $\vd$) loop on this weighted digraph for each $i$
and outputs it as the homologous loop representing the $i$-th class. 
We present the following proposition (with the proof in \suppname{} \ref{sec:proofs-of-applications-props}) to support Algorithm \ref{alg:dijkstra-shortest-loop}; 
it implies that if each coordinate of $\vZ$ extracted from Algorithm
\ref{alg:ica-no-prewhite} corresponds to a homology class, then the detected
homologous loop for each homology class is the shortest.

\begin{proposition}
	Let $\vz_i$ for $i = 1,\cdots,\beta_1$ be the $i$-th homology basis
	that corresponds to the $i$-th homology class.
	For every $i = 1,\cdots,\beta_1$, (1) there exist at least one {\em cycle} in 
	the digraph $G_i$ such that every vertex $v\in V$ can traverse back to itself ({\em reachable}); 
	(2) the corresponding {\em cycle} will enclose at least one homology class ({\em no short-circuiting}).
\label{thm:digraph-reachable-no-shortcuit}
\end{proposition}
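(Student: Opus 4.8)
The plan is to exploit the defining algebraic property of a harmonic $1$-cochain $\vz_i$: it lies in $\ker(\LL_1)=\ker(\vA_1)\cap\ker(\vA_2^\top)$, so $\vz_i$ is simultaneously \emph{divergence-free} ($\vB_1 \vz_i = \vec 0$, up to the diagonal weighting) and \emph{curl-free} ($\vB_2^\top \vz_i = \vec 0$). I would translate each of these into a combinatorial statement about the digraph $G_i$ obtained by orienting every edge $e$ in the direction where $[\vz_i]_e>0$ (discarding the sub-threshold edges $E_i^\times$, which I would first argue does not affect homology class since they carry the smallest flow values — or, more cleanly, prove the proposition for the untruncated $G_i$ and note truncation only removes low-weight edges, leaving at least one full cycle, exactly as in the algorithm). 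The key dictionary entry for part (1): divergence-free means that at every vertex $v$, the (weighted) inflow equals the (weighted) outflow of $\vz_i$; in particular, whenever a vertex has an incoming edge in $G_i$ it must also have an outgoing one, and vice versa. That local balance is precisely what forbids sinks and sources.

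For part (1) I would argue as follows. First, $\vz_i \neq \vec 0$ (it is a basis vector of a nontrivial homology space), so $G_i$ has at least one edge; pick any vertex $v_0$ with a nonzero incident edge. By the inflow/outflow balance at each vertex, starting from $v_0$ and always leaving along an outgoing edge, the walk can never get stuck, so it must eventually revisit a vertex, producing a directed cycle. To upgrade ``there is a cycle'' to ``every vertex is reachable to itself,'' I would observe that the balance condition, applied to the set of vertices \emph{not} reachable from a fixed cycle, forces that set to have no incoming edges from the cycle's reachable set and hence (by another balance argument on the cut) be internally balanced as well; iterating, every vertex carrying flow sits on some directed cycle. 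Vertices with no incident $\vz_i$-flow are isolated in $G_i$ and are vacuously fine, or one restricts the claim to the support. This is the standard fact that a flow on a graph decomposes into directed cycles.

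For part (2) — no short-circuiting — I would use the curl-free condition together with the fact that $\vz_i$ is \emph{not} a boundary, i.e., $\vz_i\notin\img(\vA_2)$. Suppose for contradiction that some directed cycle $\cC$ in $G_i$ is null-homologous (encloses no homology class), i.e., $[\cC]=0$ in $\cH_1$, so the indicator chain of $\cC$ equals $\vB_2 \vec c$ for some $2$-chain $\vec c$. Curl-freeness of $\vz_i$ gives $\langle \vz_i, \vB_2\vec c\rangle_{\vW} = \langle \vA_2^\top \vz_i, \cdot\rangle = 0$, so $\vz_i$ has zero net circulation around $\cC$. But every edge of $\cC$ is oriented in $G_i$ so that $[\vz_i]_e>0$, hence the circulation of $\vz_i$ around $\cC$ is a sum of strictly positive terms, which is positive — contradiction. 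Therefore every directed cycle of $G_i$ must be homologically nontrivial, which is exactly the no-short-circuiting claim. I would phrase this with the weighted inner product consistent with the definition of $\vA_\ell = \vW_{\ell-1}^{-1/2}\vB_\ell\vW_\ell^{1/2}$ so that the pairing is exact.

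The main obstacle I anticipate is the bookkeeping around the truncation step (removing $E_i^\times$) and degenerate edges where $[\vz_i]_e=0$: the clean algebraic arguments above apply to the full support of $\vz_i$, and I need to ensure that (a) thresholding at the $(1-1/\beta_1)$ percentile still leaves a subgraph carrying a full directed cycle representing the class, and (b) zero-flow edges, which are simply absent from both $E_i^+$ and $E_i^-$, do not create spurious reachability. I would handle (a) by noting that the positive-circulation argument in part (2) is monotone — removing edges only shrinks $G_i$, so any surviving directed cycle is still nontrivial — and that the homology class of $\vz_i$ is supported (after a homotopy) on its high-magnitude edges when the independent basis $\vZ$ genuinely isolates one class, which is the hypothesis of the proposition; (b) is immediate from the construction of $E_i^\pm$. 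Everything else reduces to the two one-line algebraic identities (divergence-free $\Rightarrow$ vertex balance, curl-free + non-boundary $\Rightarrow$ positive circulation is impossible on a trivial cycle) plus the elementary cycle-decomposition lemma for flows.
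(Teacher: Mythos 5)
Your proof follows essentially the same approach as the paper's: part (1) uses divergence-freeness of the harmonic cochain to rule out sources/sinks and conclude every flow-carrying vertex sits on a directed cycle, and part (2) uses curl-freeness (plus the fact that all edges of any directed cycle in $G_i$ carry strictly positive $[\vz_i]_e$) to show that a null-homologous cycle would force a contradictory zero circulation. Your write-up is more careful than the paper's brief sketch — in particular you explicitly handle the percentile-truncation step $E_i^\times$, the diagonal weighting in $\vA_2^\top$ versus $\vB_2^\top$, the flow-decomposition/cut argument needed to upgrade ``some cycle exists'' to ``every supported vertex lies on one,'' and the vacuous case of isolated vertices — but none of these constitutes a different proof strategy, just a tighter rendering of the same two facts the paper invokes.
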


Since every vertex is {\em reachable} from itself, we are guaranteed to find a
loop for any starting/ending pair (Lines \ref{algstp:for-all-edges}--\ref{algstp:assign-ci}). 
Additionally, there will be no short-circuiting for any loop; 
each loop we found from Dijkstra is guaranteed to be non-trivial.
However, there is one caveat from the second property:
even though the $i$-th loop is non-trivial, it might not always be 
corresponding to the $i$-th homology class due to the noise in small
$[\vz_i]_e$. Namely, loops that do not represent $i$-th homology class
can be formed with edges $e$ having small $[\vz_i]_e$,
resulting in the instability and the (possible) duplication of the 
identified loops. To address the issue, we propose a heuristic thresholding, 
by which we keep the $n_1/\beta_1$ edges with the largest absolute
value in $|[\vz_i]_e|$ (Lines \ref{algstp:thres-val}--\ref{algstp:remove-edges-smaller-thres}). 
We chose to keep $n_1/\beta_1$ by treating each homology class equally, i.e.,
each class has roughly $n_1/\beta_1$ edges.

Compared with previous approaches that find the shortest loops \cite{DeyTK.S.W+:10} combinatorially, our approach has better time complexity; specifically, the algorithm of \cite{DeyTK.S.W+:10} has time complexity
$\mathcal O(nn_1^3 + nn_1^2n_2)$, whereas Algorithm
\ref{alg:dijkstra-shortest-loop} runs in time $\mathcal
O(n_1^{2.37\cdots} + \beta_1^2n_1 + \beta_1n_1n\log n)$. The
first, second, and  third terms correspond to the time
complexity of eigendecomposition of $\LL_1$, the Infomax ICA, and the
Dijkstra algorithm on every digraph $G_i$,
respectively.  Note that if the simplicial complex is built from point
clouds, the number of triangles $n_2$ may be large; this
dependency on $n_2$  makes the algorithm
\cite{DeyTK.S.W+:10} hard to scale.
On the other hand, our framework requires that $\vz_i$ are each supported on one
homology class; therefore, 
loops can only be correctly identified using Algorithm \ref{alg:dijkstra-shortest-loop} if the manifold is sparsely connected (Assumptions \ref{assu:basic-ml-assu}--\ref{assu:degree-of-simplex-increase-less}).

\paragraph{Classifying any 2-dimensional manifold.}
The Betti number $\beta_1$ of a torus is 2, which is equal to that of two 
disjoint circles; hence one cannot distinguish these two manifolds 
{\em only} by rank information.
Fortunately, they can be categorized using the homology embedding $\vZ$.
By the classification theorem
\cite{ArmstrongMA:13}, any 2D surface is the connected sum of circles $\mathbb S^1$ and tori $\mathbb T^1$; therefore, Theorem
\ref{thm:subspace-perturb-bound-Lk} indicates that embedding lies approximately 
in the directed sum of homology subspace of $\mathbb S^1$
and/or $\mathbb T^2$.
The homology embedding of $\mathbb S^1$ is a line since it is in $\rrr^1$.
On the other hand,
any loop in a torus can be a convex combination of the two homology classes,
implying that the intrinsic dimension of the homology embedding is 2. 
It is hard to obtain $\vZ$ of any arbitrary torus; we present the homology embedding of the flat
$m$-torus below by expressing the null space basis (1-cochains) as the path 
integrals of the corresponding harmonic
1-forms \cite{WhitneyH:05,ChenYC.M.K+:21}.
\begin{proposition}
	The envelope of the first homology embedding (1-cochain) 
	induced by the harmonic 1-form on the flat m-torus $\mathbb T^m$ is an 
	$m$-dimensional ellipsoid.
\label{thm:harmonic-emb-of-Tk}
\end{proposition}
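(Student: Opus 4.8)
The plan is to make the harmonic $1$-forms on a flat torus completely explicit, read off the induced $1$-cochain as a linear functional of edge displacements, and then observe that the homology embedding is just the image of a Euclidean ball under an invertible linear map. Write $\mathbb T^m = \mathbb R^m/\Lambda$ with a flat metric. By Hodge theory $\dim\mathcal{H}_1(\mathbb T^m)=\beta_1=m$; since the curvature vanishes, the Weitzenb\"ock formula reduces the Hodge Laplacian on $1$-forms to the connection (Bochner) Laplacian, so on the compact torus a $1$-form is harmonic iff it is parallel, i.e.\ has constant coefficients $\zeta=\sum_{j=1}^m a_j\,\dd x_j$ in the flat coordinates. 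A homology basis $\zeta_1,\dots,\zeta_m$ of $\mathcal{H}_1(\mathbb T^m)$ (taken $L^2$-orthonormal, to mirror the discrete $\vY$) is then encoded by an invertible matrix $A\in\mathbb R^{m\times m}$ whose $i$-th row $a^{(i)}$ holds the coefficients of $\zeta_i$; $L^2$-orthonormality fixes $AA^\top$ up to the volume normalization, and changing the basis replaces $A$ by $OA$ with $O$ orthogonal, which matches the unitary ambiguity of $\vY$.

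Next I would evaluate the induced $1$-cochain. A short edge $e$ of the neighbourhood/VR complex joins $\vx$ to $\vy$ with displacement $\vv=\vy-\vx$; because short geodesics on a flat torus are straight segments, the path-integral formula for cochains gives $\zeta_i(e)=\int_\vx^\vy\zeta_i=\langle a^{(i)},\vv\rangle$, independent of the base point $\vx$. Stacking the $m$ coordinates, the homology embedding sends $e\mapsto A\vv\in\mathbb R^m$. In the neighbourhood complex the admissible displacements at any point are exactly the vectors with $\|\vv\|\le\epsilon$, and as $n\to\infty$ they fill this ball densely; hence the homology embedding of the edges converges to the solid ellipsoid $A\{\,\vv:\|\vv\|\le\epsilon\,\}=\{u\in\mathbb R^m:\ u^\top(AA^\top)^{-1}u\le\epsilon^2\}$. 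Its envelope — the boundary of this convex body, equivalently of the support of the embedding — is the ellipsoid $\{u:\ u^\top(AA^\top)^{-1}u=\epsilon^2\}$ in $\mathbb R^m$, with principal directions the eigenvectors of $AA^\top$ and semi-axes $\epsilon\sqrt{\lambda_i(AA^\top)}$; for the round flat torus $A$ is a multiple of the identity and the envelope degenerates to a sphere, while a skew lattice or anisotropic flat metric produces a genuinely eccentric ellipsoid.

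The step I expect to be the main obstacle is the discrete-to-continuum argument rather than the algebra: one must show that the displacement vectors of $\epsilon$-edges equidistribute over $\{\|\vv\|\le\epsilon\}$ as the sample grows — a consequence of the sampling-density assumptions underlying the VR construction — so that "envelope" can be legitimately read as the boundary of the limiting support; and one must take $\epsilon$ below the injectivity radius of $\mathbb T^m$ so that every $\epsilon$-edge has a unique minimizing geodesic, making the cochain value $\langle a^{(i)},\vv\rangle$ well defined. A secondary subtlety is keeping track of the two metrics in play — the intrinsic flat metric that decides which edges exist versus the ambient coordinates in which $A\vv$ is plotted — since the eccentricity of the ellipsoid encodes precisely their interaction together with the chosen harmonic basis.
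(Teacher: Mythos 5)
Your proof is correct and takes essentially the same route as the paper's: identify the harmonic $1$-forms on $\mathbb T^m$ as constant-coefficient forms, observe that the cochain value of an edge is then a linear function of its displacement vector, and conclude that the image of the ball of admissible displacements is an ellipsoid. The paper implicitly works with the diagonal case $A=\diag(v_1,\dots,v_m)$ and parametrizes displacements in polar coordinates, writing the envelope as $[\delta v_1 f_1(\vec\Phi),\dots,\delta v_m f_m(\vec\Phi)]$; your matrix formulation $e\mapsto A\vv$ with envelope $\{u:\ u^\top(AA^\top)^{-1}u=\epsilon^2\}$ is cleaner, handles a non-axis-aligned basis, and makes the unitary ambiguity ($A\mapsto OA$) explicit, while your Weitzenb\"ock argument and the injectivity-radius/equidistribution caveats supply rigor the paper only gestures at. No gap; same core idea, tighter execution.
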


The proof (in \suppname{} \ref{sec:proofs-of-applications-props}) is straightforward thus is omitted here.
Proposition \ref{thm:harmonic-emb-of-Tk} and the classification theorem
suggest that the first homology embedding is either a line, a disk, or
a combination of the two (with replacement). See an example for the genus-2 surface in 
Figures \figref{fig:hemb-genus-2} and \ref{fig:pairplot-gen2}.

\paragraph{Other applications.}
As pointed out earlier, one can visualize the basis of the harmonic vector fields
(of $\cH_k$) by overlaying the columns of $\vY$ onto the original dataset (Figure \ref{fig:genus-2-harmonic-flow}). 
Being able to successfully extract a decoupled basis $\vZ$ increases the 
interpretability of $\cH_k$, as shown in the second row of Figure \ref{fig:genus-2-harmonic-flow}.
Theorem \ref{thm:subspace-perturb-bound-Lk}
also  
supports the use of subspace clustering algorithm 
in the higher-order simplex clustering framework \cite{EbliS.S:19}.

 \section{Experiments}
\label{sec:experiments}
We demonstrate our approach by computing $\vY,\vZ$ and the shortest
loops for five synthetic manifolds: two of them are prime manifolds
(\dattorus{} {\em torus}, \datthreetorus{} {\em three-torus}) and
three (\dattwoholes{} {\em punctured plane with two holes},
\datgenustwo{} {\em genus-2}, and \datgenusfour{} {\em concatenation of 4 tori}) are factorizable manifolds.  Furthermore, five additional
real point clouds (\datethanol{} and \datmda{} from chemistry,
\datrna{} from biology, \datbudd{} from 3D modeling, and \datisland{}
from oceanography) are analyzed under this framework. For all the
point clouds, we build the VR complex $\scx$ from the CkNN kernel
\cite{BerryT.S:19} so that the resulting $\LL_1$ is sparse and the
topological information is preserved.
Note that other methods for building an $\scx$ from $\vX$ can also be used
as long as $\cH_k$ is successfully identified (Assumption
\ref{assu:basic-ml-assu}).
Lastly, we illustrate the
efficacy of our framework to a non-manifold data: \datretina{}
from medical imaging.
Please refer to \suppname{} \ref{sec:supp-datasets-detail} for
detailed discussions on procedures to generate, preprocess, and
download these datasets.  All experiments are replicated more than
five times with similar results.  We perform our analysis on a desktop
running Linux with 32GB RAM and an 8-Core 4.20GHz
Intel\textregistered~Core\texttrademark~i7-7700K CPU; every experiment
completes within 3 minutes (1-2 minutes on eigendecomposition of
$\LL_1$, and around 30 seconds on both ICA and Algorithm
\ref{alg:dijkstra-shortest-loop}).

\paragraph{Synthetic manifolds.} The results for the synthetic manifolds
are in Figure \ref{fig:synth-datasets}. Figure \figref{fig:hemb-2-loops}
(the harmonic embedding of \dattwoholes{}) confirms
Theorem \ref{thm:subspace-perturb-bound-Lk} that
$\vY$ is approximately distributed on two subspaces (yellow and red), with each 
loop parametrizing a single hole (inset of Figure \figref{fig:hemb-2-loops}).
As discussed previously in Figure \ref{fig:genus-2-harmonic-flow}, the harmonic 
vector bases (green and blue) are mixtures of the separate subspaces; therefore, these bases
have poor interpretability compared with the independent subspace $\vZ$
identified by Algorithm \ref{alg:ica-no-prewhite}. 
The shortest loops
(Figure \figref{fig:detected-cycles-2-loops}) corresponding to $\vz_1$ (yellow),
$\vz_2$ (red) are obtained by running Dijkstra on the digraphs induced by
$\vz_1$ and $\vz_2$ separately (Algorithm \ref{alg:dijkstra-shortest-loop}).
Figures \figref{fig:detected-cycles-2-torus}--\figref{fig:hemb-3-torus}
show the results of the two simple {\em prime manifolds}:
\dattorus{} and \datthreetorus{}. The harmonic embeddings of \dattorus{} (Figure
\figref{fig:hemb-2-torus}) and \datthreetorus{} (Figure \figref{fig:hemb-3-torus})
are a two-dimensional disk and a three-dimensional ellipsoid, respectively;
this confirms the conclusion from Proposition \ref{thm:harmonic-emb-of-Tk}.
The shortest loops obtained from Algorithm \ref{alg:dijkstra-shortest-loop}
for these two datasets are in Figures \figref{fig:detected-cycles-2-torus}
and \figref{fig:detected-cycles-3-torus}, showing that these loops
travel around the holes in \dattorus{} (or \datthreetorus{}). Note that
we plot \datthreetorus{} in the intrinsic coordinate because
a three torus can not be embedded in 3D without breaking neighborhood 
relationships. Three lines in
\figref{fig:detected-cycles-3-torus} are indeed loops due to the periodic
boundary condition, i.e., $0 = 2\pi$, in the intrinsic coordinate. 
Figures \figref{fig:hemb-genus-2-coupled} and \figref{fig:hemb-genus-2}
show the embedding of the coupled harmonic basis ($\vY$) and that
corresponding to the independent subspace ($\vZ$) obtained by Algorithm \ref{alg:ica-no-prewhite}.
Compared with $\vY$, each coordinate of $\vZ$ corresponds to a subspace, i.e.,
the left or right handle of \datgenustwo, and does not couple with other homology
generators. $\vZ$ is thus a union of two 2D disks,
with each disk approximating the harmonic embedding of a torus 
(see Figure \ref{fig:pairplot-gen2} for more detail).
Compared with the loops obtained by running Algorithm 
\ref{alg:dijkstra-shortest-loop} on $\vY$ (Figure \figref{fig:detected-cycles-genus-2-coupled}), 
each loop in Figure \figref{fig:detected-cycles-genus-2} identified from $\vZ$  
parameterizes the corresponding homology generator without being homologous to 
other loops. Similar results on \datgenusfour{} are in Figures
\figref{fig:detected-cycles-genus-4-coupled} and \figref{fig:detected-cycles-genus-4},
which correspond to the loops obtained from $\vY$ and $\vZ$, respectively.
The pairwise scatter plots of the eight-dimensional $\vZ$ (or $\vY$)
are in Figure \ref{fig:pairplot-gen4} of \suppname{} \ref{sec:supp-datasets-detail}.
Note that \dattwoholes{} is an example of a sparsely connected manifold
(see the low-density area in the middle), with $\epsilon_1 \approx 0.035$
and $\epsilon_0 \approx 0.038$. Manifolds of other synthetic/real datasets 
might not be sparsely connected due to the (approximately) constant
sampling densities; nevertheless, the perturbations to the subspaces
remain small for these datasets.
\begin{figure}[t]
    \centering
    \includegraphics[width=0.99\textwidth]{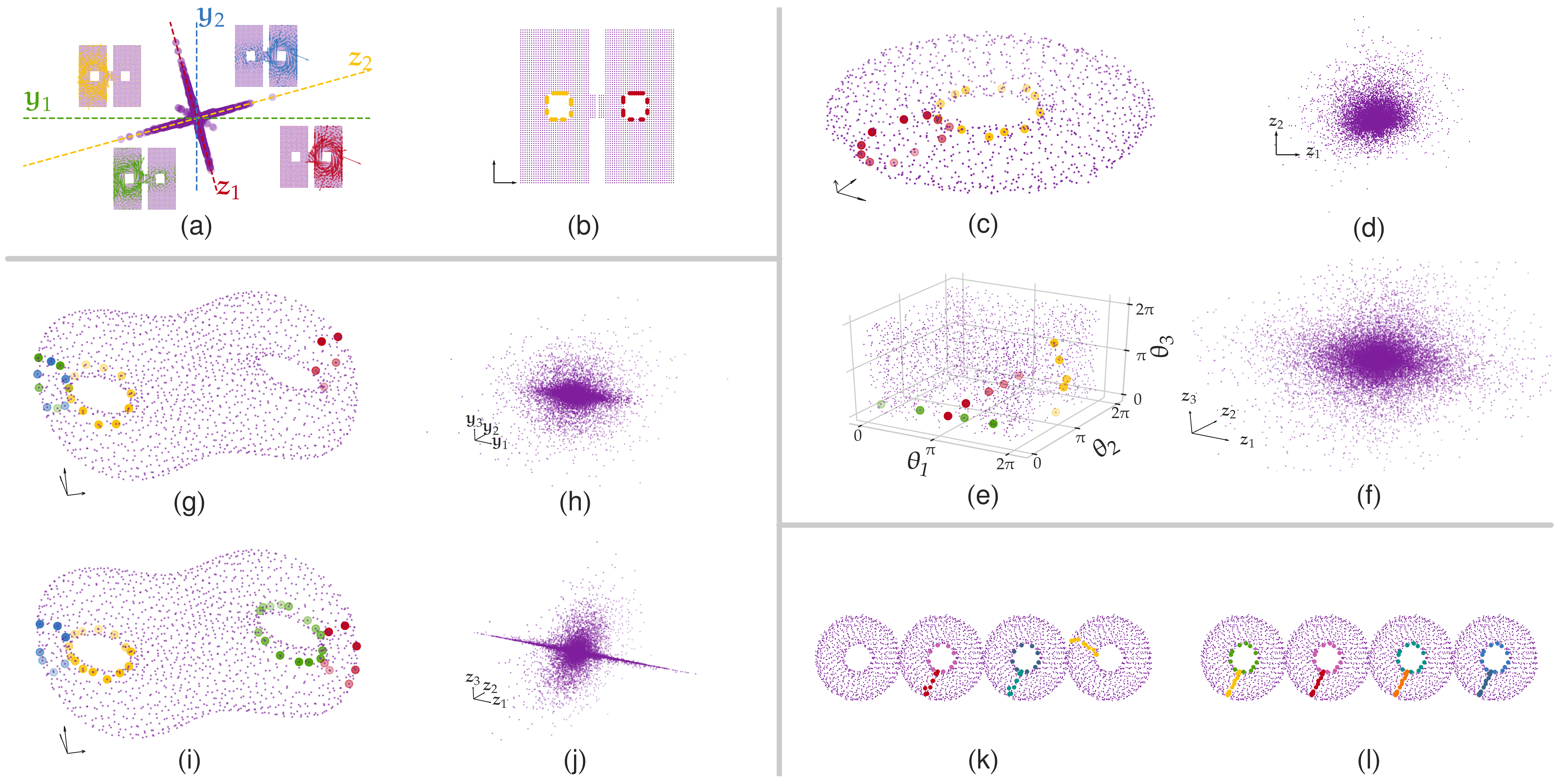}
    \caption{(a) The first homology embedding of \dattwoholes. The harmonic vector fields are overlaid on the data in the inset plots; green,  blue, red, and yellow arrows correspond to $\vy_1$, $\vy_2$, $\vz_1$, and $\vz_2$, respectively. (b), (c), (e), (i), and (l) are the detected loops using Dijkstra on $\vZ$ for \dattwoholes{} (colors are in (a)), \dattorus, \datthreetorus, \datgenustwo, and \datgenusfour, respectively. (g) and (k) represent the identified loops on the coupled embedding $\vY$ for \datgenustwo{} and \datgenusfour, respectively. (d), (f), (h), and (j) present the embeddings used to detect loops in (c), (e), (g), and (i), respectively.}
    \label{fig:synth-datasets}
\end{figure}

\paragraph{Small molecule data \cite{ChmielaS.T.S+:17}.}
Figures \figref{fig:detected-cycles-eth}--\figref{fig:hemb-eth}
and \figref{fig:detected-cycles-mda}--\figref{fig:hemb-mda} show our
analysis on \datethanol{} and \datmda{}, respectively. These two small
molecule datasets, whose ambient dimensions are $D=102$ and $D=98$,
are suggested to be noisy non-uniformly sampled tori \cite{TaborM:89};
the harmonic embeddings of these two datasets (Figures \figref{fig:hemb-eth}
and \figref{fig:hemb-mda} ) confirm this idea.
Finding the minimum trajectories corresponding to a specific bond torsion
is of interest in chemistry; in these two molecular dynamics systems, this problem
can be translated into finding the homologous loops in the point cloud.
The homologous loops found
by Algorithm \ref{alg:dijkstra-shortest-loop} overlaid on the first three
{\em principal components} (PCs) for these two datasets can be found in
Figures \figref{fig:detected-cycles-eth-pca} (for \datethanol{}) and 
\figref{fig:detected-cycles-mda-pca} (for \datmda{}). The identical homologous
loops
plot in the bond torsion space (with definition in the insets) based on our prior 
knowledge are in Figures \figref{fig:detected-cycles-eth}
and \figref{fig:detected-cycles-mda}. Similar to the discussion for \datthreetorus{}
(Figure \figref{fig:detected-cycles-3-torus}), the yellow/red trajectories
form loops due to the periodic boundary condition of the bond torsions.

\begin{figure}[t]
    \centering
    \includegraphics[width=0.99\textwidth]{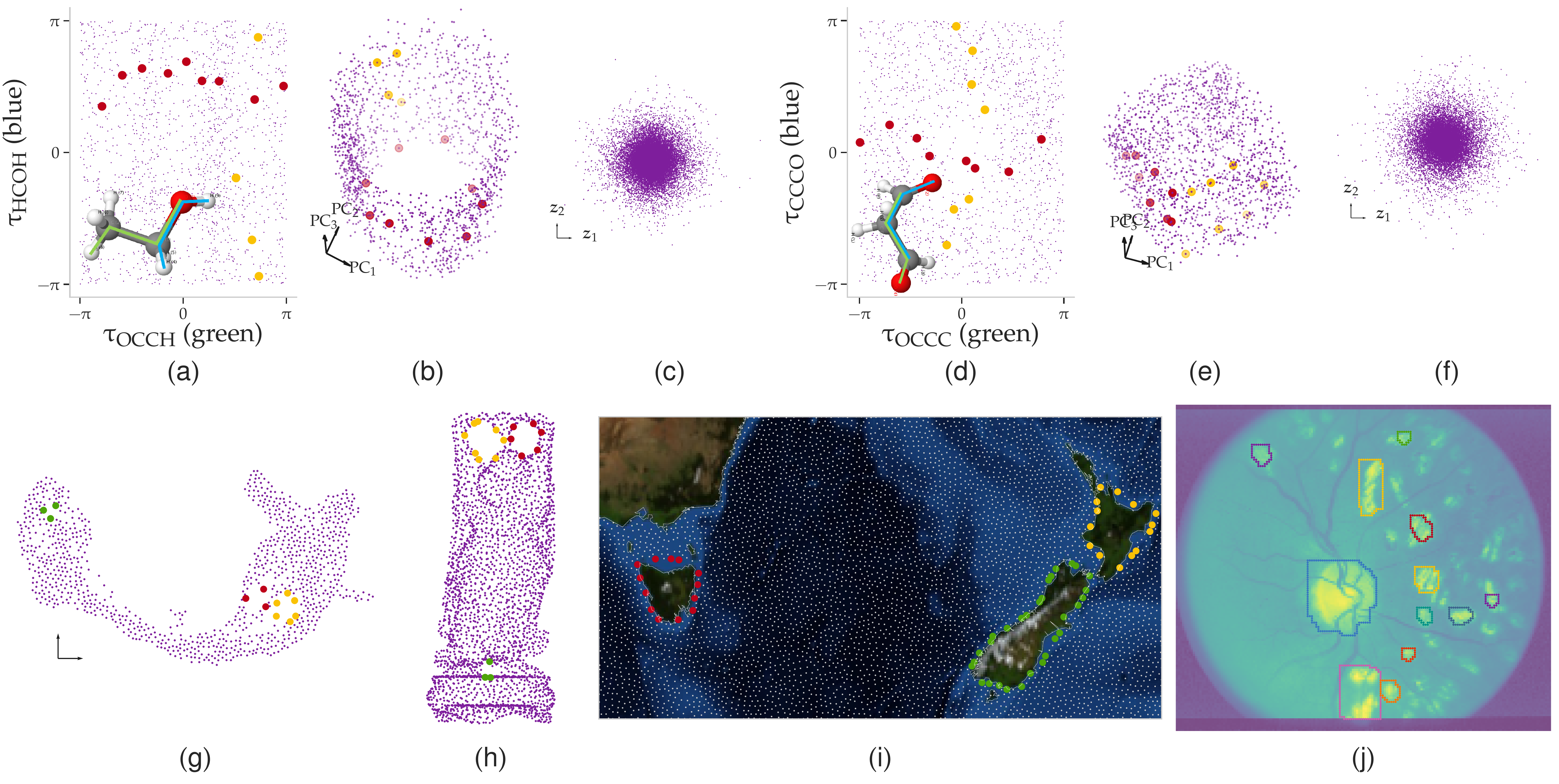}
    \caption{(a) and (b) are the detected loops of \datethanol{} using Dijkstra on $\vZ$ (in (c)) in the torsion space (inset of (a)) and in the PCA space, respectively. (d)--(f) are the results for \datmda{} that are similar to those for \datethanol{} in (a)--(c). (g)--(j) show the identified loops using $\vZ$ for \datrna, \datbudd, \datisland, and \datretina, respectively.}
    \label{fig:real-datasets}
\end{figure}

\paragraph{RNA single-cell sequencing data \cite{BergenV.L.P+:20}.}
The {\em trajectory inference} methods \cite{SaelensW.C.T+:19} for
analyzing the RNA single-cell sequencing datasets aim to order the cells
(points in high-dimensional expression space) along developmental
trajectories, which are inferred from the structure of the point clouds.
Identifying loops in the dataset can serve as a building block for
delineating a correct trajectory, especially for determining
cell cycle and cell differentiation. To illustrate the idea,
we compute the 1-Laplacian on the CkNN kernel \cite{BerryT.S:19}
constructed on the UMAP \cite{McInnesL.H.M+:18} embedding (Algorithm 
\ref{alg:ica-no-prewhite}). Figure
\figref{fig:detected-cycles-rna} shows the identified
loops from Algorithm \ref{alg:dijkstra-shortest-loop}, with the green loop 
being the cycle of ductal cells 
and yellow/red loops representing a trifurcation (endocrine cell
differentiation).

\paragraph{Additional point cloud datasets.}
\datbudd{} \cite{CurlessB.L:96} is a 3D model of a Buddha statue
with a precomputed triangulation. We treat the 3D model as a point cloud and 
subsample 3000 farthest points from the original dataset;  $\LL_1$
is obtained from the VR complex of the CkNN kernel. Note that with this small sample size, two smaller loops near the waist of the statue are not detectable.
Hence, the number of zero eigenvalues of $\LL_1$ is $3$, with
the corresponding homology generators shown in Figure \figref{fig:detected-cycles-3d-surface}.
\datisland{} \cite{FroylandG.P:15}, which contains ocean buoys around the Tasman sea, 
is the other point cloud in our analysis.
The estimated $\beta_1$ is $3$, with the detected loops being the North Island
of New Zealand, the South Island of New Zealand, and the main island of Tasmania
(Figure \figref{fig:detected-cycles-islands-aus-nz}).

\paragraphbf{Non-manifold dataset.}
Our framework for identifying subspaces is still valid for 
cubical complexes built from images (by Corollary 
\ref{thm:subspace-perturb-bound-Lk-cub}).
We demonstrate the idea
on \datretina{}, a medical retinal image
\cite{HooverA.G:03}. The cubical complex is constructed by intensity 
thresholding (also called the sub-level set method in TDA \cite{WassermanL:18}) 
and then applying morphological 
closing on the binary image to remove small cavities. The
weight for every rectangle $\vw_2(\sigma)$ is set to 1; the estimated 
null space dimension of the $\LL_1$ built from $\cub$ is $\beta_1 = 12$, 
with the identified homologous loops in Figure \figref{fig:detected-cycles-retina}.
The result shows the robustness of the proposed framework even for large $\beta_1$.

\section{Conclusion}
\label{sec:discussion-conclusion}
Our contributions in the emerging field of spectral algorithms for 
$k$-Laplacians $\LL_k$ 
\cite{ChenYC.M.K+:21,SchaubMT.B.H+:20,LimLH:20,EbliS.S:19} are summarized as follows.
(i) We extend the study of the homology embedding of vertices
by the graph Laplacian $\LL_0$ (spectral clustering) to those of higher-order
simplices by $\LL_k$.
Specifically, the $k$-th homology embedding can be approximately 
factorized into parts,
with each corresponding to a prime manifold given a small perturbation
(small $\epsilon_\ell$ and $\epsilon'_\ell$ for $\ell = k, k-1$). 
(ii) The analysis is 
made possible by expressing the $\kappa$-fold {\em connected sum} as a 
matrix perturbation. This convenient property of the homology embedding
supports (iii) the use of ICA to identify each decoupled subspace and motivates
(iv) the application to the 
shortest homologous loop detection problem.

Our analysis provides insight into the structure of the $k$-th harmonic 
embedding. This framework can inspire researchers in developing {\em
  spectral} topological data analysis algorithms (e.g., visualization,
clustering, tightest higher-order {\em cycles} for $k\geq 2$
\cite{DeyTK.H.M+:20,ObayashiI:17}) similar to those that were
inaugurated by spectral clustering two decades ago.  These
applications are especially beneficial to scientists (chemists,
biologists, oceanographers, etc.) who use high-dimensional data
analysis techniques for studying complex systems. 
Similar to the limitation of other unsupervised learning algorithms, practitioners without 
solid understandings of {\em both} the analyzed datasets and the used 
algorithm 
might draw controversial conclusions (see, e.g., discussions in
\cite{NovembreJ.S:08,Alquicira-HernandezJ.P.P+:20}).
Possible approaches to mitigate the negative consequences are to design
proper validation and causal inference algorithms
for this framework;
we leave them as potential directions we will explore.

\section*{Acknowledgements}
The authors acknowledge partial support from the U.S. Department of Energy's Office of Energy Efficiency and Renewable Energy (EERE) under the Solar Energy Technologies Office Award Number DE-EE0008563 and from
the National Science Foundation award DMS 2015272.
They thank the Tkatchenko
and Pfaendtner labs and in particular to Stefan Chmiela and Chris Fu for
providing the molecular
dynamics data and for many hours of brainstorming and advice.

\section*{Disclaimer}
The views expressed herein do not necessarily represent the views of the U.S. Department of Energy or the United States Government.


\begin{thebibliography}{45}
\providecommand{\natexlab}[1]{#1}
\providecommand{\url}[1]{\texttt{#1}}
\expandafter\ifx\csname urlstyle\endcsname\relax
  \providecommand{\doi}[1]{doi: #1}\else
  \providecommand{\doi}{doi: \begingroup \urlstyle{rm}\Url}\fi

\bibitem[Ali et~al.(2007)Ali, Basharat, and Shah]{AliS.B.S+:07}
Saad Ali, Arslan Basharat, and Mubarak Shah.
\newblock Chaotic {{Invariants}} for {{Human Action Recognition}}.
\newblock In \emph{2007 {{IEEE}} 11th {{International Conference}} on
  {{Computer Vision}}}, pages 1--8, October 2007.
\newblock \doi{10.1109/ICCV.2007.4409046}.

\bibitem[{Alquicira-Hernandez} et~al.(2020){Alquicira-Hernandez}, Powell, and
  Phan]{Alquicira-HernandezJ.P.P+:20}
Jose {Alquicira-Hernandez}, Joseph Powell, and Tri~Giang Phan.
\newblock No evidence that plasmablasts transdifferentiate into developing
  neutrophils in severe {{COVID}}-19 disease.
\newblock \emph{bioRxiv}, 2020.

\bibitem[Armstrong(2013)]{ArmstrongMA:13}
Mark~Anthony Armstrong.
\newblock \emph{Basic Topology}.
\newblock {Springer Science \& Business Media}, 2013.

\bibitem[Barbarossa and Sardellitti(2020)]{BarbarossaS.S:20a}
Sergio Barbarossa and Stefania Sardellitti.
\newblock Topological signal processing over simplicial complexes.
\newblock \emph{IEEE Transactions on Signal Processing}, 68:\penalty0
  2992--3007, 2020.

\bibitem[Belkin and Niyogi(2003)]{BelkinM.N:03}
Mikhail Belkin and Partha Niyogi.
\newblock Laplacian {{Eigenmaps}} for {{Dimensionality Reduction}} and {{Data
  Representation}}.
\newblock \emph{Neural Computation}, 15\penalty0 (6):\penalty0 1373--1396, June
  2003.
\newblock ISSN 0899-7667.
\newblock \doi{10.1162/089976603321780317}.

\bibitem[Bell and Sejnowski(1995)]{BellAJ.S:95}
Anthony~J. Bell and Terrence~J. Sejnowski.
\newblock An information-maximization approach to blind separation and blind
  deconvolution.
\newblock \emph{Neural computation}, 7\penalty0 (6):\penalty0 1129--1159, 1995.

\bibitem[Bergen et~al.(2020)Bergen, Lange, Peidli, Wolf, and
  Theis]{BergenV.L.P+:20}
Volker Bergen, Marius Lange, Stefan Peidli, F.~Alexander Wolf, and Fabian~J.
  Theis.
\newblock Generalizing {{RNA}} velocity to transient cell states through
  dynamical modeling.
\newblock \emph{Nature biotechnology}, 38\penalty0 (12):\penalty0 1408--1414,
  2020.

\bibitem[Berry and Sauer(2019)]{BerryT.S:19}
Tyrus Berry and Timothy Sauer.
\newblock Consistent manifold representation for topological data analysis.
\newblock \emph{Foundations of Data Science}, 1\penalty0 (1):\penalty0 1, 2019.
\newblock \doi{10.3934/fods.2019001}.

\bibitem[Bokor et~al.(2020)Bokor, Crowley, Friedl, Hebestreit, Kasprowski,
  Land, and Nicholson]{BokorI.C.F+:20}
Imre Bokor, Diarmuid Crowley, Stefan Friedl, Fabian Hebestreit, Daniel
  Kasprowski, Markus Land, and Johnny Nicholson.
\newblock Connected sum decompositions of high-dimensional manifolds.
\newblock \emph{arXiv:1909.02628 [math]}, September 2020.

\bibitem[Chen et~al.(2021)Chen, Meil{\u a}, and Kevrekidis]{ChenYC.M.K+:21}
Yu-Chia Chen, Marina Meil{\u a}, and Ioannis~G. Kevrekidis.
\newblock Helmholtzian {{Eigenmap}}: {{Topological}} feature discovery \& edge
  flow learning from point cloud data.
\newblock \emph{arXiv:2103.07626 [stat.ML]}, March 2021.

\bibitem[Chmiela et~al.(2017)Chmiela, Tkatchenko, Sauceda, Poltavsky,
  Sch{\"u}tt, and M{\"u}ller]{ChmielaS.T.S+:17}
Stefan Chmiela, Alexandre Tkatchenko, Huziel~E Sauceda, Igor Poltavsky,
  Kristof~T Sch{\"u}tt, and Klaus-Robert M{\"u}ller.
\newblock Machine learning of accurate energy-conserving molecular force
  fields.
\newblock \emph{Science advances}, 3\penalty0 (5):\penalty0 e1603015, 2017.

\bibitem[Coifman and Lafon(2006)]{CoifmanRR.L:06}
Ronald~R. Coifman and St{\'e}phane Lafon.
\newblock Diffusion maps.
\newblock \emph{Applied and Computational Harmonic Analysis}, 21\penalty0
  (1):\penalty0 5--30, July 2006.
\newblock ISSN 1063-5203.
\newblock \doi{10.1016/j.acha.2006.04.006}.

\bibitem[Curless and Levoy(1996)]{CurlessB.L:96}
Brian Curless and Marc Levoy.
\newblock A volumetric method for building complex models from range images.
\newblock In \emph{Proceedings of the 23rd Annual Conference on {{Computer}}
  Graphics and Interactive Techniques}, pages 303--312, 1996.

\bibitem[Dey et~al.(2010)Dey, Sun, and Wang]{DeyTK.S.W+:10}
Tamal~K. Dey, Jian Sun, and Yusu Wang.
\newblock Approximating loops in a shortest homology basis from point data.
\newblock In \emph{Proceedings of the Twenty-Sixth Annual Symposium on
  {{Computational}} Geometry}, pages 166--175, 2010.

\bibitem[Dey et~al.(2020)Dey, Hou, and Mandal]{DeyTK.H.M+:20}
Tamal~K. Dey, Tao Hou, and Sayan Mandal.
\newblock Computing minimal persistent cycles: {{Polynomial}} and hard cases.
\newblock In \emph{Proceedings of the {{Fourteenth Annual ACM}}-{{SIAM
  Symposium}} on {{Discrete Algorithms}}}, pages 2587--2606. {SIAM}, 2020.

\bibitem[Ebli and Spreemann(2019)]{EbliS.S:19}
Stefania Ebli and Gard Spreemann.
\newblock A {{Notion}} of {{Harmonic Clustering}} in {{Simplicial Complexes}}.
\newblock \emph{2019 18th IEEE International Conference On Machine Learning And
  Applications (ICMLA)}, pages 1083--1090, December 2019.
\newblock \doi{10.1109/ICMLA.2019.00182}.

\bibitem[Froyland and {Padberg-Gehle}(2015)]{FroylandG.P:15}
Gary Froyland and Kathrin {Padberg-Gehle}.
\newblock A rough-and-ready cluster-based approach for extracting finite-time
  coherent sets from sparse and incomplete trajectory data.
\newblock \emph{Chaos: An Interdisciplinary Journal of Nonlinear Science},
  25\penalty0 (8):\penalty0 087406, July 2015.
\newblock ISSN 1054-1500.
\newblock \doi{10.1063/1.4926372}.

\bibitem[Hoover and Goldbaum(2003)]{HooverA.G:03}
Adam Hoover and Michael Goldbaum.
\newblock Locating the optic nerve in a retinal image using the fuzzy
  convergence of the blood vessels.
\newblock \emph{IEEE transactions on medical imaging}, 22\penalty0
  (8):\penalty0 951--958, 2003.

\bibitem[Horak and Jost(2013)]{HorakD.J:13}
Danijela Horak and J{\"u}rgen Jost.
\newblock Spectra of combinatorial {{Laplace}} operators on simplicial
  complexes.
\newblock \emph{Advances in Mathematics}, 244:\penalty0 303--336, September
  2013.
\newblock ISSN 0001-8708.
\newblock \doi{10.1016/j.aim.2013.05.007}.

\bibitem[Joncas et~al.(2017)Joncas, Meila, and McQueen]{JoncasD.M.M+:17}
Dominique Joncas, Marina Meila, and James McQueen.
\newblock Improved {{Graph Laplacian}} via {{Geometric Self}}-{{Consistency}}.
\newblock In I.~Guyon, U.~V. Luxburg, S.~Bengio, H.~Wallach, R.~Fergus,
  S.~Vishwanathan, and R.~Garnett, editors, \emph{Advances in {{Neural
  Information Processing Systems}} 30}, pages 4457--4466. {Curran Associates,
  Inc.}, 2017.

\bibitem[Kailing et~al.(2004)Kailing, Kriegel, and
  Kr{\"o}ger]{KailingK.K.K+:04}
Karin Kailing, Hans-Peter Kriegel, and Peer Kr{\"o}ger.
\newblock Density-connected subspace clustering for high-dimensional data.
\newblock In \emph{Proceedings of the 2004 {{SIAM}} International Conference on
  Data Mining}, pages 246--256. {SIAM}, 2004.

\bibitem[{Kovacev-Nikolic} et~al.(2016){Kovacev-Nikolic}, Bubenik, Nikoli{\'c},
  and Heo]{Kovacev-NikolicV.B.N+:16}
Violeta {Kovacev-Nikolic}, Peter Bubenik, Dragan Nikoli{\'c}, and Giseon Heo.
\newblock Using persistent homology and dynamical distances to analyze protein
  binding.
\newblock \emph{Statistical applications in genetics and molecular biology},
  15\penalty0 (1):\penalty0 19--38, 2016.

\bibitem[Lee(2013)]{LeeJM:13}
John~M. Lee.
\newblock Introduction to {{Smooth Manifolds}}.
\newblock In \emph{Introduction to {{Smooth Manifolds}}}, pages 1--31.
  {Springer}, 2013.

\bibitem[Lim(2020)]{LimLH:20}
Lek-Heng Lim.
\newblock Hodge laplacians on graphs.
\newblock \emph{Siam Review}, 62\penalty0 (3):\penalty0 685--715, 2020.

\bibitem[McInnes et~al.(2018)McInnes, Healy, and Melville]{McInnesL.H.M+:18}
Leland McInnes, John Healy, and James Melville.
\newblock Umap: {{Uniform}} manifold approximation and projection for dimension
  reduction.
\newblock \emph{arXiv preprint arXiv:1802.03426}, 2018.

\bibitem[Meil{\u a} and Shi(2001)]{MeilaM.S:01}
Marina Meil{\u a} and Jianbo Shi.
\newblock A random walks view of spectral segmentation.
\newblock In \emph{International {{Workshop}} on {{Artificial Intelligence}}
  and {{Statistics}}}, pages 203--208. {PMLR}, 2001.

\bibitem[Milnor(1962)]{MilnorJ:62}
John Milnor.
\newblock A unique decomposition theorem for 3-manifolds.
\newblock \emph{American Journal of Mathematics}, 84\penalty0 (1):\penalty0
  1--7, 1962.

\bibitem[Ng et~al.(2002)Ng, Jordan, and Weiss]{NgAY.J.W+:02}
Andrew~Y. Ng, Michael~I. Jordan, and Yair Weiss.
\newblock On spectral clustering: {{Analysis}} and an algorithm.
\newblock \emph{Advances in neural information processing systems}, 2:\penalty0
  849--856, 2002.

\bibitem[Novembre and Stephens(2008)]{NovembreJ.S:08}
John Novembre and Matthew Stephens.
\newblock Interpreting principal component analyses of spatial population
  genetic variation.
\newblock \emph{Nature genetics}, 40\penalty0 (5):\penalty0 646--649, May 2008.
\newblock ISSN 1061-4036.
\newblock \doi{10.1038/ng.139}.

\bibitem[Obayashi(2017)]{ObayashiI:17}
Ippei Obayashi.
\newblock Volume {{Optimal Cycle}}: {{Tightest}} representative cycle of a
  generator on persistent homology.
\newblock \emph{arXiv:1712.05103 [cs, math]}, December 2017.

\bibitem[Otter et~al.(2017)Otter, Porter, Tillmann, Grindrod, and
  Harrington]{OtterN.P.T+:17}
Nina Otter, Mason~A. Porter, Ulrike Tillmann, Peter Grindrod, and Heather~A.
  Harrington.
\newblock A roadmap for the computation of persistent homology.
\newblock \emph{EPJ Data Science}, 6\penalty0 (1):\penalty0 17, December 2017.
\newblock ISSN 2193-1127.
\newblock \doi{10.1140/epjds/s13688-017-0109-5}.

\bibitem[Saelens et~al.(2019)Saelens, Cannoodt, Todorov, and
  Saeys]{SaelensW.C.T+:19}
Wouter Saelens, Robrecht Cannoodt, Helena Todorov, and Yvan Saeys.
\newblock A comparison of single-cell trajectory inference methods.
\newblock \emph{Nature biotechnology}, 37\penalty0 (5):\penalty0 547--554,
  2019.

\bibitem[Saggar et~al.(2018)Saggar, Sporns, {Gonzalez-Castillo}, Bandettini,
  Carlsson, Glover, and Reiss]{SaggarM.S.G+:18}
Manish Saggar, Olaf Sporns, Javier {Gonzalez-Castillo}, Peter~A. Bandettini,
  Gunnar Carlsson, Gary Glover, and Allan~L. Reiss.
\newblock Towards a new approach to reveal dynamical organization of the brain
  using topological data analysis.
\newblock \emph{Nature communications}, 9\penalty0 (1):\penalty0 1--14, 2018.

\bibitem[Schaub et~al.(2020)Schaub, Benson, Horn, Lippner, and
  Jadbabaie]{SchaubMT.B.H+:20}
Michael~T. Schaub, Austin~R. Benson, Paul Horn, Gabor Lippner, and Ali
  Jadbabaie.
\newblock Random walks on simplicial complexes and the normalized {{Hodge}}
  1-{{Laplacian}}.
\newblock \emph{SIAM Review}, 62\penalty0 (2):\penalty0 353--391, 2020.

\bibitem[Schiebinger et~al.(2015)Schiebinger, Wainwright, and
  Yu]{SchiebingerG.W.Y+:15}
Geoffrey Schiebinger, Martin~J. Wainwright, and Bin Yu.
\newblock The geometry of kernelized spectral clustering.
\newblock \emph{Annals of Statistics}, 43\penalty0 (2):\penalty0 819--846,
  April 2015.
\newblock ISSN 0090-5364, 2168-8966.
\newblock \doi{10.1214/14-AOS1283}.

\bibitem[Singh et~al.(2014)Singh, Couture, Marron, Perou, and
  Niethammer]{SinghN.C.M+:14}
Nikhil Singh, Heather~D. Couture, J.~S. Marron, Charles Perou, and Marc
  Niethammer.
\newblock Topological descriptors of histology images.
\newblock In \emph{International {{Workshop}} on {{Machine Learning}} in
  {{Medical Imaging}}}, pages 231--239. {Springer}, 2014.

\bibitem[Stewart et~al.(1990)Stewart, Stewart, and Sun]{StewartGW.S.S+:90}
G.~W. Stewart, J.~W. Stewart, and Ji-guang Sun.
\newblock \emph{Matrix {{Perturbation Theory}}}.
\newblock {Elsevier Science}, July 1990.
\newblock ISBN 978-0-12-670230-9.

\bibitem[Tabor(1989)]{TaborM:89}
Michael Tabor.
\newblock \emph{Chaos and Integrability in Nonlinear Dynamics: An
  Introduction}.
\newblock {Wiley}, 1989.

\bibitem[Ting et~al.(2010)Ting, Huang, and Jordan]{TingD.H.J+:10}
Daniel Ting, Ling Huang, and Michael Jordan.
\newblock An {{Analysis}} of the {{Convergence}} of {{Graph Laplacians}}.
\newblock In \emph{Proceedings of the 27th {{International Conference}} on
  {{Machine Learning}} ({{ICML}})}, 2010.

\bibitem[{von Luxburg}(2007)]{vonLuxburgU:07}
Ulrike {von Luxburg}.
\newblock A {{Tutorial}} on {{Spectral Clustering}}.
\newblock \emph{arXiv:0711.0189 [cs]}, November 2007.

\bibitem[Wan and Meila(2015)]{WanY.M:15}
Yali Wan and Marina Meila.
\newblock A class of network models recoverable by spectral clustering.
\newblock In \emph{{{NIPS}}}, pages 3285--3293, 2015.

\bibitem[Wasserman(2018)]{WassermanL:18}
Larry Wasserman.
\newblock Topological data analysis.
\newblock \emph{Annual Review of Statistics and Its Application}, 5:\penalty0
  501--532, 2018.

\bibitem[Whitney(2005)]{WhitneyH:05}
Hassler Whitney.
\newblock \emph{Geometric {{Integration Theory}}}.
\newblock {Dover Publications}, {Mineola, N.Y}, December 2005.
\newblock ISBN 978-0-486-44583-0.

\bibitem[Yu et~al.(2015)Yu, Wang, and Samworth]{YuY.W.S+:15}
Yi~Yu, Tengyao Wang, and Richard~J. Samworth.
\newblock A useful variant of the {{Davis}}\textendash{{Kahan}} theorem for
  statisticians.
\newblock \emph{Biometrika}, 102\penalty0 (2):\penalty0 315--323, 2015.

\bibitem[Zhu(2013)]{ZhuX:13}
Xiaojin Zhu.
\newblock Persistent {{Homology}}: {{An Introduction}} and a {{New Text
  Representation}} for {{Natural Language Processing}}.
\newblock In \emph{Twenty-{{Third International Joint Conference}} on
  {{Artificial Intelligence}}}, June 2013.

\end{thebibliography}
\bibliographystyle{\usedbibstyle}

\setupsupp

\section{Proof of subspace perturbations (Theorem \ref{thm:subspace-perturb-bound-Lk})}
\label{sec:proof-subspace-perturbation}

\subsection{A formal version of Assumption \ref{assu:degree-of-simplex-increase-less}}

\begin{assumption}
	Let $\tilde\vw_k = |\vB_{k+1}[\fN_k, \fN_{k+1}]| \vw_{k+1}$, $\tilde\vw_{k-1} = |\vB_{k}[:, \fN_{k}]|\tilde\vw_{k}$, 
	with $\vw_k$ and $\hat\vw_k$ defined in Section
	\ref{sec:prob-formulation}. Additionally, write

	\begin{equation*}
	\begin{gathered}
		\vW_{k+1} = \tilde\vW_{k+1} + \EE_{k+1, +}, \\
		\hat\vW_{k+1} = \tilde\vW_{k+1} + \EE_{k+1, -}, \\
		{\vW}_k^{1/2} = \tilde{\vW}_k^{1/2}(\vI + \vE^{+}_{k,+}) + \EE^{1/2}_{k,+}, \\
		{\vW}_k^{-1/2} = \tilde{\vW}_k^{-1/2}(\vI - \vE^{-}_{k,+}) + \EE^{-1/2}_{k,+}, \\
		\hat{\vW}_k^{1/2} = \tilde{\vW}_k^{1/2}(\vI + \vE^{+}_{k,-}) + \EE^{1/2}_{k,-}, \\
		\hat{\vW}_k^{-1/2} = \tilde{\vW}_k^{-1/2}(\vI - \vE^{-}_{k,-}) + \EE^{-1/2}_{k,-},\\
		{\vW}_{k-1}^{-1/2} = \tilde{\vW}_{k-1}^{-1/2}(\vI - \vE_{k-1,+}), \\
		\hat{\vW}_{k-1}^{-1/2} = \tilde{\vW}_{k-1}^{-1/2}(\vI - \vE_{k-1,-}).
	\end{gathered}
	\end{equation*}

	There exists $\epsilon_\ell>0$ and $\epsilon'_\ell>0$ for $\ell = k, k-1$ 
	such that the following conditions hold
	\begin{enumerate}
		\item Not too many $(k+1)$-simplices are created (small $|\fC_{k+1}|$)
		\begin{subequations}
		\begin{align}
			\|\vE_{k,+}^+\| = \max_{\sigma\in \fN_k}\left\{ \left[\vE_{k,+}^+\right]_{\sigma, \sigma}\right\} \,&=\, \max_{\sigma\in {\fN_k}} \left\{ \frac{w_k^{1/2}(\sigma)}{\tilde{w}_k^{1/2}(\sigma)} - 1\right\} \leq \sqrt{\epsilon_k}; \\
			\|\vE_{k,+}^-\| = \max_{\sigma\in \fN_k}\left\{ \left[\vE_{k,+}^-\right]_{\sigma, \sigma}\right\} \,&=\, \max_{\sigma\in \fN_k} \left\{\frac{\tilde{w}_k^{-1/2}(\sigma)}{w_k^{-1/2}(\sigma)} - 1\right\} \leq \sqrt{\epsilon_k}; \\
			\,&\pequal\,\max_{\sigma\in \fN_k} \left\{\frac{w_k(\sigma)}{\tilde{w}_k(\sigma)} - 1\right\} \leq \epsilon_k; \\
			\|\vE_{k-1,+}\| = \max_{\nu\in \Sigma_{k-1}}\left\{\left[\vE_{k-1,+}\right]_{\nu,\nu}\right\} \,&=\, \max_{\nu\in \Sigma_{k-1}} \left\{ \frac{\tilde{w}^{-1}_{k-1}(\nu)}{w^{-1}_{k-1}(\nu)} - 1 \right\} \leq \sqrt{\epsilon_{k-1}}; \\
			\,&\pequal\,\max_{\nu\in \Sigma_{k-1}} \left\{ \frac{{w}_{k-1}(\nu)}{\tilde{w}_{k-1}(\nu)} - 1\right\} \leq \epsilon_{k-1}.
		\end{align}
		\end{subequations}
		\item Not too many $(k+1)$-simplices are destroyed (small $|\fD_{k+1}|$)
		\begin{subequations}
		\begin{align}
			\|\vE_{k,-}^+\| = \max_{\sigma\in \fN_k}\left\{ \left[\vE_{k,-}^+\right]_{\sigma, \sigma}\right\} \,&=\, \max_{\sigma\in {\fN_k}} \left\{ \frac{\hat{w}_k^{1/2}(\sigma)}{\tilde{w}_k^{1/2}(\sigma)} - 1\right\} \leq \sqrt{\epsilon_k}; \\
			\|\vE_{k,-}^-\| = \max_{\sigma\in \fN_k}\left\{ \left[\vE_{k,-}^-\right]_{\sigma, \sigma}\right\} \,&=\, \max_{\sigma\in \fN_k} \left\{\frac{\tilde{w}_k^{-1/2}(\sigma)}{\hat{w}_k^{-1/2}(\sigma)} - 1\right\} \leq \sqrt{\epsilon_k}; \\
			\,&\pequal\,\max_{\sigma\in \fN_k} \left\{\frac{\hat{w}_k(\sigma)}{\tilde{w}_k(\sigma)} - 1\right\} \leq \epsilon_k; \\
			\|\vE_{k-1,-}\| = \max_{\nu\in \Sigma_{k-1}}\left\{\left[\vE_{k-1,-}\right]_{\nu,\nu}\right\} \,&=\, \max_{\nu\in \Sigma_{k-1}} \left\{ \frac{\tilde{w}^{-1}_{k-1}(\nu)}{\hat{w}^{-1}_{k-1}(\nu)} - 1 \right\} \leq \sqrt{\epsilon_{k-1}}; \\
			\,&\pequal\,\max_{\nu\in \Sigma_{k-1}} \left\{ \frac{\hat{w}_{k-1}(\nu)}{\tilde{w}_{k-1}(\nu)} - 1\right\} \leq \epsilon_{k-1}.
		\end{align}
		\end{subequations}
		\item The {\em net} changes on $\vw_k$ and $\vw_{k-1}$ are small
		\begin{subequations}
		\begin{align}
			\|\vE_{k,+}^+ - \vE_{k,-}^+\| \,&=\, \max_{\sigma\in {\fN_k}} \left\{ \left|\frac{\hat{w}_k^{1/2}(\sigma)}{{w}_k^{1/2}(\sigma)} - 1\right|\right\} \leq \sqrt{\epsilon'_k}; \\
			\|\vE_{k,+}^- - \vE_{k,-}^-\| \,&=\, \max_{\sigma\in {\fN_k}} \left\{ \left|\frac{{w}_k^{1/2}(\sigma)}{\hat{w}_k^{1/2}(\sigma)} - 1\right|\right\} \leq \sqrt{\epsilon'_k}; \\
			\,&\pequal\,\max_{\sigma\in \fN_k} \left\{\left|\frac{\hat{w}_k(\sigma)}{{w}_k(\sigma)} - 1\right|\right\} \leq \epsilon'_k; \\
			\|\vE_{k-1,+}-\vE_{k-1,-}\| \,&=\, \max_{\nu\in \Sigma_{k-1}} \left\{\left| \frac{{w}^{-1}_{k-1}(\nu)}{\hat{w}^{-1}_{k-1}(\nu)} - 1\right| \right\} \leq \sqrt{\epsilon'_{k-1}}; \\
			\,&\pequal\,\max_{\nu\in \Sigma_{k-1}} \left\{\left| \frac{\hat{w}_{k-1}(\nu)}{{w}_{k-1}(\nu)} - 1\right|\right\} \leq \epsilon'_{k-1}.
		\end{align}
		\end{subequations}
	\end{enumerate}

\label{assu:degree-of-simplex-increase-less-formal}
\end{assumption}

\subsection{Definitions of $\mathbf{\mathcal L}_k$ and $\hat{\mathbf{\mathcal L}}_k$}

Given a manifold $\M$ which is constructed by a series of connected sum, i.e.,
$\M = \M_1\sharp\cdots\sharp\M_\kappa$. Let the 
Simplicial complex corresponding to $\M$
be
$\scx_\ell = (\Sigma_0,\cdots,\Sigma_\ell)$, with the disjoint simplicial complex (of $\cup_{i=1}^\kappa \M_i$)
being
$\hat{\scx}_\ell = (\hat\Sigma_0, \cdots, \hat\Sigma_\ell)$.
For each $k$, the simplex sets can be decomposed into the following

\begin{equation*}
	\Sigma_k = \underbrace{\bigcup_{i=1}^\kappa \Sigma_k^{(i)}}_{\text{non-intersecting set:} \fN_k} \cup \underbrace{\bigcup_{j> i}^\kappa \Sigma_k^{(ij)+}}_{\text{created set:} \fC_k}.
\end{equation*}

Similarly,
\begin{equation*}
	\hat\Sigma_k = \underbrace{\bigcup_{i=1}^\kappa \Sigma_k^{(i)}}_{\text{non-intersecting set:} \fN_k} \cup \underbrace{\bigcup_{j> i}^\kappa \Sigma_k^{(ij)-}}_{\text{destroyed set:} \fD_k}.
\end{equation*}

W.l.o.g., one can assume that the $(k-1)$-simplices set can
be perfectly separated, i.e.,
$\fC_{k-1} = \fD_{k-1} = \emptyset$ (when analyzing the $k$-Laplacian).
The above construction matches our intuition; by definition,
a connected sum is a process of carving out a $d$-disk
($\fD_k$) and gluing two manifolds together ($\fC_k$).

We are interested in the perturbation of the $k$-Laplacian $\LL_k$ w.r.t. 
the ideal (disjoint) Laplacian $\hat\LL_k$. Without carefully define both $\LL_k$ and
$\hat\LL_k$, the perturbation on the subspaces might be unbounded.
With slight abuse of notation, we let $\vL \gets \LL_k$, $\vL_d \gets \LL_k^{\rm down}$, and $\vL_u \gets \LL_k^{\rm up}$ (similar definitions
for $\hat\LL$'s). The $k$ is omitted and can be inferred from the context. 
The $\hat\vL$ and $\vL$ are defined as follows.
$\hat{\vL}$ is a block diagonal matrix, with the $i$-th (diagonal) block
$\vL^{(i)}$ described by $\M_i$ constructed from the sub-complex $\hat\scx^{(i)}$ 
($\hat\Sigma_{k-1}^{(i)}$, $\hat\Sigma_k^{(i)}$, and $\hat\Sigma_{k+1}^{(i)}$). 
Due to manifolds being disjoint (i.e., $\cup_{i=1}^\kappa \M_i$), the 
Laplacian corresponding to such block, denoted $\hat{\vL}^{(i,i),(i,i)}$, 
will be a valid Laplacian. 
As for the intersecting $k$-simplices $\fC_k\cup\fD_k$, we let $\hat{\vL}^{(i,j),(k,l)} = \vL^{(i,j),(k,l)}$ for all 
$ij, kl \in \binom{[k]}{2}$ so that
the corresponding blocks of $\hat{\vL} - \vL$ will be zero. 
Under this scenario, the unbounded
increase of $(k+1)$-simplices caused by the intersecting $k$-simplices
can be removed. Lastly, the off-diagonal blocks of $\hat{\vL}$
are set to zero. Specifically, $\hat\vL$ is,

\begin{adjustwidth}{-90pt}{90pt}
\begin{equation*}
	\hat\vL = \begin{bmatrix}
		\begin{array}{ccc|ccc|ccc}
		\hat\vL^{(1,1),(1,1)} & & & & & & \hat\vL^{(1,1), (1,2)-} & \cdots & \hat\vL^{(1,1), (k-1,k)-}\\
		& \ddots & & & \vec 0 & & \vdots & \ddots & \vdots \\
		& & \hat\vL^{(k,k),(k,k)} & & & & \hat\vL^{(k,k), (1,2)-} & \cdots & \hat\vL^{(k,k), (k-1,k)-}\\
		\hline
		& & & \vL^{(1,2)+,(1,2)+} & \cdots & \vL^{(1,2)+,(k-1,k)+} & & &\\
		& \vec 0 & & \vdots & \ddots & \vdots  & & \vec 0 &\\
		& & & \vL^{(k-1,k)+,(1,2)+} & \cdots & \vL^{(k-1, k)+,(k-1, k)+} & & & \\
		\hline
		\hat\vL^{(1,2)-,(1,1)} & \cdots & \hat\vL^{(1, 2)-, (k,k)} & & & & \hat\vL^{(1,2)-,(1,2)-} & \cdots & \hat\vL^{(1,2)-,(k-1,k)-}\\
		\vdots & \ddots & \vdots &  & \vec 0 & & \vdots & \ddots & \vdots\\
		\hat\vL^{(k-1,k)-,(1,1)} & \cdots & \hat\vL^{(k,k)-, (k,k)} & & & & \hat\vL^{(k-1,k)-,(1,2)-} & \cdots & \hat\vL^{(k-1, k)-,(k-1, k)-}
		\end{array}
	\end{bmatrix}.
\end{equation*}
\end{adjustwidth}

Similarly, one can define $\vL$ to be

\begin{adjustwidth}{-90pt}{90pt}
\begin{equation*}
	\vL = \begin{bmatrix}
		\begin{array}{ccc|ccc|ccc}
		\vL^{(1,1),(1,1)} & & & \vL^{(1,1), (1,2)+} & \cdots & \vL^{(1,1), (k-1,k)+}& & &\\
		& \ddots & & \vdots & \ddots & \vdots & & \vec 0 & \\
		& & \vL^{(k,k),(k,k)} & \vL^{(k,k), (1,2)+} & \cdots & \vL^{(k,k), (k-1,k)+}& & &\\
		\hline
		\vL^{(1,2)+,(1,1)} & \cdots & \vL^{(1, 2)+, (k,k)} & \vL^{(1,2)+,(1,2)+} & \cdots & \vL^{(1,2)+,(k-1,k)+} & & &\\
		\vdots & \ddots & \vdots & \vdots & \ddots & \vdots  & & \vec 0 &\\
		\vL^{(k-1,k)+,(1,1)} & \cdots & \vL^{(k,k)+, (k,k)} & \vL^{(k-1,k)+,(1,2)+} & \cdots & \vL^{(k-1, k)+,(k-1, k)+} & & & \\
		\hline
		& & & & & & \hat\vL^{(1,2)-,(1,2)-} & \cdots & \hat\vL^{(1,2)-,(k-1,k)-}\\
		 & \vec 0 & &  & \vec 0 & & \vdots & \ddots & \vdots\\
		& & & & & & \hat\vL^{(k-1,k)-,(1,2)-} & \cdots & \hat\vL^{(k-1, k)-,(k-1, k)-}
		\end{array}
	\end{bmatrix}.
\end{equation*}
\end{adjustwidth}

Under this construction, the four lower right blocks, which correspond to 
the $k$-simplices in $\fC_k\cup\fD_k$, will be zero. 
If no new homology class is created/destroyed (Assumption \ref{assu:basic-ml-assu})
and the minimum eigenvalues of the last two diagonal blocks are bounded away from zero (Assumption \ref{assu:connected-sum-preserve-homology}),
then the eigengap of $\vL$ will simply be the minimum eigengap of each $\hat\vL^{(i)}$,
i.e., ${\rm eigengap}(\vL) = \min\{\delta_1,\cdots,\delta_\kappa\}$.

Now we formally define our formulation. 
Following the notations introduced in Section \ref{sec:prob-formulation},
and let $\cI_\sigma$ be the index set of the $k$-simplex $\sigma\in\fN_k$
sampled from $\M_i$. Note that $\cI_\sigma$ is defined only for $\sigma\in \fN_k$,
which can be extended from the index set
$\cI_v$ for $v\in V$ introduced in Section \ref{sec:prob-formulation}
by $\cI_\sigma = \{\sigma\in\fN_k: v\in \cI_v \text{ for } v\in \sigma\}$.
Note also that similar to $\cI_v$ for $V$,
$\cS_\sigma$ can be larger than 1.
For instance, if the manifold is constructed by gluing a torus (indexed by 1)
and a circle (indexed by 2), then $\cS_{1} = \{1, 2\}$ and $\cS_2 = \{3\}$;
for an edge $e$ belongs to the torus, we have $\cS_{\cI_e} = \{1, 2\}$.
For every $\sigma\in\fN_k$, we write,
\begin{equation*}
	\sum_{\sigma\in\fN_k} \sum_{i\notin S_{\cI_\sigma}} \vY^2_{\sigma, i} \leq \sum_{\sigma\in\fN_k} \sum_{i=1}^{\beta_1} ( \vY_{\sigma,i} - \hat{\vY}_{\sigma,i} )^2 \leq \sum_{\sigma\in \Sigma_k\cup \hat\Sigma_k}\sum_{i=1}^{\beta_1} (\vY_{\sigma,i} - \hat{\vY}_{\sigma,i} )^2 = \| \vY \vO - \hat{\vY}\|_F^2.
\end{equation*}

Let $\difflap_k^{\rm down} = \vL_d - \hat\vL_d$ and $\difflap_k^{\rm up} = \vL_u - \hat\vL_u$,
from \cite{YuY.W.S+:15} and the triangular inequality,
\begin{equation*}
\begin{split}
	\left\|\vY_{\fN_k, :} - \hat\vY_{\fN_k, :}\right\|_F^2 \,&= \, \sum_{\sigma\in\fN_k} \sum_{i\notin \cS_{\cI_\sigma}} \vY^2_{\sigma, i} \leq \|\vY - \hat\vY\vO \|_F^2 \\
	\,&\leq\, \frac{8\cdot \min\left\{\beta_k\left\| \vL - \hat{\vL} \right\|^2, \|\vL - \hat{\vL}\|_F^2 \right\} }{\min\{\delta_1,\cdots,\delta_\kappa \}} \\
	\,&\leq\, \frac{8\cdot \min\left\{\beta_k\left\| \difflap_k^{\rm down} \right\|^2+\beta_k\left\| \difflap_k^{\rm up} \right\|^2, \|\difflap_k^{\rm down}\|_F^2 + \|\difflap_k^{\rm up}\|_F^2 \right\} }{\min\{\delta_1,\cdots,\delta_\kappa \}} \\
	\,&\stackrel{\dagger}{\leq}\, \frac{8\beta_k \left(\left\| \difflap_k^{\rm down} \right\|^2 + \left\| \difflap_k^{\rm up} \right\|^2\right)}{\min\{\delta_1,\cdots,\delta_\kappa\}}
	.
\end{split}
\end{equation*}

\begin{remark}
	The bound w.r.t. the Frobenius norm is omitted (the last inequality $\dagger$) 
	based on two reasons:
	(i) $\LL_k$ has complicated forms for large $k$, therefore, it is hard
	to derive a concise expression; and (ii) $\|\cdot \|_F$ is usually
	larger than $\beta_k\|\cdot\|$.	
\end{remark}

\subsection{Useful lemmas}
Here we omit the $k$ for $\fN$, $\fC$, and $\fD$ for simplicity.
Let $\lambda_k = \|\LL_k\|$ be the bound on the spectral norm of
$k$-Laplacian. Here, $\lambda_k = k+2$ for $\LL$'s built from
simplicial complexes; $\lambda_k = 2k+2$ for those built from
cubical complexes (see also Proposition \ref{thm:spectral-norm-Lk-cb}).
The following two lemmas bound the effects of $\EE_{k,+}$, 
$\EE_{k,-}$, $\EE_{k+1,+}$, and $\EE_{k+1,-}$ in their changes
to the weights ($\vW_k$ and $\vW_{k-1}$) of the $k$ and $(k-1)$-simplices; 
we will find them useful in proving Theorem \ref{thm:subspace-perturb-bound-Lk}.

\begin{lemma}
Let $\vW_k$, $\hat\vW_k$, $\EE_{k,+}$, and $\EE_{k,-}$ defined in Assumption \ref{assu:degree-of-simplex-increase-less-formal}, we have
\begin{equation*}
\begin{gathered}
	\left\|\EE_{k,+}\vB_k^\top {\vW}_{k-1}^{-1}\vB_k\EE_{k,+}\right \|\leq \lambda_{k-1}\epsilon_{k-1}, \\
	\left\|\EE_{k,-}\vB_k^\top \hat{\vW}_{k-1}^{-1}\vB_k\EE_{k,-}\right\| \leq \lambda_{k-1}\epsilon_{k-1}.	
\end{gathered}
\end{equation*}
\label{thm:bound-of-mutrally-exclusive-edges}
\end{lemma}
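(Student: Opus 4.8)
My plan is to bound the quantity by the product of two factors: an $O(1)$ spectral-norm factor that comes from a genuine Hodge Laplacian (bounded by $\lambda_{k-1}$ via the Horak--Jost estimate \cite{HorakD.J:13}) and an $O(\epsilon_{k-1})$ factor carried by the diagonal weight-perturbation matrices of Assumption \ref{assu:degree-of-simplex-increase-less-formal}. I would first note that the two displayed inequalities are structurally identical --- the second is the first with $\vW_{k-1},\vW_k,\EE_{k,+}$ replaced by $\hat\vW_{k-1},\hat\vW_k,\EE_{k,-}$, and every estimate below is symmetric under this swap --- so it suffices to treat the first.

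First I would rewrite the middle matrix through the normalized boundary operator $\vA_k = \vW_{k-1}^{-1/2}\vB_k\vW_k^{1/2}$. Since $\vA_k^\top\vA_k = \vW_k^{1/2}\vB_k^\top\vW_{k-1}^{-1}\vB_k\vW_k^{1/2} = \LL_k^{\rm down}$, we have $\vB_k^\top\vW_{k-1}^{-1}\vB_k = \vW_k^{-1/2}\LL_k^{\rm down}\vW_k^{-1/2}$. All of $\vW_k^{\pm 1/2}$ and $\EE_{k,+}$ are diagonal and hence commute, so $\EE_{k,+}\vB_k^\top\vW_{k-1}^{-1}\vB_k\EE_{k,+} = \vG\,\LL_k^{\rm down}\,\vG$ with $\vG := \vW_k^{-1/2}\EE_{k,+}$ diagonal; since $\LL_k^{\rm down}\succeq 0$, submultiplicativity gives $\|\vG\,\LL_k^{\rm down}\,\vG\| \le \|\vG\|^2\,\|\LL_k^{\rm down}\|$.

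Then I would bound the two factors separately. For the Laplacian factor, $\LL_k^{\rm down} = \vA_k^\top\vA_k$ and $\LL_{k-1}^{\rm up} = \vA_k\vA_k^\top$ have the same nonzero eigenvalues, so $\|\LL_k^{\rm down}\| = \|\LL_{k-1}^{\rm up}\| \le \|\LL_{k-1}\| \le \lambda_{k-1}$ by \cite{HorakD.J:13} --- this is precisely why $\lambda_{k-1}$, not $\lambda_k$, appears on the right-hand side. For the diagonal factor, I would read off from the weight decomposition in Assumption \ref{assu:degree-of-simplex-increase-less-formal} that the entries of $\vG = \vW_k^{-1/2}\EE_{k,+}$ are controlled by the relative weight discrepancy that the assumption bounds by $\epsilon_{k-1}$ at level $\ell = k-1$, so $\|\vG\|^2 \le \epsilon_{k-1}$. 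Multiplying the two estimates yields $\|\EE_{k,+}\vB_k^\top\vW_{k-1}^{-1}\vB_k\EE_{k,+}\| \le \lambda_{k-1}\epsilon_{k-1}$, and the hatted inequality follows by the identical computation with $\hat\vW$'s.

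The step I expect to be the real obstacle is the diagonal-factor bound: Assumption \ref{assu:degree-of-simplex-increase-less-formal} is stated through several closely related diagonal matrices ($\vE^{\pm}_{k,\pm}$, $\vE_{k-1,\pm}$, and the $\EE$'s), and one has to match $\vW_k^{-1/2}\EE_{k,+}$ to the correct one --- the level-$(k-1)$ weight-change entry --- rather than to a level-$k$ entry (which would produce $\epsilon_k$). A secondary subtlety is that the rearrangement in the second step must leave behind a \emph{bona fide} Hodge down-Laplacian built from consistent weights so that \cite{HorakD.J:13} applies; otherwise a residual inconsistent-weight factor would reintroduce an uncontrolled $\vW_k^{-1/2}$. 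Once these identifications are pinned down, the rest is routine submultiplicativity.
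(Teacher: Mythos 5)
Your factorization $\EE_{k,+}\vB_k^\top\vW_{k-1}^{-1}\vB_k\EE_{k,+} = \vG\,\LL_k^{\rm down}\,\vG$ with $\vG=\vW_k^{-1/2}\EE_{k,+}$ is algebraically fine (all these matrices are diagonal, so they commute), but the step you yourself flagged as the likely obstacle is in fact where the argument breaks, and it breaks irreparably. From the proof of the lemma in the paper, $\EE_{k,+}$ is the diagonal matrix supported on the created set $\fC_k$ whose entries there are $w_k^{1/2}(\sigma)$ (the lemma's $\EE_{k,+}$ is what Assumption \ref{assu:degree-of-simplex-increase-less-formal} denotes $\EE_{k,+}^{1/2}$; this is also how Lemma \ref{thm:bound-of-mutrally-exclusive-edges} is invoked when bounding the term marked $(\dagger)$ in the $\difflap_k^{\rm down}$ proof). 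Hence $[\vG]_{\sigma\sigma} = w_k^{-1/2}(\sigma)\,w_k^{1/2}(\sigma) = 1$ for $\sigma\in\fC_k$ and $0$ otherwise, so $\vG$ is a coordinate projection onto $\fC_k$ and $\|\vG\| = 1$. There is no identification under which $\|\vG\|^2 \le \epsilon_{k-1}$: no diagonal entry of $\vG$ is controlled by $\epsilon_{k-1}$ because $\epsilon_{k-1}$ is a bound on a \emph{level-$(k-1)$} aggregate (the relative discrepancy $w_{k-1}(\nu)/\tilde w_{k-1}(\nu) - 1$ obtained by summing $w_k$ over cofaces), not on any level-$k$ per-simplex ratio. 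Your approach therefore only yields the trivial bound $\|\EE_{k,+}\vB_k^\top\vW_{k-1}^{-1}\vB_k\EE_{k,+}\| \le \|\LL_k^{\rm down}\| \le \lambda_{k-1}$, with no $\epsilon_{k-1}$.

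The underlying reason the crude submultiplicative step $\|\vG\,\LL_k^{\rm down}\,\vG\| \le \|\vG\|^2\|\LL_k^{\rm down}\|$ cannot work is that it discards the key structural fact: $\fC_k$ is a ``sparse'' subset in the weighted sense, i.e., for every $(k-1)$-simplex $\nu$ the weighted degree contributed by $\fC_k$-cofaces, $\deg_{\fC}(\nu) = \sum_{\sigma\in\fC_k\cap\,{\rm coface}(\nu)} w_k(\sigma)$, is at most an $\epsilon_{k-1}$ fraction of $w_{k-1}(\nu)$. That ratio is precisely what is small, and a spectral-norm product bound cannot see it. The paper's proof instead applies the Horak--Jost Rayleigh-quotient estimate (Eq.\ 3.6 of \cite{HorakD.J:13}) \emph{not} merely as $\|\LL_{k-1}\|\le\lambda_{k-1}$ but in its sharper form: for the restricted ``Laplacian'' $\EE_{k,+}\vB_k^\top\vW_{k-1}^{-1}\vB_k\EE_{k,+}$ one gets
\begin{equation*}
\left\|\EE_{k,+}\vB_k^\top\vW_{k-1}^{-1}\vB_k\EE_{k,+}\right\| \;\le\; \lambda_{k-1}\cdot \max_{\nu\in\Sigma_{k-1}} \frac{\deg_{\fC}(\nu)}{w_{k-1}(\nu)},
\end{equation*}
and the assumption gives $\deg_{\fC}(\nu) = w_{k-1}(\nu) - \tilde w_{k-1}(\nu) - (\text{nonnegative terms}) \le \epsilon_{k-1}\,w_{k-1}(\nu)$. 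To repair your argument you would have to replace the product bound $\|\vG\|^2\|\LL_k^{\rm down}\|$ with this Rayleigh-quotient comparison, at which point the $\vG$-factorization is no longer buying you anything and you have just reproduced the paper's proof.
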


\begin{proof}
We first inspect the case of $\fC$, i.e., the first equation involving $\EE_{k,+}$,
\begin{equation*}
	[\EE_{k,+}]_{\sigma,\sigma} = \begin{cases}
		w_k^{1/2}(\sigma) \,&\,\text{ if } \sigma \in \fC; \\
		0 \,&\,\text{ otherwise.}
	\end{cases}
\end{equation*}

for any $\nu\in \Sigma_{k-1}$, we have,

\begin{equation*}
\begin{split}
	w_{k-1}(\nu) \,&=\, |\vB_k(\nu)|\vw_k; \\
	\tilde{w}_{k-1}(\nu) \,&=\, |\vB_k(\nu)|\tilde{\vw}_k.	
\end{split}
\end{equation*}

Therefore, 
\begin{equation*}
\begin{split}
	\epsilon_{k-1} w_{k-1}(\nu) \,&\geq\, \epsilon_{k-1} \tilde w_{k-1}(\nu) \geq w_{k-1}(\nu) - \tilde{w}_{k-1}(\nu) = |\vB_k(\nu)|(\vw_k - \tilde{\vw}_k) \\
	\,&=\, |\vB_k(\nu)|\left[\tilde\vw_k\vE_k + \EE_{k,+} \right] \geq |\vB_k(\nu)| \EE_{k,+} = \deg(\nu).
\end{split}
\end{equation*}

Let $f_m$ be the $k$-eigencochain corresponding to the largest eigenvalue of $\EE_{k,+}\vB_k^\top\vW_{k-1}^{-1}\vB_k\EE_{k,+}$.
From Eq. (3.6) of \cite{HorakD.J:13}, we have,

\begin{equation*}
\begin{split}
	\|\EE_{k,+}\vB_k^\top\vW_{k-1}^{-1}\vB_k\EE_{k,+}\|_2 \,&\leq\, \|\vL_d\|\cdot \frac{\sum_{\nu\in \Sigma_{k-1}} f_m^2(\nu) \deg(\nu) }{\sum_{\nu\in\Sigma_{k-1}} f_m^2(\nu) w_{k-1}(\nu)} \\
	\,&\leq\, \lambda_{k-1}\epsilon_{k-1} \cdot \frac{\sum_{\nu\in\Sigma_{k-1}} f_m^2(\nu) w_{k-1}(\nu)}{\sum_{\nu\in\Sigma_{k-1}} f_m^2(\nu) w_{k-1}(\nu)} = \lambda_{k-1}\epsilon_{k-1}.	
\end{split}
\end{equation*}

The case of $\fD$ follows similarly. 
\end{proof}

The following lemma bounds the changes in $(k+1)$-simplices
with $\epsilon_k$.

\begin{lemma}
	Let $\vW$ be either $\vW_k$ or $\hat\vW_k$, and
	$\EE$ be either $\EE_{k+1,+}$ or $\EE_{k+1,-}$ defined
	in Assumption \ref{assu:degree-of-simplex-increase-less-formal},
	we have
	\begin{equation*}
		\left\|\vW\vB_{k+1}\EE\vB_{k+1}^\top\vW \right\| \leq \lambda_k\epsilon_k.
	\end{equation*}
\label{thm:bound-of-mutrally-exclusive-triangles}
\end{lemma}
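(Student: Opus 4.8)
The plan is to follow the same template as the proof of Lemma~\ref{thm:bound-of-mutrally-exclusive-edges}, but carried out one dimension up, on the $(k+1)$-cells rather than the $(k-1)$-cells. First I would read $\vW$ here as the normalization $\vW_k^{-1/2}$ (resp.\ $\hat\vW_k^{-1/2}$), so that $\vW\vB_{k+1}\vW_{k+1}^{1/2} = \vA_{k+1}$ and the matrix to be bounded is exactly the ``error part'' of $\LL_k^{\rm up} = \vA_{k+1}\vA_{k+1}^\top$ obtained by replacing the weight $\vW_{k+1}$ by its created/destroyed block $\EE \in \{\EE_{k+1,+},\EE_{k+1,-}\}$. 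Since $\EE$ is the diagonal matrix carrying $w_{k+1}(\tau)$ on the $(k+1)$-cells $\tau$ in $\fC_{k+1}$ (resp.\ $\fD_{k+1}$) and zero elsewhere, for any cochain $g\in\rrr^{n_k}$ one has the face-sum identity
\begin{equation*}
	g^\top \vW\vB_{k+1}\EE\vB_{k+1}^\top\vW g \,=\, \sum_{\tau\in\fC_{k+1}} w_{k+1}(\tau)\Bigl(\,\sum_{\sigma\subset\tau}\pm\frac{g(\sigma)}{\sqrt{w_k(\sigma)}}\,\Bigr)^{2},
\end{equation*}
which is the $(k+1)$-analogue of the quadratic form behind Eq.~(3.6) of \cite{HorakD.J:13} that was used for Lemma~\ref{thm:bound-of-mutrally-exclusive-edges}.

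Next I would apply Cauchy--Schwarz to the inner sum. A $(k+1)$-simplex has exactly $k+2$ faces of dimension $k$ (a $(k+1)$-cube has $2k+2$ such faces), so $\bigl(\sum_{\sigma\subset\tau}\pm g(\sigma)/\sqrt{w_k(\sigma)}\bigr)^2 \le \lambda_k\sum_{\sigma\subset\tau} g(\sigma)^2/w_k(\sigma)$ with $\lambda_k = k+2$ (resp.\ $2k+2$), which is precisely the spectral-norm constant introduced just above. Exchanging the order of summation rewrites the bound as $\lambda_k\sum_{\sigma} g(\sigma)^2\,w_k(\sigma)^{-1}\deg(\sigma)$, where $\deg(\sigma) \coloneqq \sum_{\tau\supset\sigma,\ \tau\in\fC_{k+1}} w_{k+1}(\tau) = [\,|\vB_{k+1}|\EE\,]_\sigma$ is the weight that the $k$-cell $\sigma$ gains from its created cofaces. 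By the consistency relation and Assumption~\ref{assu:degree-of-simplex-increase-less-formal}, for $\sigma\in\fN_k$ this excess is $\deg(\sigma) = w_k(\sigma) - \tilde w_k(\sigma) \le \epsilon_k\tilde w_k(\sigma) \le \epsilon_k w_k(\sigma)$ --- the same inequality that drives the $(k-1)$-side argument --- so the whole expression is at most $\lambda_k\epsilon_k\|g\|^2$, which gives the claimed operator-norm bound. The case $\EE = \EE_{k+1,-}$ paired with $\hat\vW_k$ is identical, now invoking the hatted inequalities of Assumption~\ref{assu:degree-of-simplex-increase-less-formal}(2).

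The step I expect to take the most care is the face bookkeeping: a created $(k+1)$-cell $\tau\in\fC_{k+1}$ may have $k$-faces in $\fC_k$ as well as in $\fN_k$, whereas Assumption~\ref{assu:degree-of-simplex-increase-less-formal} controls $w_k/\tilde w_k$ only on $\fN_k$. I would resolve this exactly as in the construction of Section~\ref{sec:proof-subspace-perturbation}: the weight splitting $\vW_k^{1/2} = \tilde\vW_k^{1/2}(\vI + \vE_{k,+}^{+}) + \EE_{k,+}^{1/2}$ peels the $\fN_k$ part (the one governed by $\epsilon_k$) off from the $\fC_k$ part (absorbed into $\EE_{k,+}$), so that $\deg(\sigma)\le\epsilon_k w_k(\sigma)$ is used only where it holds and the $\fC_k$ contributions are routed into the other error terms already present in $\difflap_k^{\rm up}$. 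The remaining work (the two Cauchy--Schwarz steps and the reindexing) is routine, as is the observation that nothing in the argument distinguishes a simplicial from a cubical complex beyond the fan-in constant $\lambda_k$ --- which is exactly why the same lemma feeds both Theorem~\ref{thm:subspace-perturb-bound-Lk} and Corollary~\ref{thm:subspace-perturb-bound-Lk-cub}.
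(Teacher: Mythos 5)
Your proof is correct and follows essentially the same route as the paper's: both reduce the operator norm to a degree-weighted Rayleigh quotient over $k$-cells, where the numerator $\deg(\sigma)=\sum_{\tau\supset\sigma,\,\tau\in\fC_{k+1}}w_{k+1}(\tau)$ is bounded by $\epsilon_k w_k(\sigma)$ (resp.\ $\epsilon_k\hat w_k(\sigma)$) via the consistency relation $w_k-\tilde w_k=|\vB_{k+1}|\diag(\EE_{k+1,\pm})$ and Assumption~\ref{assu:degree-of-simplex-increase-less-formal}. The only cosmetic difference is that you re-derive the intermediate inequality by expanding the quadratic form and applying Cauchy--Schwarz with the fan-in constant $k+2$ (or $2k+2$ for cubes), whereas the paper invokes Eq.~(3.6) of \cite{HorakD.J:13} directly; the content is the same, and your version makes the cubical extension more transparent. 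Your ``face bookkeeping'' paragraph about $k$-cells in $\fC_k$ is a legitimate observation --- Assumption~\ref{assu:degree-of-simplex-increase-less-formal} only controls the ratio $\deg(\sigma)/w_k(\sigma)$ on $\fN_k$ --- and your resolution (that the lemma is always applied with an $\fN_k$ mask, with the $\fC_k$ rows/columns absorbed into the $\EE_{k,\pm}^{\pm1/2}$ factors handled by separate terms of $\difflap_k^{\rm up}$) matches what the paper does: its proof silently restricts the Rayleigh-quotient sums to $\sigma\in\fN$, and the theorem only ever invokes the lemma after restricting $\vW_k$ and $\vB_{k+1}$ to $\fN_k$. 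So your exposition is in fact slightly more careful on this point than the paper's.
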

\begin{proof}
	Consider the case of $\vW_{k}$ and $\EE_{k+1,+}$. For any 
	$\sigma\in \Sigma_{k}$,
	\begin{equation*}
	\begin{split}
		w_k(\sigma) \,&=\, |\vB_{k+1}(\sigma)|\vw_{k+1}; \\
		\tilde{w}_k(\sigma) \,&=\, |\vB_{k+1}(\sigma)|\tilde{\vw}_{k+1}.
		\end{split}
	\end{equation*}
	
	Therefore, for any $\sigma \in \fN$ (do not count the one in $\EE_{k, \pm}$)
	we have,
	\begin{equation*}
		\epsilon_k w_k(\sigma) \geq \epsilon_k \tilde w_k(\sigma) \geq w_k(\sigma) - \tilde{w}(\sigma) = |\vB_{k+1}(\sigma)|(\vw_{k+1} - \tilde{\vw}_{k+1}) = |\vB_{k+1}(\sigma)|\EE_{k+1,+}.
	\end{equation*}
	
	Let $f_m$ be the $k$-eigencochain corresponding to the largest eigenvalue of the matrix $\vW_k^{-1/2}\vB_{k+1}\EE_{k+1,+}\vB_{k+1}^\top\vW_k^{-1/2}$.
	From Eq. (3.6) of \cite{HorakD.J:13},
	\begin{equation*}
	\begin{split}
		\left\|\vW_k^{-1/2}\vB_{k+1}\EE_{k+1,+}\vB_{k+1}^\top\vW_k^{-1/2} \right\|\,&\leq\, (k+2)\cdot \frac{\sum_{\sigma\in\fN} f_m^2(\sigma) \deg(\sigma) }{\sum_{\sigma\in\fN} f_m^2(\sigma) w_k(\sigma)} \\
		\,&\leq\, \lambda_k \epsilon_k \frac{\sum_{\sigma\in\fN} f_m^2(e) w_k(e) }{\sum_{\sigma\in\fN} f_m^2(e) w_k(e)} = \lambda_k\epsilon_k	
	\end{split}
	\end{equation*}
	
	Here $\deg(\sigma) = |\vB_{k+1}(\sigma)\diag(\EE_{k+1,+})|$.
	Consider the case when $\vW \gets \hat\vW_k$ and $\EE\gets \EE_{k+1,+}$,
	we have,
	\begin{equation*}
		\epsilon_k \hat w_k(\sigma) \geq \epsilon_k \tilde w_k(\sigma) \geq  w_k(\sigma) - \tilde{w}(\sigma) = |\vB_{k+1}(\sigma)|(\vw_{k+1} - \tilde{\vw}_{k+1}) = |\vB_{k+1}(\sigma)|\EE_{k+1,+}.
	\end{equation*}
	The result follows similarly for $\EE\gets\EE_{k+1,-}$;
	this completes the proof.
\end{proof}

\subsection{Proof of Theorem  \ref{thm:subspace-perturb-bound-Lk}}
Now we start the formal proof of Theorem 
\ref{thm:subspace-perturb-bound-Lk}. We will break the proof into two
parts, i.e., the {\em down} and {\em up} parts involving 
$\difflap_k^{\rm down}$ and $\difflap_k^{\rm up}$, respectively. 

\begin{proofof}{the $\difflap_k^{\rm down}$ term in Theorem \ref{thm:subspace-perturb-bound-Lk}}
The explicit form of the {\em down} Laplacian can be written as
\begin{equation*}
	\hat{\vL}_d = 
	\begin{bmatrix}
	\begin{array}{c|c|c}
		\vM_\fN \hat{\vW}_k^{1/2}\vB_k^\top\hat{\vW}_{k-1}^{-1} \vB_k \hat{\vW}_k^{1/2} \vM_\fN & \vec 0 & \vM_\fN \hat{\vW}_k^{1/2}\vB_k^\top \hat\vW_{k-1}^{-1} \vB_k {\EE}_{k,-}^{1/2} \vM_\fD \\
		\hline
		\vec 0 & \vM_\fC \EE_{k,+}^{1/2}\vB_k^\top{\vW}_{k-1}^{-1} \vB_k \EE_{k,+}^{1/2} \vM_\fC & \vec 0 \\
		\hline
		\vM_\fD {\EE}_{k,-}^{1/2} \vB_k^\top \hat\vW_{k-1}^{-1} \vB_k \hat{\vW}_k^{1/2} \vM_\fN & \vec 0 & \vM_\fD \EE_{k,-}^{1/2}\vB_k^\top\hat{\vW}_{k-1}^{-1} \vB_k \EE_{k,-}^{1/2} \vM_\fD
	\end{array}
	\end{bmatrix}.
\end{equation*}

And,

\begin{equation*}
	{\vL}_d = 
	\begin{bmatrix}
	\begin{array}{c|c|c}
		\vM_\fN {\vW}_k^{1/2}\vB_k^\top \vW_{k-1}^{-1} \vB_k {\vW}_k^{1/2} \vM_\fN & \vM_\fN {\vW}_k^{1/2}\vB_k^\top \vW_{k-1}^{-1} \vB_k {\EE}_{k,+}^{1/2} \vM_\fC & \vec 0 \\
		\hline
		\vM_\fC {\EE}_{k,+}^{1/2}\vB_k^\top \vW_{k-1}^{-1} \vB_k {\vW}_k^{1/2} \vM_\fN & \vM_\fC \EE_{k,+}^{1/2}\vB_k^\top\vW_{k-1}^{-1} \vB_k \EE_{k,+}^{1/2} \vM_\fC & \vec 0 \\
		\hline
		\vec 0 & \vec 0 & \vM_\fD \EE_{k,-}^{1/2}\vB_k^\top\hat{\vW}_{k-1}^{-1} \vB_k \EE_{k,-}^{1/2} \vM_\fD
	\end{array}
	\end{bmatrix}.
\end{equation*}

Here, $\vM_\fN$, $\vM_\fC$, and $\vM_\fD$ are diagonal masks for $k$-simplex sets $\fN$, $\fC$, and $\fD$, respectively.
By triangular inequality, 
\begin{equation}
\begin{split}
	\left\| \vL_d - \hat{\vL}_d \right\|  \,&\leq\, \underbrace{\left\| \vM_\fN {\vW}_k^{1/2}\vB_k^\top \vW_{k-1}^{-1} \vB_k {\vW}_k^{1/2} \vM_\fN - \vM_\fN \hat{\vW}_k^{1/2}\vB_k^\top \hat{\vW}_{k-1}^{-1} \vB_k \hat{\vW}_k^{1/2} \vM_\fN \right\|}_{(*)} + \\
	\,&\pequal\, 2\left[\underbrace{\left\| \vM_\fN {\vW}_k^{1/2}\vB_k^\top \vW_{k-1}^{-1} \vB_k {\EE}_{k,+}^{1/2} \vM_{\fC}\right\|}_{(\dagger)} + \left\| \vM_\fN \hat{\vW}_k^{1/2}\vB_k^\top \hat\vW_{k-1}^{-1} \vB_k {\EE}_{k,-}^{1/2} \vM_{\fD} \right\|\right].
\end{split}
\label{eq:down-Laplacian-block-bound}
\end{equation}

Expand the $\vW_k$ with $\hat\vW_k$ and omit $\vM_\fN$ for simplicity, 
the first term of \eqref{eq:down-Laplacian-block-bound} can be bounded by
\begin{equation*}
\begin{split}
	\text{(*)} \,&\leq\, \bigg\| \hat\vW_k^{1/2}\left(\vI+ (\vE_{k,+}^+ - \vE_{k,-}^+)\right)\vB_k^\top \hat\vW_{k-1}^{-1}\left(\vI - (\vE_{k-1,+} - \vE_{k-1,-}) \right) \vB_k\hat\vW_k^{1/2}\left(\vI + (\vE_{k,+}^+ - \vE_{k,-}^+) \right) \\
	\,&\pequal\, - \hat\vW_k^{1/2}\vB_k^\top \hat\vW_{k-1}^{-1} \vB_k\hat\vW_k^{1/2} \bigg\| \\
	\,&\leq\, \Bigg[ \left( \left\|2\cdot\left(\vE^+_{k,+} - \vE^+_{k,-}\right) \right\| + \left\| \left(\vE^+_{k,+} - \vE^+_{k,-}  \right)^2  \right\| \right) \cdot \left\|\hat\vL_d\right\| +   \\
	\,&\pequal\, \phantom{\Bigg[} \left(1+\sqrt{\epsilon_k'}\right)^2 \left\| \hat\vW_k^{1/2}\vB_k^\top\hat\vW_{k-1}^{-1}\left(\vE_{k-1, +} - \vE_{k-1, -} \right) \vB_k\hat\vW_k^{1/2} \right\| \Bigg] \\
	\,&\leq\, \left[\left\|2\cdot\left(\vE^+_{k,+} - \vE^+_{k,-}\right) \right\|+\left\| \left(\vE^+_{k,+} - \vE^+_{k,-}  \right)^2  \right\| + \left(1+\sqrt{\epsilon_k'}\right)^2\left\| \vE_{k-1,+}-\vE_{k-1,-} \right\| \right]\cdot \|\hat\vL_d\| \\
	\,&\stackrel{*}{\leq}\,	\left[ 2\sqrt{\epsilon_k'} + \epsilon_k' + \left(1+\sqrt{\epsilon_k'}\right)^2 \sqrt{\epsilon_{k-1}'}  \right] \cdot\left\|\tilde\vL_d\right\|.
\end{split}
\end{equation*}

The last two terms of \eqref{eq:down-Laplacian-block-bound}
can be bounded using Lemma \ref{thm:bound-of-mutrally-exclusive-edges}, i.e.,
\begin{equation*}
\begin{split}
	(\dagger) = \left\| \vM_\fN {\vW}_k^{1/2}\vB_k^\top \vW_{k-1}^{-1} \vB_k {\EE}_{k,+}^{1/2} \vM_{\fC}\right\|  \,&\leq\, \left\|\vM_\fN\vW_k^{1/2}\vB_k^\top\vW_{k-1}^{-1/2}\right\| \cdot \left\| \vW_{k-1}^{-1/2} \vB_k\EE_{k,+}^{1/2}\vM_\cI \right\|  \\
	\,&\leq\,  \left\|\vL_d\right\| \sqrt{\epsilon_{k-1}}.
\end{split}
\end{equation*}

The last term of \eqref{eq:down-Laplacian-block-bound} can also be bounded
by $\|\tilde\vL_d\|\sqrt{\epsilon_{k-1}}$ using Lemma \ref{thm:bound-of-mutrally-exclusive-edges}.
Since $\left\|\vL_d\right\|$, $\left\|\hat\vL_d\right\|$, and $\left\|\tilde\vL_d\right\|$
have the same upper bound $\lambda_{k-1}$, we have

\begin{equation*}
	\left\|\vL_d - \hat\vL_d\right\|^2 \leq \left[2\sqrt{\epsilon_k'} + \epsilon_k' + \left(1+\sqrt{\epsilon_k'}\right)^2 \sqrt{\epsilon_{k-1}'} + 4\sqrt{\epsilon_{k-1}}\right]^2\lambda^2_{k-1}.
\end{equation*}
\end{proofof}

\begin{proofof}{the $\difflap_k^{\rm up}$ term in Theorem \ref{thm:subspace-perturb-bound-Lk}}

The explicit form of $\hat\vL_u$ is,

\begin{adjustwidth}{-90pt}{90pt}
\begin{equation*}
	\hat{\vL}_u = 
	\begin{bmatrix}
	\begin{array}{c|c|c}
		\vM_\fN \hat{\vW}_k^{-1/2}\vB_{k+1}\hat{\vW}_{k+1} \vB_{k+1}^\top \hat{\vW}_k^{-1/2} \vM_\fN & \vec 0 & \vM_\fN \hat{\vW}_k^{-1/2}\vB_{k+1} \hat\vW_{k+1} \vB_{k+1}^\top {\EE}_{k,-}^{1/2} \vM_\fD \\
		\hline
		\vec 0 & \vM_\fC \EE_{k,+}^{-1/2}\vB_{k+1} {\vW}_{k+1} \vB_{k+1}^\top \EE_{k,+}^{-1/2} \vM_\fC & \vec 0 \\
		\hline
		\vM_\fD {\EE}_{k,-}^{-1/2} \vB_{k+1} \hat\vW_{k+1} \vB_{k+1}^\top \hat{\vW}_k^{-1/2} \vM_\fN & \vec 0 & \vM_\fD \EE_{k,-}^{-1/2}\vB_{k+1}\hat{\vW}_{k+1} \vB_{k+1}^\top \EE_{k,+}^{-1/2} \vM_\fD
	\end{array}
	\end{bmatrix}.
\end{equation*}
\end{adjustwidth}

And,

\begin{adjustwidth}{-90pt}{90pt}
\begin{equation*}
	{\vL}_u = 
	\begin{bmatrix}
	\begin{array}{c|c|c}
		\vM_\fN {\vW}_k^{-1/2}\vB_{k+1}{\vW}_{k+1} \vB_{k+1}^\top {\vW}_k^{-1/2} \vM_\fN  & \vM_\fN {\vW}_k^{-1/2}\vB_{k+1} \vW_{k+1} \vB_{k+1}^\top {\EE}_{k,+}^{1/2} \vM_\fC & \vec 0\\
		\hline
		\vM_\fC {\EE}_{k,+}^{-1/2} \vB_{k+1} \vW_{k+1} \vB_{k+1}^\top {\vW}_k^{-1/2} \vM_\fN & \vM_\fC \EE_{k,+}^{-1/2}\vB_{k+1} {\vW}_{k+1} \vB_{k+1}^\top \EE_{k,+}^{-1/2} \vM_\fC & \vec 0 \\
		\hline
		\vec 0 & \vec 0 & \vM_\fD \EE_{k,-}^{-1/2}\vB_{k+1}\hat{\vW}_{k+1} \vB_{k+1}^\top \EE_{k,+}^{-1/2} \vM_\fD
	\end{array}
	\end{bmatrix}.
\end{equation*}
\end{adjustwidth}

The perturbation is,

\begin{equation}
\begin{split}
	\left\| \vL_u - \hat{\vL}_u \right\|  \,&\leq\, \underbrace{\left\| \vM_\fN {\vW}_k^{-1/2}\vB_{k+1}{\vW}_{k+1} \vB_{k+1}^\top {\vW}_k^{-1/2} \vM_\fN - \vM_\fN \hat{\vW}_k^{-1/2}\vB_{k+1}\hat{\vW}_{k+1} \vB_{k+1}^\top \hat{\vW}_k^{-1/2} \vM_\fN \right\|}_{(*)} + \\
	\,&\pequal\, 2\left[\underbrace{\left\| \vM_\fN {\vW}_k^{-1/2}\vB_{k+1} \vW_{k+1} \vB_{k+1}^\top {\EE}_{k,+}^{1/2} \vM_\fC \right\|}_{(\dagger)} + \left\| \vM_\fN \hat{\vW}_k^{-1/2}\vB_{k+1} \hat\vW_{k+1} \vB_{k+1}^\top {\EE}_{k,-}^{1/2} \vM_\fD\right\|\right].
\end{split}
\label{eq:up-Laplacian-block-bound}
\end{equation}

The first term of \eqref{eq:up-Laplacian-block-bound}
can be bounded 
by expanding $\vW_{k+1}$ w.r.t. $\hat\vW_{k+1}$, i.e.,
$\vW_{k+1} = \hat\vW_{k+1} + (\EE_{k+1,+} - \EE_{k+1,-})$.
As slight abuse of
notation, we let $\vW_k \gets \vW_k[\fN,\fN]$, $\vB_{k+1}\gets \vB_{k+1}[\fN, :]$.
The first term $(*)$ of \eqref{eq:up-Laplacian-block-bound} becomes 

\begin{equation*}
\begin{split}
	(*) \,&\leq\, \left\|{\vW}_k^{-1/2}\vB_{k+1} \hat\vW_{k+1} \vB_{k+1}^\top {\vW}_k^{-1/2} - \hat{\vW}_k^{-1/2}\vB_{k+1}\hat{\vW}_{k+1} \vB_{k+1}^\top \hat{\vW}_k^{-1/2} \right\| 	\\
	\,&\leq\, \left\| {\vW}_k^{-1/2}\vB_{k+1} \hat{\vW}_{k+1} \vB_{k+1}^\top {\vW}_k^{-1/2} - \hat{\vW}_k^{-1/2}\vB_{k+1}\hat{\vW}_{k+1} \vB_{k+1}^\top \hat{\vW}_k^{-1/2} \right\| + \\
	\,&\pequal\, \left\| {\vW}_k^{-1/2}\vB_{k+1} {\EE}_{k+1,+} \vB_{k+1}^\top {\vW}_k^{-1/2} - \hat{\vW}_k^{-1/2}\vB_{k+1} {\EE}_{k+1,-} \vB_{k+1}^\top \hat{\vW}_k^{-1/2} \right\| \\
	\,&\stackrel{\ddag}{\leq}\, \left( 2\|\vE_{k,+}^- - \vE_{k,-}^-\| + \left\|\left(\vE_{k,+}^- - \vE_{k,-}^-\right)^2\right\| \right) \cdot \left\| \hat\vL_u  \right\| + \\
	\,&\pequal\, \left\| {\vW}_k^{-1/2}\vB_{k+1} {\EE}_{k+1,+} \vB_{k+1}^\top {\vW}_k^{-1/2} - \hat{\vW}_k^{-1/2}\vB_{k+1} {\EE}_{k+1,-} \vB_{k+1}^\top \hat{\vW}_k^{-1/2} \right\| \\
	\,&\stackrel{\mathsection}{\leq} \left[  2\sqrt{\epsilon'_k} + \epsilon'_k + 2\epsilon_k \right] \lambda_k
\end{split}
\end{equation*}

The $\ddag$ term holds by expanding ${\vW}_k^{-1/2} = \hat{\vW}_1^{-1/2}\left(\vI - (\vE_{k,+}^- - \vE_{k, -}^-)\right)$ and following
a similar approach of the {\em down} Laplacian.
The $\mathsection$ term holds by bounding $\vE_{k,+}^- - \vE^-_{k,-}$
with Assumption \ref{assu:degree-of-simplex-increase-less-formal} ($\epsilon'_k$)
and using Lemma \ref{thm:bound-of-mutrally-exclusive-triangles} ($\epsilon_k$).

The $(\dagger)$ term in \eqref{eq:up-Laplacian-block-bound} can be bounded by $\epsilon_k$ using Lemma \ref{thm:bound-of-mutrally-exclusive-triangles},
i.e.,

\begin{equation*}
\begin{split}
	(\dagger) \,&\stackrel{\ddag}{=}\, \left\| \vM_\fN {\vW}_k^{-1/2}\vB_{k+1} \EE_{k+1,+} \vB_{k+1}^\top {\EE}_{k,+}^{-1/2} \vM_\fC\right\| \\
	\,&\leq\, \left\| \vM_\fN {\vW}_k^{-1/2}\vB_{k+1} \EE_{k+1,+}^{1/2}\right\| \cdot \left\| \EE_{k+1,+}^{1/2} \vB_{k+1}^\top {\EE}_{k,+}^{-1/2}\right\| \\
	\,&\leq\, \sqrt{\lambda_k\epsilon_k} \cdot \left\| \EE_{k+1,+}^{1/2} \vB_{k+1}^\top {\EE}_{k,+}^{-1/2}\right\| \\
	\,&\stackrel{\mathsection}{\leq}\, \sqrt{\epsilon_k} \lambda_k.
\end{split}
\end{equation*}

$\ddag$ holds because the intersection of triangles of $\EE_{k,+}$, and $\vW_k$ is the triangles with non-zero entries in $\EE_{k+1,+}$.
$\mathsection$ holds (the $\sqrt{\lambda_k}$ term) because $\EE_{k+1,+}^{1/2} \vB_{k+1}^\top {\EE}_{k,+}^{-1/2}$
is a submatrix of $\vW_{k+1}^{1/2} \vB_{k+1}^\top {\vW}_k^{-1/2}$; hence, the spectral
norm will be upper bounded by the {\em up} Laplacian $\|\vL_u\| \leq \lambda_k$. 

Therefore, we have
\begin{equation*}
	\left\|\vL_u - \hat\vL_u\right\|^2 \leq \left[2\sqrt{\epsilon_k'} + \epsilon_k' + 2\epsilon_k + 4\sqrt{\epsilon_{k}}\right]^2\lambda^2_{k}.
\end{equation*}

Combining the bound involving $\difflap_k^{\rm down}$ completes the proof of 
Theorem \ref{thm:subspace-perturb-bound-Lk}.
\end{proofof}

 \section{Proofs of propositions in Applications (Section \ref{sec:applications})}
\label{sec:other-proofs}
\label{sec:proofs-of-applications-props}

\subsection{Proof of Proposition \ref{thm:digraph-reachable-no-shortcuit}: the properties of the induced digraph}

The proof is based on the convenient properties
of the harmonic flow (the basis of the homology vector space), i.e.,
they are both {\em divergence-free} and {\em curl-free}
\cite{LimLH:20,ChenYC.M.K+:21,SchaubMT.B.H+:20}.

\begin{proofof}{Proposition \ref{thm:digraph-reachable-no-shortcuit}}
	\paragraphbf{Reachable:} the harmonic flow is divergence free, indicating that the incoming flow must be equal to
	the outgoing flow. If there exists a vertex that is not {\em reachable} to itself, then
	this vertex will either be a {\em source} or {\em sink} in the digraph. It violates the assumption that
	the flow is divergence free. Therefore such vertex will not exist.

	\paragraphbf{No short-circuiting:} the harmonic flow is curl free; from Stoke's theorem (or Poincar\'e Lemma \cite{LeeJM:13}),
	we have that any path-integral travel along any homology class will be a constant. If there exists
	a loop such that it does not traverse along with any homology class, the loop integral along this
	cycle will be zero (by Stoke's theorem). By assumption, the path-integral will always be positive.
	To generate a loop whose integral is zero, one has to travel ``upward'' in the
	digraph; this violates the assumption
	that we are finding a cycle in the digraph, implying that every loop will traverse along at least one homology class.
\end{proofof}

\subsection{Proof of Proposition \ref{thm:harmonic-emb-of-Tk}: $\cH_1$ embedding of $\mathbb T^m$}

The proof is based on the fact that each harmonic 1-form of the flat 
$m$-(flat) torus can be expressed as the $m$-dimensional standard basis
multiplied with some intensities in the intrinsic coordinate.
The closed-form of the upper bound of the embedding distribution in any 
direction can be derived using the (high-dimensional) polar coordinate system, 
indicating  that the envelope is an $m$-dimensional ellipsoid. The detailed
proof is provided below.

\begin{proofof}{Proposition \ref{thm:harmonic-emb-of-Tk}}
The harmonic vector field in an $m$-flat torus  $\mathbb T^m$ is a constant
in each coordinate, i.e., $\vv = [v_1,\cdots,v_m]\in \rrr^m$. The manifold
$\mathbb T^m$ is an $m$-dimensional cube with the periodic boundary condition,
i.e., $0 = 2\pi$.
From \cite{WhitneyH:05,ChenYC.M.K+:21}, the edge flow $\vec\omega_e$ for
an edge $e=(i, j) \in E$ can be written exactly as a linear map, i.e.,
\begin{equation*}
\begin{split}
	\omega_e \,&=\, \int_0^1 \vv^\top(\vec\gamma(t))\gamma'(t) \dd t = \int_0^1 [\vv(\vx_i) + (\vv(\vx_j) - \vv(\vx_i)) t]^\top (\vx_j - \vx_i)\dd t \\
	\,&=\,	\inv{2} (\vv(\vx_i) + \vv(\vx_j))^\top(\vx_j - \vx_i)
\end{split}
\end{equation*}

Where $\vec\gamma(t)$ is the geodesic on $\M$ connecting $\vx_i$ and $\vx_j$
with $\vec\gamma(0) = \vx_i$ and $\vec\gamma(1) = \vx_j$.
Any point $\vx\in\rrr^m$, with $r = \|\vx\|$, can 
be written as $\vx = [r f_1(\vec\Phi),r f_2(\vec\Phi),\cdots, rf_m(\vec\Phi)]$, 
where $\vec\Phi\in\rrr^{m-1}$ is the high-dimensional polar coordinate;
for instance, a point in 2D is  $[r\cos(\theta), r\sin(\theta)]$ with $\vec\Phi = [\theta]$, 
while a point in 3D having $\vec\Phi = [\theta,\varphi]$ is $[r\cos\varphi\sin\theta, r\sin\varphi\sin\theta, r\cos\theta]$.
The conditional distribution given a fixed $\vec\Phi$ is simply the distribution
of edge lengths, i.e., $p(rv_1f_1,\cdots,rv_mf_m | \vec\Phi) = p(r)$.
Since $p(r)$ is bounded by some constant $\delta$ representing the maximum 
edge length, the envelope of the distribution is bounded by 
$[\delta v_1 f_1(\vec\Phi),\cdots, \delta v_m f_m(\vec\Phi)]$,
indicating that it is an $m$-ellipsoid with the length of the $i$-th semi-axes 
being $\delta v_i$.
\end{proofof}

 \section{The maximum eigenvalue of $\mathcal L_k$ constructed from a cubical complex}
\label{sec:max-eval-cubical-cpx}

In this section, we would like to show the bound on the spectral norm of $\LL_k$
built from a cubical complex. The property is found useful in extending
Theorem \ref{thm:subspace-perturb-bound-Lk} to Corollary
\ref{thm:subspace-perturb-bound-Lk-cub}; namely, the goal is to show that
$\|\LL_k\|_2 \leq \lambda_k = (2k + 2)$. Note that 
$\|\LL_k^{\rm down}\|= \|\vA_k^\top\vA_k\| = \|\vA_k\vA_k^\top\| = \|\LL_{k-1}^{\rm up}\|$.
W.l.o.g., one can inspect only the up-Laplacian. We provide the following
proposition that is largely based on the similar analysis \cite{HorakD.J:13}
of $\|\LL_k\|$ for $\scx$.

\begin{proposition}
	Given an {\em up} $k$-Laplacian $\LL_k^{\rm up} = \vA_{k+1}\vA_{k+1}^\top$
	with $\vA_{k+1} = \vW_k^{-1/2}\vB_{k+1}\vW_{k+1}^{1/2}$ built from a cubical 
	complex, we have 
	\begin{equation*}
		\|\LL_k^{\rm up}\|_2 \leq \lambda_k = 2k+2.
	\end{equation*}
\label{thm:spectral-norm-Lk-cb}
\end{proposition}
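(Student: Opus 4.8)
The plan is to mimic the Rayleigh--quotient bound that Horak and Jost establish for simplicial complexes \cite{HorakD.J:13}, with the single combinatorial change that a $(k{+}1)$-cube has $2(k{+}1)$ codimension-one faces rather than the $k{+}2$ that a $(k{+}1)$-simplex has. As noted just above, $\|\LL_k^{\rm down}\| = \|\LL_{k-1}^{\rm up}\|$, so it is enough to bound the up-Laplacian $\LL_k^{\rm up} = \vA_{k+1}\vA_{k+1}^\top = \vW_k^{-1/2}\vB_{k+1}\vW_{k+1}\vB_{k+1}^\top\vW_k^{-1/2}$.

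First I would turn the operator norm into a constrained maximum of a quadratic form. Given $\vf\in\rrr^{n_k}$, put $g(\tau)=f(\tau)/\sqrt{w_k(\tau)}$ for each $k$-cube $\tau\in K_k$ (the weights are positive, since $\vW_k^{-1/2}$ appears in $\vA_{k+1}$), so that $\vf^\top\vf=\sum_{\tau\in K_k} w_k(\tau)\,g(\tau)^2$ and
\begin{equation*}
	\vf^\top\LL_k^{\rm up}\vf=\big\|\vW_{k+1}^{1/2}\vB_{k+1}^\top\vW_k^{-1/2}\vf\big\|^2=\sum_{\sigma\in K_{k+1}} w_{k+1}(\sigma)\Big(\sum_{\tau\prec\sigma}[\sigma\!:\!\tau]\,g(\tau)\Big)^2,
\end{equation*}
where $\tau\prec\sigma$ runs over the $k$-faces of the $(k{+}1)$-cube $\sigma$ and $[\sigma\!:\!\tau]\in\{-1,+1\}$ is the corresponding incidence sign. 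Thus $\|\LL_k^{\rm up}\|_2$ equals the supremum over $\vf\neq\vec 0$ of the ratio of these two expressions.

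The key step is a face-by-face Cauchy--Schwarz estimate: since $\sigma$ has exactly $2(k{+}1)$ faces of dimension $k$, we get $\big(\sum_{\tau\prec\sigma}[\sigma\!:\!\tau]g(\tau)\big)^2\le(2k{+}2)\sum_{\tau\prec\sigma}g(\tau)^2$, the incidence signs dropping out after squaring. Summing over $\sigma\in K_{k+1}$, swapping the order of summation, and invoking the weight-consistency relation $w_k(\tau)=\sum_{\sigma\in\mathrm{coface}(\tau)}w_{k+1}(\sigma)$ yields
\begin{equation*}
	\sum_{\sigma\in K_{k+1}} w_{k+1}(\sigma)\Big(\sum_{\tau\prec\sigma}[\sigma\!:\!\tau]g(\tau)\Big)^2\le(2k{+}2)\sum_{\tau\in K_k}g(\tau)^2\!\!\sum_{\sigma\in\mathrm{coface}(\tau)}\!\!w_{k+1}(\sigma)=(2k{+}2)\sum_{\tau\in K_k}w_k(\tau)\,g(\tau)^2,
\end{equation*}
i.e.\ $\vf^\top\LL_k^{\rm up}\vf\le(2k{+}2)\,\vf^\top\vf$, giving $\|\LL_k^{\rm up}\|_2\le 2k+2$.

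I do not anticipate a genuine obstacle: the only place cubical structure intervenes is the count $2(k{+}1)$ of codimension-one faces of a $(k{+}1)$-cube, which is precisely why the bound weakens from $k{+}2$ (simplices) to $2k{+}2$. The one caveat worth stating is that Cauchy--Schwarz is generically slack, so this argument only proves the upper bound needed for Corollary~\ref{thm:subspace-perturb-bound-Lk-cub} and does not claim the estimate is tight; establishing sharpness would require exhibiting a near-extremal $k$-cochain, which is unnecessary here.
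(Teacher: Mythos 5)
Your proof is correct and follows essentially the same route as the paper's: a Rayleigh-quotient formulation, the Cauchy--Schwarz bound $\big(\sum_{i=1}^m a_i\big)^2\le m\sum_i a_i^2$ with $m=2k+2$ being the number of $k$-faces of a $(k{+}1)$-cube, and the weight-consistency relation $w_k(\tau)=\sum_{\sigma\in{\rm coface}(\tau)}w_{k+1}(\sigma)$. The only cosmetic difference is that you conjugate by $\vW_k^{-1/2}$ directly (setting $g=\vW_k^{-1/2}\vf$) instead of passing through the random-walk Laplacian as the paper does, which if anything states the argument more cleanly.
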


\begin{proof}
From \cite{SchaubMT.B.H+:20},
	the eigenvalues of the $k$-th renormalized {\em up}-Laplacian $\LL_k^{\rm up}$
	are identical to those of the $k$-th
	{\em random-walk} {\em up}-Laplacian $\LL_k^{\rm rw} = \vW_k^{-1}\vB_{k+1}\vW_{k+1}\vB_{k+1}$.
	Further, let $\vL_k^{\rm up} = \vB_{k+1}\vW_{k+1}\vB_{k+1}^\top$, following
	the analysis of \cite{HorakD.J:13}, we have
	
	\begin{equation*}
	\begin{split}
		\vf^\top \vL_k^{\rm up} \vf \,&=\, \left( \vW_{k+1}^{1/2} \vB_{k+1}^\top \vf \right)^\top \left( \vW_{k+1}^{1/2} \vB_{k+1}^\top \vf \right) \\
		\,&=\, \sum_{\sigma\in K_k}\sum_{\tau\in\mathrm{coface}(\sigma)} f^2(\sigma) w_{k+1}(\tau) \\
		\,&\stackrel{\dagger}{\leq}\, (2k+2)\sum_{\sigma\in K_k} f^2(\sigma) \sum_{\tau\in\mathrm{coface}(\sigma)} w_{k+1}(\tau) \\
		\,&=\, (2k+2)\sum_{\sigma\in K_k} f^2(\sigma) \deg(\sigma).
	\end{split}
	\end{equation*}
	
	The inequality $\dagger$ holds using the Cauchy-Schwarz inequality;
	the $2k+2$ term comes from the fact that a $(k+1)$-cube has $(2k+2)$ 
	faces. Following the rest of the proof in \cite{HorakD.J:13}, we have
	
	\begin{equation*}
		\|\LL_k^{\rm up}\| = \|\LL_k^{\rm rw, up}\| = \frac{\|\vL_k^{\rm up}\|}{ \vf^\top \vW_k \vf } \leq (2k+2) \frac{\sum_{\sigma\in K_k} f^2(\sigma) \deg(\sigma) }{\sum_{\sigma\in K_k} f^2(\sigma) w_k(\sigma)} = 2k+2.
	\end{equation*}
	
	The first equality holds due to the identical eigenvalues of $\LL_k$ and
	$\LL_k^{\rm rw}$; 
	the last inequality holds because we have $\vw_k(\sigma) = |\vB_{k+1}(\sigma)|\vw_{k+1} = \deg(\sigma)$ for all $\sigma\in K_k$.
\end{proof}
 \section{Datasets and experiment details}
\label{sec:supp-datasets-detail}
The edge set $E$ of the neighborhood graph constructed using the CkNN kernel 
\cite{BerryT.S:19} is

\begin{equation*}
	E = \left\{ i,j\in V: \frac{\|\vx_i - \vx_j\|}{\sqrt{\rho_k(\vx_i)\rho_k(\vx_j)}} \leq \delta  \right\}.
\end{equation*}

Here, $\rho_k(\vx)$ is the distance from $\vx$ to its $k$-th nearest neighbor;
throughout the experiment, we fix $k=30$.
The $\delta$ parameter can be chosen by a variant of the geometric consistent
(GC) algorithm \cite{JoncasD.M.M+:17} suitable for CkNN graphs;
for real datasets (except for the ocean drifter data whose geometric property 
is known),
we use the modified GC to choose this parameter.
For the rest of the datasets (synthetic manifolds and the ocean drifter),
$\delta$'s are chosen manually since the topologies are
known.
The weights on the triangles are selected by a modification to the kernel in
\cite{ChenYC.M.K+:21}, with a similar choice of $\varepsilon = \delta^{\frac{2}{3}} / 3$,
\begin{equation*}
	w_2(i, j, \ell) = \exp\left(-\frac{\|\vx_i - \vx_j\|^2}{\varepsilon \rho_k(\vx_i)\rho_k(\vx_j)} \right)\cdot \exp\left(-\frac{\|\vx_j - \vx_\ell\|^2}{\varepsilon \rho_k(\vx_j)\rho_k(\vx_\ell)} \right)\cdot \exp\left(-\frac{\|\vx_i - \vx_\ell\|^2}{\varepsilon \rho_k(\vx_i)\rho_k(\vx_\ell)} \right).
\end{equation*}

With this choice of parameters, the corresponding $\LL_1$ has 
a large sample size limit (in terms of $\Delta_1$) w.r.t. the metrics 
normalized by the k-nearest neighbor distance $\rho_k$.

\subsection{Synthetic manifolds}

\paragraph{\dattwoholes.}
\dattwoholes{} is a manifold generated by connected summing two punctured planes,
with a (sparsely connected) bridge in between. Each punctured plane has a 
rectangular hole with
width/height being $1/3$ of the width of each manifold.

\paragraph{\dattorus.}
This data is a two-dimensional torus and is generated from the parameterization 
below,

\begin{equation*}
\begin{split}
	x_1 \,&=\, (1 + 0.5\cos\theta_1)\cos\theta_2; \\
	x_2 \,&=\, (1 + 0.5\cos\theta_1)\sin\theta_2; \\
	x_3 \,&=\, 1 + 0.5\sin\theta_1.
\end{split}
\end{equation*}

The sample size is $n=1,156$.
Random Gaussian noise is added on the first three dimensions as well as
the additional 10 (noise) dimensions.

\paragraph{\datthreetorus.}
The parameterization of \datthreetorus, a three torus with $d=3$ and $D=4$,
is 

\begin{equation*}
\begin{split}
	x_1 \,&=\, (4 + (2 + \cos\theta_1)\cos\theta_2)\cos\theta_3; \\
	x_2 \,&=\, (4 + (2 + \cos\theta_1)\cos\theta_2)\sin\theta_3; \\
	x_3 \,&=\, (2 + \cos\theta_1)\sin\theta_2; \\
	x_4 \,&=\, \sin\theta_1.
\end{split}
\end{equation*}

We first sample $n'=100,000$ points from this manifold; Algorithm 
\ref{alg:furthest-point-sampling} is used to generate $\vX$ with $n = 2,000$.

\paragraph{\datgenustwo.}
\datgenustwo{} is a two-dimensional ({\em genus-2}) surface generated by gluing
two tori together. The implicit equation of the surface is

\begin{equation*}
	\left(\left(x_1^2 + x_2^2\right)^2 - 0.75 x_1^2 + 0.75x_2^2\right)^2 + x_3^2 = 0.01.
\end{equation*}

To sample from this surface, we create a $1,000\times1,000$ grid in the first two
coordinates ($x_1, x_2$) and solve for the corresponding $x_3$ from the above implicit
equation. The aforementioned procedure generates a point cloud  $\tilde\vX$ 
($n'\approx 551$k) having a
non-uniform sampling density on the genus-2 surface; we subsample $\tilde\vX$ by 
Algorithm \ref{alg:furthest-point-sampling} 
and obtain the final dataset $\vX$ with $n = 1,500$.

\paragraph{\datgenusfour.}
\datgenusfour{} is generated by concatenating four tori together. Four tori are generated
by similar procedures as \dattorus{} with horizontal movements (in $x_1$) being 
$a = -3, 0, 3, 6$, 
i.e., $x_1 = (1+0.5\cos\theta_1)\cos\theta_2 - a$,
respectively.
The sample size of \datgenusfour{} is $n = 4,624$.

\subsection{Real datasets}

\paragraph{Small molecule datasets (\datethanol{} and \datmda).}
The database \footnote{Data from \url{http://quantum-machine.org/datasets/}}
\cite{ChmielaS.T.S+:17} contains several molecular dynamics (MD) trajectories,
with each for a single (small) molecule, 
e.g., ethanol \ce{CH3CH2OH} (\datethanol) and malondialdehyde 
\ce{CH2(CHO)2} (\datmda). 
If a molecule has $N$ atoms, then a point (molecular configuration) in the dataset
is specified by an $N\times 3$ matrix representing the Euclidean coordinate of the
configuration. To generate a point cloud from a trajectory of configurations,
we first preprocess the data by calculating two angles of every triplet of atoms.
Secondly, we remove the linear subspaces by keeping the top {\em principal components} (PCs)
such that the unexplained variance ratio is less than $10^{-4}$.
The ambient dimensions of \datethanol{} and \datmda{} are $D = 102$ and $D=98$,
respectively. We
subsample furthest $n=1,500$ points using Algorithm \ref{alg:furthest-point-sampling}
for both datasets. The bond torsions (insets of Figures \figref{fig:detected-cycles-eth}
and \figref{fig:detected-cycles-mda}) are calculated by the dihedral angles of the
corresponding chemical bonds for each molecular configuration. For instance, the green 
torsion of ethanol (Figure \figref{fig:detected-cycles-eth}) for every point is computed 
by the angle of the planes spanned by \ce{OCC} and \ce{CCH} in the configuration
(3D Euclidean) space. One can think of the bond torsions as intrinsic coordinates
of \dattorus{}, i.e., $\theta_1$ and $\theta_2$; note that the correct bond torsions
parametrizing the manifold (or $\vX$) are usually unknown beforehand. In this work, this
information is provided based on our knowledge to validate our framework.

\paragraph{Single-cell RNA sequencing data \datrna.}
\datrna{} \cite{BergenV.L.P+:20} is a single-cell RNA sequencing data with cell
cycles. The data and preprocessing codes can be found in \url{https://github.com/theislab/scvelo_notebooks/blob/master/Pancreas.ipynb}.
The original data has sample size $n'=3,696$; we subsample $n=2,000$ furthest points
(Algorithm \ref{alg:furthest-point-sampling})
to remove the non-uniform sampling density on the original manifold.

\paragraph{3D graphics \datbudd.}
The 3D model of a Buddha statue, which can be downloaded from 
\url{https://www.cc.gatech.edu/projects/large_models/}, provides a triangulation
computed by \cite{CurlessB.L:96};
in other words, the simplicial complex $\scx'_2 = (V',E',T')$ is available 
beforehand, with $n'\approx 500$k and $n'_1\approx 2$M.
To illustrate the efficacy of our framework (and Theorem \ref{thm:subspace-perturb-bound-Lk}),
we treat \datbudd{} as a point cloud and build $\scx$ from the subsampled
$n = 3,000$ furthest points by Algorithm \ref{alg:furthest-point-sampling}.

\paragraph{Ocean buoys dataset \datisland.}
The global Lagrangian drifter data (available in \url{http://www.aoml.noaa.gov/envids/gld/})
was collected by NOAA's Atlantic Oceanographic and Meteorological Laboratory
and analyzed by \citet{FroylandG.P:15} on the coherent flow structures of the ocean
current. The dataset contains multiple trajectories of buoys dated between 2010--2019,
with the location, velocity, and water temperature of each buoy recorded.
The dataset itself is a 3D point cloud by converting the location 
(in latitude and longitude coordinates) to the {\em earth-centered, earth-fixed} 
(ECEF) coordinate system. We subsample $n=5,000$ furthest points/buoys (Algorithm
\ref{alg:furthest-point-sampling}) with longitudes within
$142^\circ$E--$179^\circ$E and latitudes between $48^\circ$S--$33^\circ$S; namely,
we sampled buoys that were located around the Tasman sea.

\paragraph{Medical imaging data \datretina.}
\datretina{} is one of the medical images of the STARE project \cite{HooverA.G:03},
a retinal imaging data collection. 
The database consists of around 400 raw images of human retinas, with diagnosis codes,
the segmented blood vessel, and the detected optic nerve available in 
\url{http://cecas.clemson.edu/~ahoover/stare/}.
We use the retinal
image with ID being 179, which has numerous bright (circular) spots visible. 
We construct the cubical complex by intensity thresholding and morphological closing,
resulting in $n=25,237$, $n_1=49,793$, and $n_2 = 24,548$.

\subsection{Pairwise scatter plots}
In this section, we show the pairwise scatter plots for $\vZ$ (blue)
and $\vY$ (red); specifically, we would like to show that the independent 
homology embedding $\vZ$ obtained by Algorithm \ref{alg:ica-no-prewhite}
is (approximately) factorizable. The blue embeddings (lower diagonal)
in Figures \ref{fig:pairplot-gen2}--\ref{fig:pairplot-retna}
confirm this. By contrast, most coordinate of the red embeddings $\vY$
do not correspond to a subspace, except for \datrna{} and \datbudd{} in 
Figures \ref{fig:pairplot-rna} and \ref{fig:pairplot-hpbud},
respectively.

\begin{figure}[!hbt]
    \centering
    \includegraphics[width=0.99\textwidth]{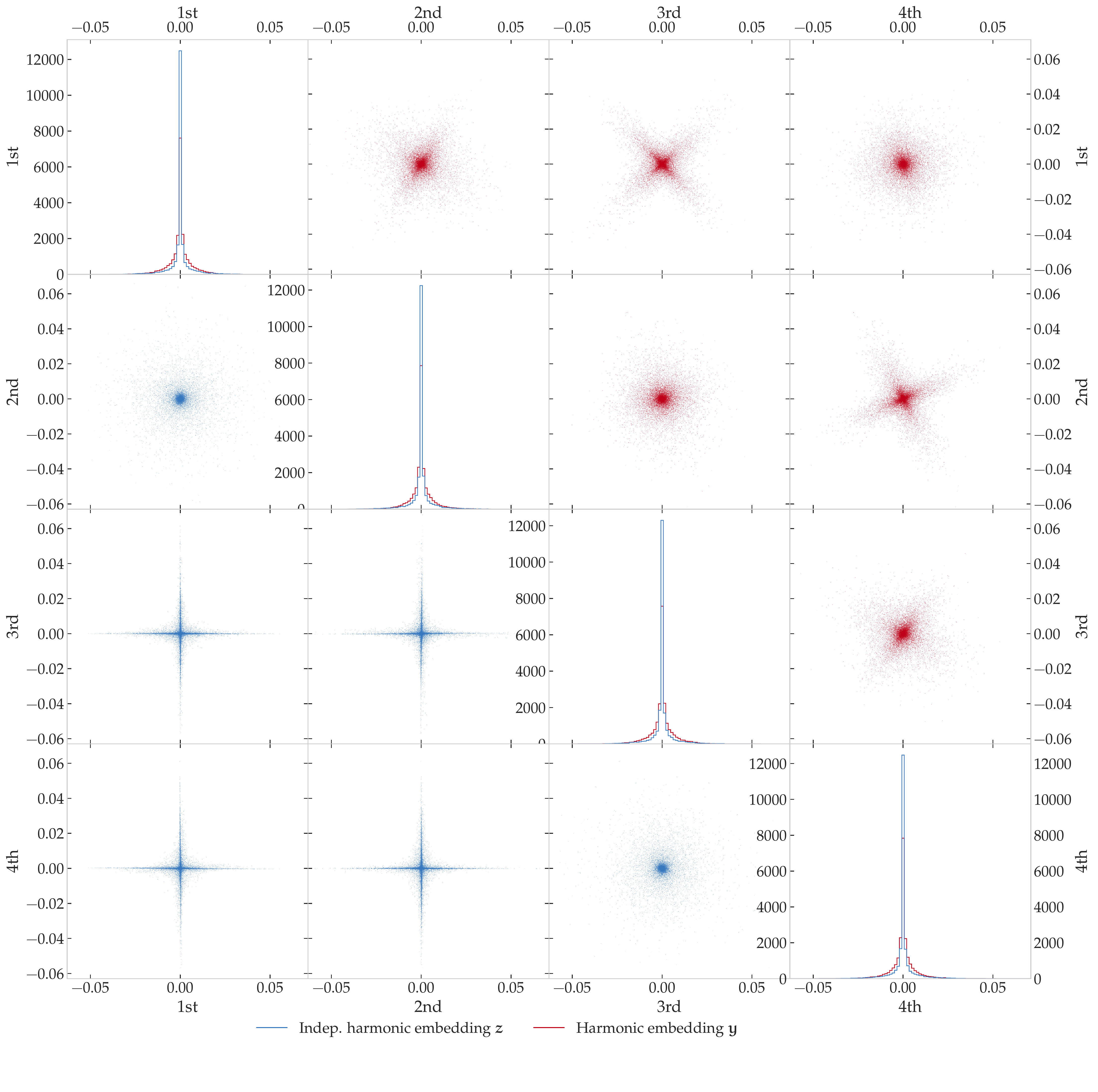}
    \caption{The independent ($\vz$, in blue) and the coupled ($\vy$, in red) homology embeddings of \datgenustwo{}. The $(i, j)$-th (off-diagonal) subplot represents the two-dimensional scatter plot with the $i$-th and $j$-th coordinates of the embedding; the $i$-th diagonal term is the histograms of the $i$-th coordinate of the corresponding embedding.}
    \label{fig:pairplot-gen2}
\end{figure}

\begin{figure}[!hbt]
    \centering
    \includegraphics[width=0.99\textwidth]{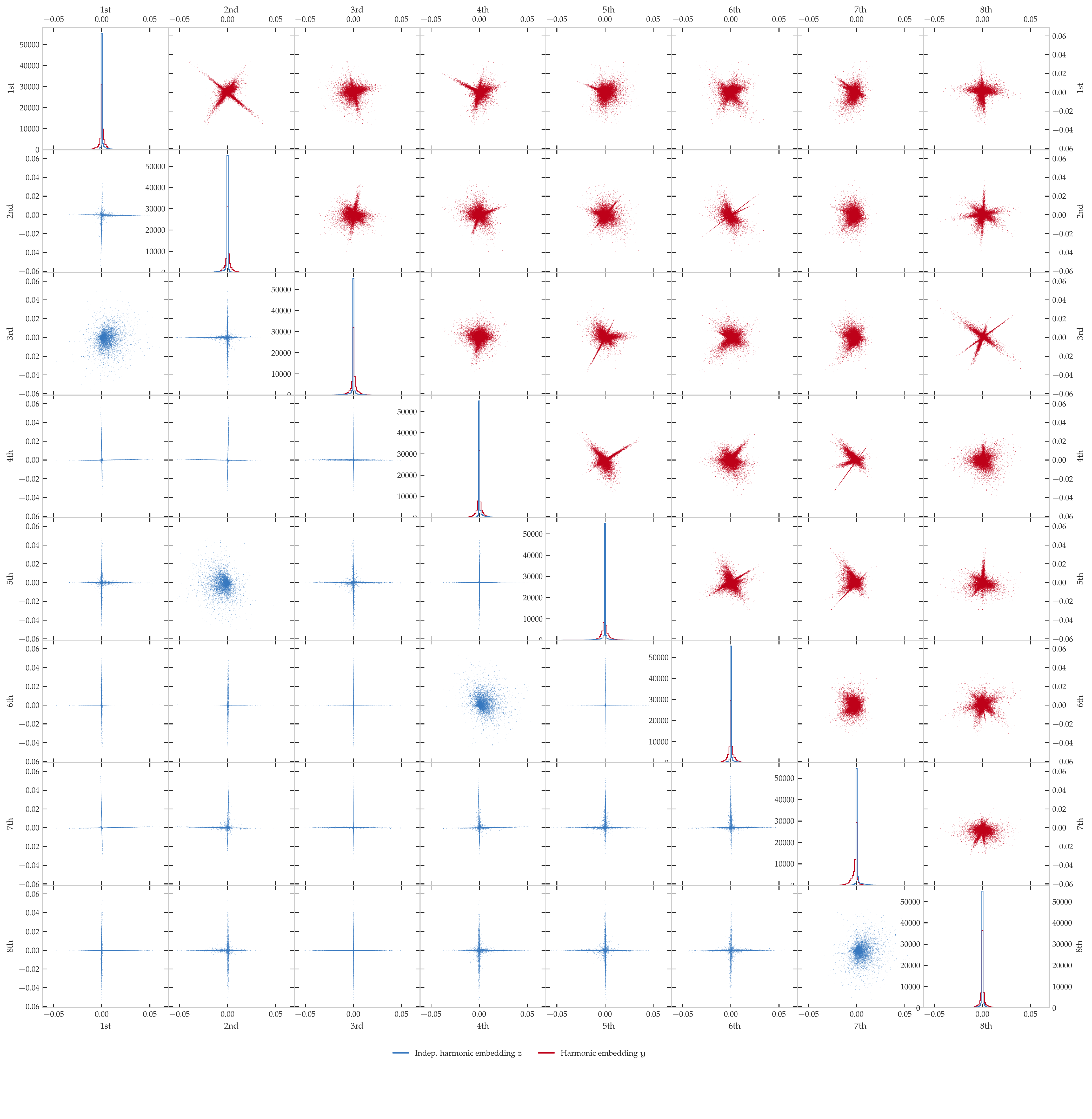}
    \caption{The independent ($\vz$, in blue) and the coupled ($\vy$, in red) homology embeddings of \datgenusfour{}.}
    \label{fig:pairplot-gen4}
\end{figure}

\begin{figure}[!hbt]
    \centering
    \includegraphics[width=0.99\textwidth]{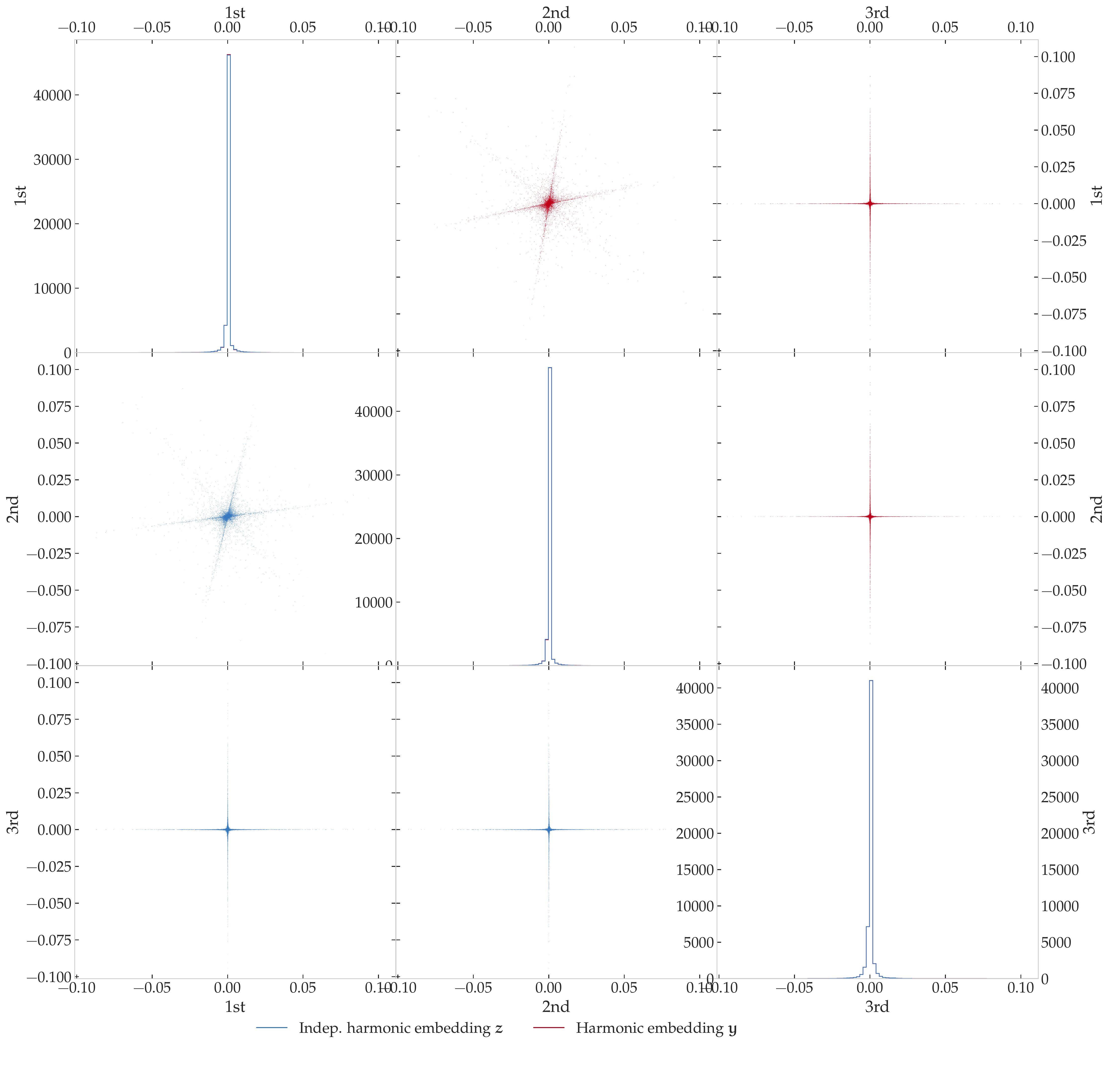}
    \caption{The independent ($\vz$, in blue) and the coupled ($\vy$, in red) homology embeddings of \datbudd{}.}
    \label{fig:pairplot-hpbud}
\end{figure}

\begin{figure}[!hbt]
    \centering
    \includegraphics[width=0.99\textwidth]{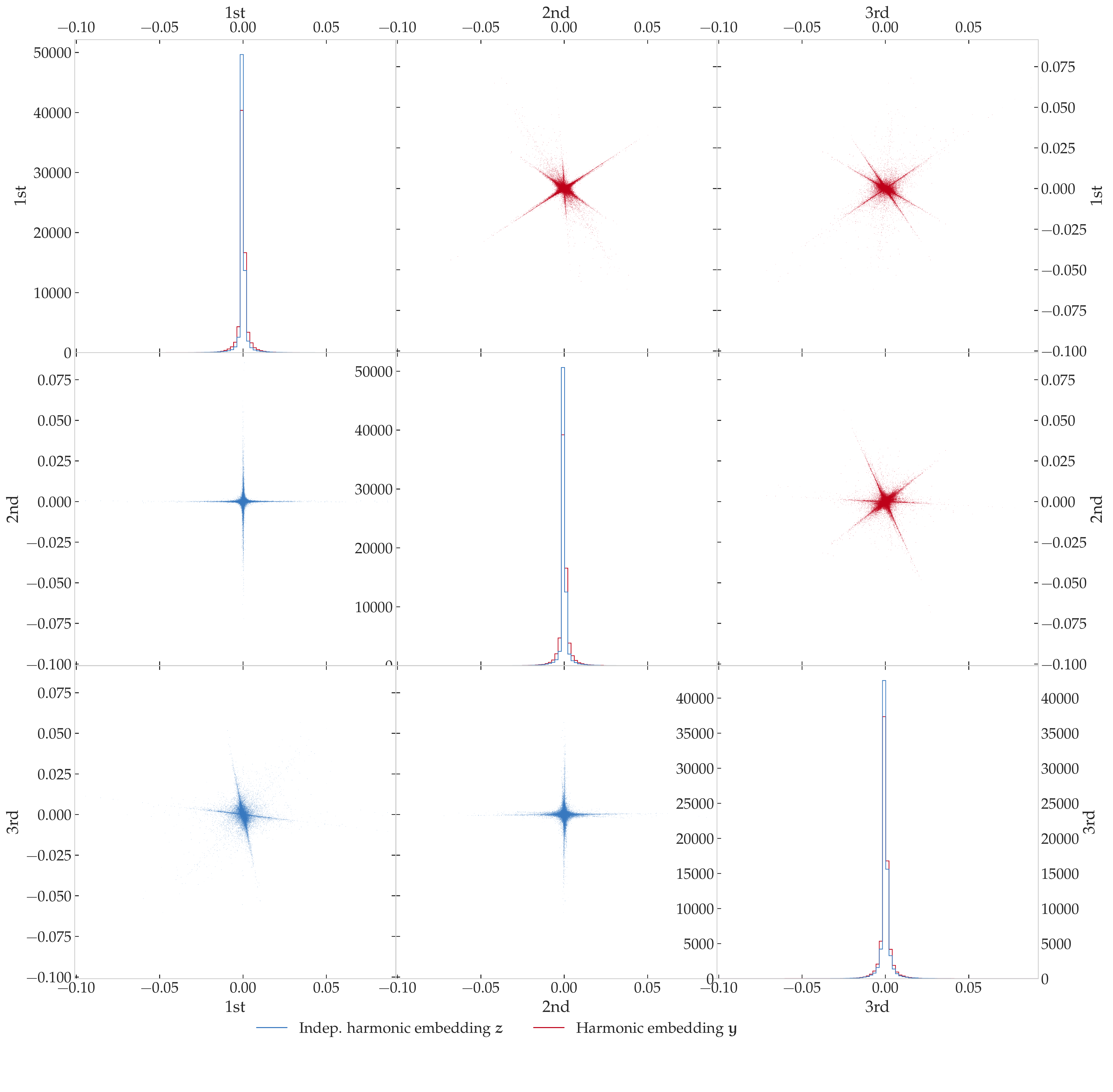}
    \caption{The independent ($\vz$, in blue) and the coupled ($\vy$, in red) homology embeddings of \datisland{}.}
    \label{fig:pairplot-islnaz}
\end{figure}

\begin{figure}[!hbt]
    \centering
    \includegraphics[width=0.99\textwidth]{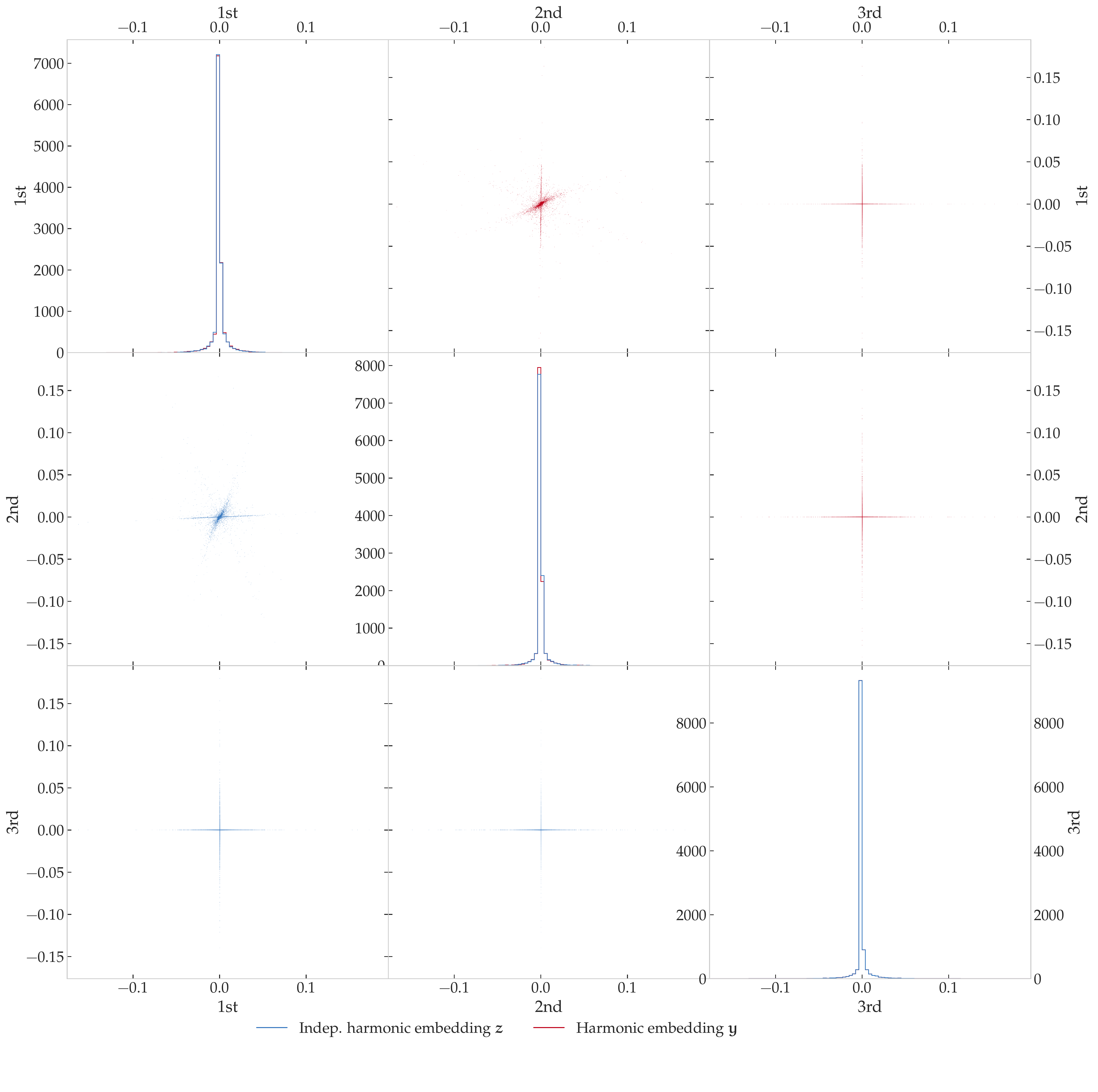}
    \caption{The independent ($\vz$, in blue) and the coupled ($\vy$, in red) homology embeddings of \datrna{}.}
    \label{fig:pairplot-rna}
\end{figure}

\begin{figure}[!hbt]
    \centering
    \includegraphics[width=0.99\textwidth]{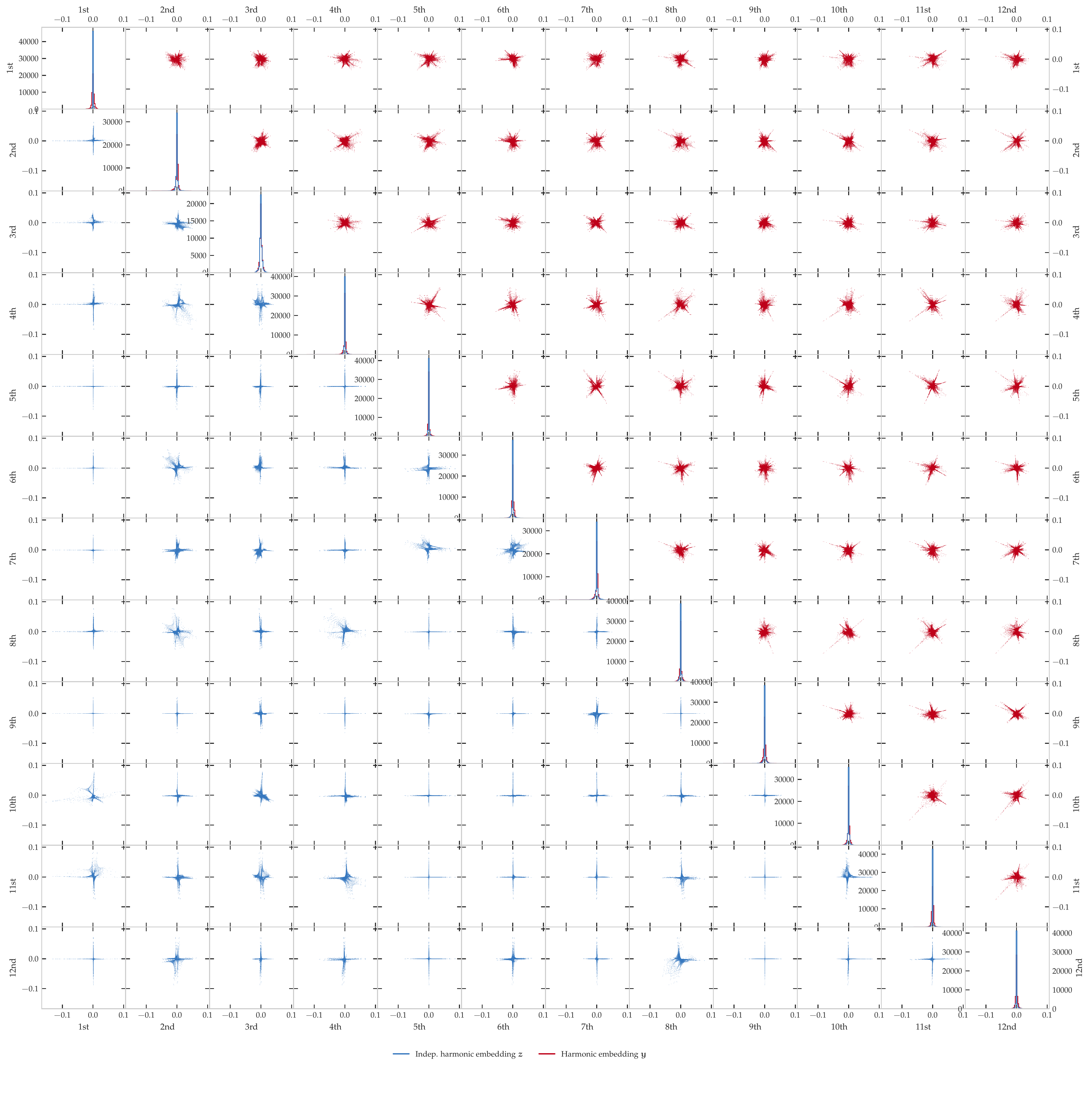}
    \caption{The independent ($\vz$, in blue) and the coupled ($\vy$, in red) homology embeddings of \datretina{}.}
    \label{fig:pairplot-retna}
\end{figure}

\clearpage

\subsection{Shortest homologous loops obtained from the coupled embedding $Y$}

\begin{figure}[t]
    \centering
    \includegraphics[width=0.99\textwidth]{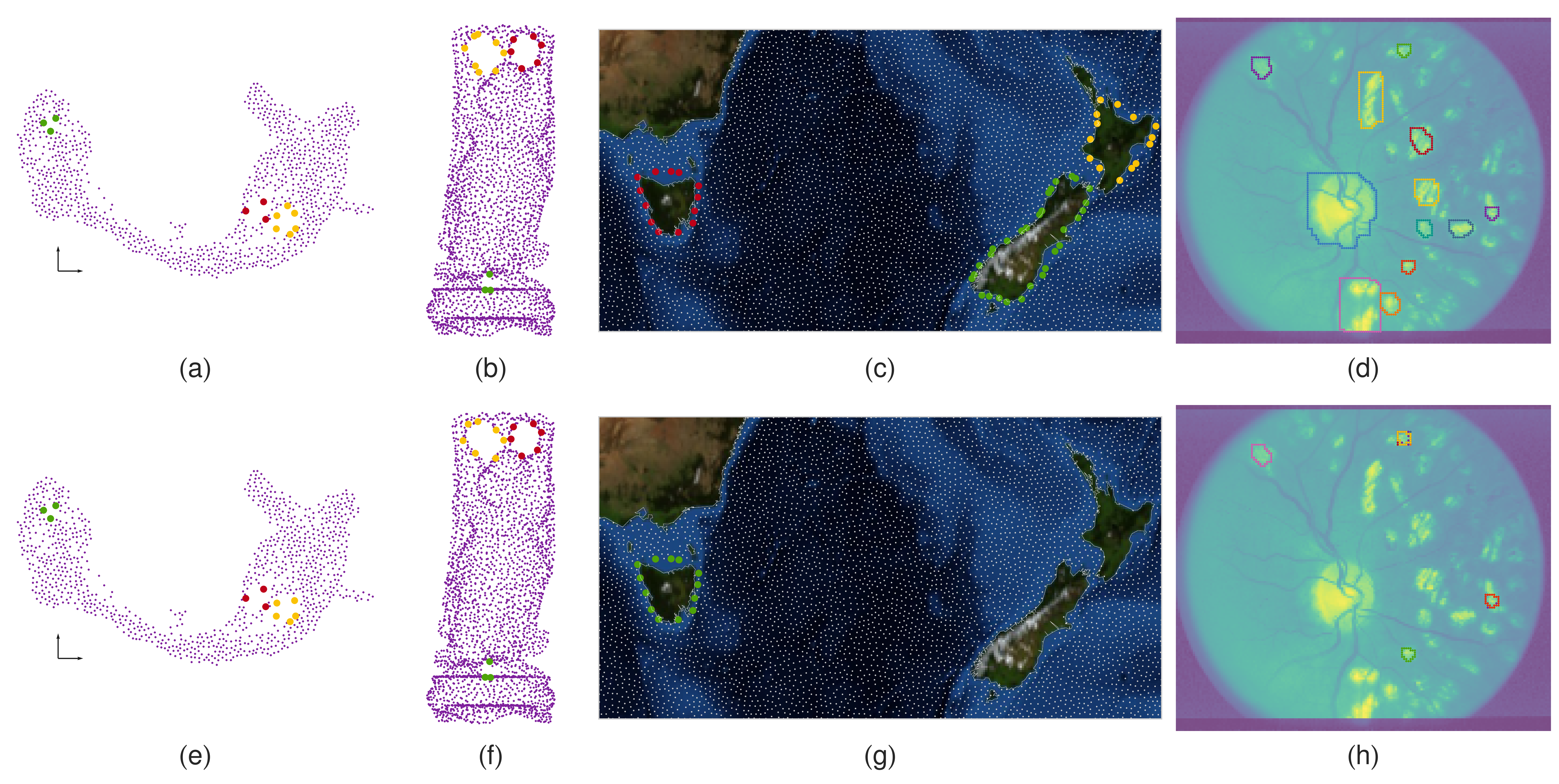}
    \caption{Comparison of the homologous loop detections on $\vZ$ (the first row) and $\vY$ (the second row). The first, the second, the third, and the fourth columns present the results on \datrna, \datbudd, \datisland, and \datretina, respectively. Note that (a)--(d) are identical to Figures \figref{fig:detected-cycles-rna}--\figref{fig:detected-cycles-retina}.}
    \label{fig:real-datasets-comp-coupled-indep}
\end{figure}

Figure \ref{fig:real-datasets-comp-coupled-indep} shows the results of the
shortest homologous loop detection algorithm applied on the coupled
homology embeddings $\vY$ on the real datasets. Note that Figures 
\ref{fig:real-datasets-comp-coupled-indep}a--\ref{fig:real-datasets-comp-coupled-indep}d
are identical to Figures \figref{fig:detected-cycles-rna}--\figref{fig:detected-cycles-retina}; 
they are presented here as comparisons to the loops detected from $\vY$ (the second
row). As shown in Figures \ref{fig:real-datasets-comp-coupled-indep}g
and \ref{fig:real-datasets-comp-coupled-indep}h,
duplicated loops might be extracted if using the coupled embedding $\vY$; these
loops are clearly sub-optimal.

 \section{Pseudocodes}
\label{sec:pseudocodes}

\begin{algorithm}[!htb]
    \setstretch{1.15}
    \SetKwInOut{Input}{Input}
    \SetKwInOut{Output}{Return}
    \SetKwComment{Comment}{$\triangleright $\ }{}
	\Input{$\vZ = [\vz_1,\cdots,\vz_{\beta_1}]$, $V$, $E$, edge distance $\vd$}
	\For{$i = 1,\cdots,\beta_1$}{
		$E_i^+ \gets \{ (s,t): e=(s,t)\in E \text{ and } [\vz_i]_e > 0\}$ \label{algstp:eiplus} \\
		$E_i^- \gets \{ (t,s): e=(s,t)\in E \text{ and } [\vz_i]_e < 0 \}$ \label{algstp:eiminus} \\
		$E_i\gets E_i^+ \cup E_i^-$ \label{algstp:ei} \\
		$G_i \gets (V, E_i)$, with weight of $e\in E_i$ being $[\vd]_e$ \label{algstp:weighted-Gi} \\
		$e_* = (t, s_0) \gets \argmax_{e\in E_i} [\vz_i]_e$ \\
		$[s_0,s_1,\cdots,t]\gets \textsc{Dijkstra}(G_i,\texttt{from=}s_0, \texttt{to=}t)$ \\
		$\cC_i \gets [t, s_0,s_1,\cdots, t]$ 
	}
	\Output{ $\cC_1,\cdots,\cC_{\beta_1}$}
	\caption{Spectral homologous loop detection---an alternative to Algorithm \ref{alg:dijkstra-shortest-loop}}
    \label{alg:alternative-dijkstra-shortest-loop}
\end{algorithm}

\begin{algorithm}[!htb]
    \setstretch{1.15}
    \SetKwInOut{Input}{Input}
    \SetKwInOut{Output}{Return}
    \SetKwComment{Comment}{$\triangleright $\ }{}
    \Input{$\scx_\ell=(\Sigma_0,\cdots,\Sigma_\ell)$, $k$}
    \Comment{Requires $\ell \geq k+1$}
    $\vB_k \gets \textsc{BoundaryMap}(\Sigma_{k-1}, \Sigma_k)$ \Comment{Algorithm \ref{alg:boundary-map-single}} 
    $\vB_k \gets \textsc{BoundaryMap}(\Sigma_{k}, \Sigma_{k+1})$ \\
    
    \Output{Boundary maps $\vB_k$, $\vB_{k+1}$}
    \caption{\textsc{BoundaryMaps}}
    \label{alg:boundary-map}
\end{algorithm}

\begin{algorithm}[!htb]
    \setstretch{1.15}
    \SetKwInOut{Input}{Input}
    \SetKwInOut{Output}{Return}
    \SetKwComment{Comment}{$\triangleright $\ }{}
    \Input{Set of $(k-1)$ and $k$-simplices $\Sigma_{k-1}$, $\Sigma_k$ (or cubes $K_{k-1}$, $K_k$)}
    $\vec{B}_k\gets \vec{0}_{n_{k-1}}\vec{0}_{n_{k}}^\top\in\mathbb R^{n_{k-1}\times n_k}$ \\
    \For{every $\sigma_{k-1}\in \Sigma_{k-1}$}{
        \For{every $\sigma_k \in \Sigma_k$}{
            \uIf{$\sigma_{k-1}$ is a face of $\sigma_k$}{
                $[\vec{B}_k]_{\sigma_{k-1},\sigma_k} \gets \textsc{Orientation}(\sigma_{k-1},\sigma_k)$
            }
            \Else{$[\vec{B}_k]_{\sigma_{k-1}, \sigma_k} \gets 0$}
        }
    }
    \Output{$k$-th boundary map $\vec{B}_k$}
    \caption{\textsc{BoundaryMap}}
    \label{alg:boundary-map-single}
\end{algorithm}

\begin{algorithm}[!htb]
    \setstretch{1.15}
    \SetKwInOut{Input}{Input}
    \SetKwInOut{Output}{Return}
    \SetKwComment{Comment}{$\triangleright $\ }{}
    \Input{Initial point cloud $\tilde\vX\in\rrr^{n'\times D}$, number of furthest points $n$}
    $\vX\gets \emptyset$ \\
    Pick a point $\hat\vx\in\rrr^D$ randomly from $\tilde\vX$ \\
    \For{$i=1,\cdots,n-1$}{
    	$\vX \gets \vX \cup \{ \hat\vx \}$ \Comment{Add $\hat\vx$ to $\vX$}
    	$\tilde\vX \gets \tilde\vX \backslash \{\hat\vx\}$ \Comment{Remove $\hat\vx$ from $\tilde\vX$}
    	$\hat\vx \gets \argmax_{\vx\in \vX}\min_{\tilde\vx \in\tilde\vX} \|\vx - \tilde\vx\|$ \\
    	\Comment{Find the point $\hat\vx$ in $\vX$ that is furthest from $\tilde\vX$}
    }
    \Output{Point cloud $\vX\in\rrr^{n\times D}$}
    \caption{Furthest points sampling}
    \label{alg:furthest-point-sampling}
\end{algorithm}

\end{document}